\newtheorem{thm}{Theorem}
\newtheorem{assumption}{Assumption}
\newtheorem{corol}{Corollary}
\newtheorem{lem}{Lemma}
\newtheorem{prop}{Proposition}
\newtheorem{defn}{Definition}
\newtheorem{rem}{Remark}
\title[Distributed Online Non-convex Optimization]{Distributed Online Non-convex Optimization with Composite Regret}
\begin{document}

\maketitle

\begin{abstract}%
Regret has been widely adopted as the metric of choice for evaluating the performance of online optimization algorithms for distributed, multi-agent systems. However, data/model variations associated with agents can significantly impact decisions, and requires consensus among agents. Moreover, most existing works have focused on developing approaches for (either strongly or non-strongly) convex losses, and very few results have been obtained in terms of regret bounds in distributed online optimization for general non-convex losses. To address these two issues, we propose a novel composite regret with a new network regret-based metric to evaluate distributed online optimization algorithms. We concretely define static and dynamic forms of the composite regret. By leveraging the dynamic form of our composite regret, we develop a consensus-based online normalized gradient (CONGD) approach for pseudo-convex losses and it provably shows a sublinear behavior relating to a regularity term for the path variation of the optimizer. For general non-convex losses, we first shed light on the regret for the setting of distributed online non-convex learning based on the recent advances such that no deterministic algorithm can achieve the sublinear regret. We then develop the distributed online non-convex optimization with composite regret (DINOCO) without access to the gradients, depending on an offline optimization oracle. DINOCO is shown to achieve sublinear regret; to our knowledge, this is the first regret bound for general distributed online non-convex learning.%
\end{abstract}

\begin{keywords}%
  Distributed online optimization, Composite regret, Non-convex optimization%
\end{keywords}

\section{Introduction}\label{introduction}
\paragraph{Setting.} Online convex optimization (OCO) is by now well established in the algorithmic learning theory literature~\citep{shalev2012online}. However, the emergence of complex tasks and massive datasets requires the investigation of \emph{distributed} online convex optimization for multi-agent networked systems. In this paper, we fill several theoretical gaps in this area.

\paragraph{Prior work.} The performance of a distributed OCO algorithm is generally measured in terms of either the $\textit{static}$~\citep{zinkevich2003online} (or $\textit{dynamic}$~\citep{besbes2015non}) regret, which compares the accumulated losses suffered by each player with the loss suffered by the best fixed (or dynamic) policy. By leveraging static regret, quite a few previous works have been able to achieve sublinear regret bounds. For instance, \citet{nedic2015decentralized} proposed a form of Nesterov's primal-dual algorithm with dual averaging which achieved a sublinear growth $\mathcal{O}(\sqrt{K})$ when the step size was $1/\sqrt{K}$ and the losses were Lipschitz continuous convex functions with Lipschitz gradients.~\citet{akbari2015distributed} improved the order of regret to $\mathcal{O}(\textnormal{log}K)$ when the loss functions were strongly convex with the online subgradient push-sum algorithm. To cope with the presence of cost uncertainties and the switching topologies, \citet{hosseini2016online} developed the distributed weighted dual averaging method and the distributed online allocation algorithm to achieve the sublinear regret of order $\mathcal{O}(\sqrt{K})$. \citet{lee2017sublinear} developed a distributed online primal-dual algorithm with a balanced communication graph and bounded Lagrangian multipliers, which obtained an analogous sublinear regret bound. 

More recent work by \citet{li2018distributed} extended the problem to unbalanced communication graphs while relaxing the assumption that the Lagrangian multipliers were bounded. They also studied the capability of handling coupled inequality constraints by proposing the distributed online primal-dual push-sum algorithm. To determine an effective strategy to handle the constraints, a distributed online saddle point algorithm was proposed in~\citet{paternain2019distributed} to achieve a sublinear regret in the order of $\mathcal{O}(\sqrt{K})$. %Specifically, they considered both feasibility and optimality, which involved the sublinear network disagreement in showing the direct relationship with the graph connectivity. 

The definition of static regret correspondingly requires that the best policy remains unchanged during the time interval of interest. However, this restriction may not apply to all practical situations (particularly in the distributed setting), therefore motivating the notion of \emph{dynamic} regret. \citet{shahrampour2017distributed} developed a distributed mirror descent method and established a sublinear regret bound $\mathcal{O}(\sqrt{K})$ that was inversely proportional to the spectral gap of the network. \citet{nazari2019dadam} proposed an adaptive gradient method for distributed online optimization (DADAM) using dynamic regret and showed that the proposed method was able to outperform the centralized variant for certain classes of loss functions. However, they defined a local regret for dealing with the nonconvex case instead of using dynamic regret. Recently, \citet{zhang2019distributed} extended the gradient tracking techniques motivated by the recent development~\citep{pu2018distributed} from the offline setting to the online setting with a gradient path variation~\citep{chiang2012online} and proved the sublinear regret. A more recent work~\citep{bastianello2020distributed} showed that when the losses were strongly convex, using an inexact proximal gradient method enabled to converge R-linearly to a neighborhood of the optimal solution trajectory. With long-term constraints, \citet{yuan2019distributed} investigated the regret and the cumulative constraint violation for strongly convex loss functions and obtained a similar regret bound as in the state-of-the-art centralized variants. Additionally, they considered the bandit feedback case and showed the sublinear growth for the regret. In~\citep{sharma2020distributed}, the authors combined mirror descent and primal-dual approaches to achieve a sublinear dynamic regret and fits. To reduce the overhead of communication, the authors in~\citet{cao2021decentralized} developed a decentralized online subgradient algorithm with only the signs of the states from the neighborhood of an agent and showed a standard sublinear regret $\mathcal{O}(\sqrt{K})$, which still provably holds true in a noisy environment. When the feedback delay was considered~\citet{cao2021decentralizedb}, if the delay was sublinear, the regret could still maintain the standard sublinear bound. To investigate the non-cooperative games in a dynamic environment, Meng et al.~\citet{meng2021decentralized} proposed a distributed online primal-dual dynamic mirror descent algorithm, which was found to achieve a relatively worse sublinear regret $\mathcal{O}(K^{\frac{7}{8}})$, compared to the standard regret. When only with bandit feedback~\citet{meng2022decentralizedb}, they showed the regret of $\mathcal{O}(K^{\frac{13}{14}})$. While all the results only apply to convex losses.

In the case of non-convex objective functions, we are only aware of a limited number of (very recent) works. \citet{gao2018online} studied the pseudo-convex loss using the online normalized gradient descent and its bandit variant to show sublinear regret bounds. \citet{hazan2017efficient} and \citet{aydore2018local} respectively proposed different computationally tractable notions of local regret, motivated by offline non-convex optimization. However, the techniques studied do not guarantee a good sublinear regret for general non-convex losses in the online case. The recent work~\citep{agarwal2018learning} utilized the follow-the-perturbed-leader approach, showing that with a good offline optimization oracle (instead of the gradient oracle), the regret can be upper bounded by $\mathcal{O}(K^{\frac{2}{3}})$ for general non-convex losses. Subsequently,~\citet{suggala2019online} improved this bound to $\mathcal{O}(\sqrt{K})$. Such analysis facilitates understanding the regret growth order in online non-convex learning, but unfortunately do not hold for the distributed setting. In a more recent work~\citet{lu2021online}, the authors leveraged a first-order optimality condition-based regret to achieve the subliear regret $\mathcal{O}(\sqrt{K})$ with a diminishing step size. Though they claimed the proposed online distributed mirror algorithm performed well, the techniques presented in their work do not guarantee the vanishing regret for general distributed non-convex losses.

\paragraph{Our contributions.} To summarize, most previous works have investigated distributed online optimization using either the static or dynamic regret, which depends entirely on the original loss objective. However, in a distributed setting with multiple agents, the consensus objective among them can be moved into the loss as a soft constraint, which is a tradeoff between the original loss and the network loss. Moreover, with the popularity of dynamic regret, considering only the temporal change of the path in a distributed setting, involving multiple agents is not sufficient. Different agents may not benefit from a biased best strategy played by one agent. Additionally, there still exists a theoretical gap in the regret bound for distributed online non-convex learning. Motivated by recent works in the offline setting~\citep{jiang2017collaborative,zeng2018nonconvex,berahas2018balancing}, we propose in this paper a composite regret which takes into account both of original loss and network loss.
Specifically, the following contributions are made in this paper.
\begin{itemize}[leftmargin=*]
    \item We propose a composite regret for distributed online optimization for both static and dynamic forms corresponding to different classes of loss functions. 
    % Such a framework is termed DONUT (\underline{D}istributed \underline{ON}line optimization with a lyap\underline{U}nov regre\underline{T}). 
    We then leverage this to prove regret bounds with pseudo-convex and non-convex losses.
    % \[\textnormal{Strongly convex}\Longrightarrow \textnormal{Convex}\Longrightarrow \textnormal{Pseudo-convex}\Longrightarrow \textnormal{Non-convex}.\]
    \item 
    % We develop an online Lyapunov gradient descent (OLGD) and recover the consensus-based online gradient descent from OLGD. 
    For pseudo-convex loss functions, we propose consensus-based online normalized gradient descent (CONGD). To analyze the regret bound for general non-convex losses, we first extend the result from a recent work to the distributed setting and conclude that no deterministic algorithm can achieve $o(1)$. We then propose the Distributed Online Non-convex Optimization with Composite Regret (DINOCO) based on the follow-the-perturbed-leader, defining an offline optimization oracle.
    \item 
    With mild assumptions, 
    % for the non-strongly convex case, OLGD achieves the sublinear regret $\mathcal{O}(\sqrt{K})$, with properly set step sizes. With strongly convex losses, $\mathcal{O}(\sqrt{K})$ improves to $\mathcal{O}(\textnormal{log}(K))$, given a diminishing step size. 
    we prove for pseudo-convex losses the regret can be upper bounded by $\mathcal{O}(\sqrt{K+P_K K})$, which matches the best performance in the centralized (non-distributed) variant. Here, $P_K$ is a parameter defined below in Eq~\ref{eq_1}. For general non-convex losses, DINOCO is provably able to achieve the sublinear regret $\mathcal{O}(\sqrt{K})$. Table~\ref{table:findings} in summarizes the comparison between proposed and existing methods. While we refer interested readers to Appendix~\ref{table_section} for more discussion.
    \item Additionally, we develop an online composite gradient descent (OCGD) for strongly and non-strongly convex losses. For the non-strongly convex case, OCGD achieves the sublinear regret $\mathcal{O}(\sqrt{K})$, with properly set step sizes. With strongly convex losses, $\mathcal{O}(\sqrt{K})$ improves to $\mathcal{O}(\textnormal{log}(K))$, given a diminishing step size.
\end{itemize}
\textbf{Outline:} The rest of the paper is outlined as follows. In Section~\ref{lya_regret}, some preliminaries are introduced and the composite regret is discussed in detail. In Section~\ref{pro_alg}, we present the CONGD algorithm with corresponding main result. In Section~\ref{ftpl_lya}, we shed light on the regret bound in the distributed online non-convex learning and develop the DINOCO algorithm that achieves the sublinear regret. Finally, in Section~\ref{conclusion} we give concluding remarks and future research directions.

\begin{table*}[ht]
\caption{Comparisons between different online optimization methods.}
\begin{center}
\begin{threeparttable}
\begin{tabular}{c c c c c c c}
    \toprule
    \textbf{Method} & setting& $\mathcal{F}$ & $\nabla\mathcal{F}$ & \textbf{step size} & \textbf{regret} & \textbf{regret bound} \\ \midrule
      \multirow{2}{*}{OGD}&centralized&Str&B&Di&S&$\mathcal{O}(\textnormal{log}K)$\\&centralized&C&B&Di/F&S&$\mathcal{O}(\sqrt{K})$\\\hline
      OMD&centralized&C&B&F&S&$\mathcal{O}(\sqrt{K})$\\
      RFTL&centralized&C&B&F&S&$\mathcal{O}(\sqrt{K})$\\
      \hline
      \multirow{2}{*}{FTPL}&centralized&C&B&F&S&$\mathcal{O}(\sqrt{K})$\\&centralized&NC&B&F&S&$\mathcal{O}(\sqrt{K})$\\\hline
      ONGD &centralized &PC &B &F&Dy&$\mathcal{O}(\sqrt{K+P_KK})$ \\
      DPD-MD &distributed &C&B &Di&Dy&$\mathcal{O}(K^{\textnormal{max}\{a, 1-a+b\}}+K^a)$ \\
      DOPD-PS&distributed & C&B&Di&S&$\mathcal{O}(K^{\frac{9}{10}})$ \\
      DOPD-DMD&distributed & C&B&Di&Dy&$\mathcal{O}(K^{\textnormal{max}(1-\kappa, \kappa)}+K^{1-\kappa}P_K)$\\
      DOGT  &distributed   &Str&B \& L&F &Dy&$\mathcal{O}(1+P_K+\mathcal{V}_K)$ \\
      ODCMD&distributed&C&B&F&S&$\mathcal{O}(\sqrt{K})$\\
      ODM&distributed&NC&B&Di&FOC&$\mathcal{O}(\sqrt{K})$\\
      DABMD&distributed&C&B&F&Dy&$\mathcal{O}(\sqrt{KP_K+K})$\\
      \hline
      \multirow{2}{*}{OCGD (ours)}&distributed &Str&B&Di&SC&$\mathcal{O}(\textnormal{log}K)$\\&distributed&C&B&Di/F&SC/DC&$\mathcal{O}(\sqrt{K})/\mathcal{O}(\sqrt{K+P_KK})$\\\hline
      CONGD (ours)& distributed&  PC&B&F&DC&$\mathcal{O}(\sqrt{K+P_KK})$  \\
      DINOCO (ours)&distributed&NC&B&F&SC&$\mathcal{O}(\sqrt{K}N^2)$ \\
      \bottomrule
\end{tabular}
\begin{tablenotes}
\item[1] OGD~\citep{hazan2016introduction}: online gradient descent; \item[2] OMD~\citep{hazan2016introduction}: online mirror descent; \item[3] RFTL~\citep{hazan2016introduction}: regularized follow the leader; \item[4] FTPL~\citep{hazan2016introduction}: follow the perturbed leader; \item[5] ONGD~\citep{gao2018online}: online normalized gradient descent; \item[6] $a,b\in(0,1)$; \item[7] DPD-MD~\citep{sharma2020distributed}: distributed primal-dual mirror descent; \item[8] DOPD-PS~\citep{li2018distributed}: distributed online primal-dual push-sum; \item[9] DOPD-DMD~\citep{yi2020distributed}: distributed online primal-dual dynamic mirror descent; \item[10] Str: strongly convex; \item[11] C: convex; \item[12] DOGT~\citep{zhang2019distributed}: distributed online optimization with gradient tracking; \item [13] PC: pseudo-convex; \item[14] ODCMD~\citep{yuan2020distributed}: online distributed composite mirror descent; \item[15] B: bounded; \item[16] B \& L: bounded and Lipschitz continuous; \item[17] Di: diminishing; \item[18] Di/F: diminishing or fixed; \item[19] F: fixed; \item[20] S: static; \item[21] Dy: dynamic; \item[22] SC: static composite; \item[23] DC: dynamic composite; \item[24] $\kappa\in(0,1)$; \item[25] ODM~\citep{lu2021online}: online distributed mirror algorithm; \item[26] FOC: first-order optimality condition; \item[27] DABMD~\citep{eshraghi2020distributed}: Distributed Any-Batch Mirror Descent; \item[28] $P_K, \mathcal{V}_K$: path lengths of the optimizer and gradient variations.
\end{tablenotes}
\end{threeparttable}
\end{center}
\label{table:findings}
\end{table*}

\section{Preliminaries}\label{lya_regret}
%In this section, we present how to construct Lyapunov regret, which is an alternative for distributed online learning to achieve the tradeoff between the optimality and consensus. It is also beneficial for analyzing general non-convex losses that may require no gradient oracle to achieve the decisions. Before that, we introduce the preliminary knowledge on distributed regrets.
% and revisit the concepts of distributed empirical risk minimization (ERM).
\subsection{Distributed Regret}
Consider a networked system involving $N$ agents, which can be described by a graph $\mathcal{G}=(\mathcal{V},\mathcal{E})$, where $\mathcal{V}=\{1,2,...,N\}$ and $\mathcal{E}\subseteq \mathcal{V}\times\mathcal{V}$. In the rest of analysis, the graph $\mathcal{G}$ is assumed to be static and undirected. In distributed online optimization, at each period $k\in\{1,2,...,K\}$, an online player $i\in\{1,2,...,N\}$ individually chooses a feasible policy $x^i_k$ from its decision set known by itself, $\mathcal{X}^i\subset\mathbb{R}^n$, and subsequently incurs a loss $f^i_k(x^i_k): \mathbb{R}^n\to\mathbb{R}$, where $f^i_k(\cdot)$ is a differentiable loss function. In contrast with offline optimization, a remarkably different feature in online optimization is that the player $i$ must choose a policy for the period $k$ without knowing the specific form of loss function $f^i_k(\cdot)$. We next introduce the concepts of distributed regrets.
Specifically, the static regret is defined as
\[\textnormal{S-regret}_K(\mathcal{A}) =\sum_{k=1}^K\sum_{i=1}^Nf^i_k(x^i_k)-\sum_{k=1}^K\sum_{i=1}^Nf^i_k(x_*)\]
where $\mathcal{A}$ is an algorithm, $N$ is the number of agents, $x_*$ is the best strategy learned collaboratively by all agents and 
\[x_*=\underset{x\in\mathcal{X}\subset\mathbb{R}^{n}}{\textnormal{argmin}}\sum_{k=1}^K\sum_{i=1}^Nf^i_k(x),\] 
where $\mathcal{X}:=\cap_{i=1}^N\mathcal{X}^i$. However, to emphasize each agent's individual best strategy, we define $x^i_{*}$ as the best strategy played by the $i$-th agent such that $x^1_*=x^2_*=...=x^N_*=x_*$, which corresponds to all agents communicating sufficiently well. Without loss of generality, we denote the compact form of all the best strategies by $\mathbf{x}_{*}$, which stacks each best strategy from each agent, i.e., $\mathbf{x}_*=[x^1_*;x^2_*;...;x^N_*]\in\mathbb{R}^{nN}$. Hence, it is immediately obtained that $\mathbf{x}_*=[x_*;x_*;...;x_*]_{nN\times 1}$. For the compact form $\mathbf{x}$, it should be noted that the constraint set is expanded to a higher dimension in the form of $\mathcal{X}^N$. For simplicity we assume that $n=1$ in this paper such that $\mathcal{X}^N\subset\mathbb{R}^N$. 

Alternatively, when using the dynamic regret, one can obtain the following definition
\[\textnormal{D-regret}_K(\mathcal{A}) =\sum_{k=1}^K\sum_{i=1}^Nf^i_k(x^i_k)-\sum_{k=1}^K\sum_{i=1}^Nf^i_k(x_{*,k}),\] 
where $x_{*,k}$ can be obtained by
\[x_{*,k}=\underset{x\in\mathcal{X}\subset\mathbb{R}^{n}}{\textnormal{argmin}}\sum_{i=1}^Nf^i_k(x).\]
Similarly, the compact form of the time-varying best strategy is denoted by $\mathbf{x}_{*,k}=[x^1_{*,k};x^2_{*,k};...;x^N_{*,k}]$. Unlike the static regret, the dynamic regret implies that when there is no restriction on the changes of loss functions, the regret is at most linear in $K$ irrespective of the policies~\citep{besbes2015non}. Additionally, to gain meaningful bounds, the temporal change of the loss function sequence $\{f_k\}^K_1$ is assumed to be bounded. Specifically, the losses are assumed to be taken from the set~\citep{besbes2015non,gao2018online}:
\[\mathcal{M}:=\Bigg\{\{f_1,f_2,...,f_K\}:\sum_{k=1}^{K-1}\textnormal{sup}_{x\in\mathcal{X}}|f_k(x)-f_{k+1}(x)|\leq M_K, M_K> 0\Bigg\}.\]
For OCO, \citet{besbes2015non} proved sublinear regret with the above definition. In our analysis we adopt a slightly different regularity parameter defined in terms of the bounded worst-case variation of the optimal solution $x_{*,k}$ of $f_k(\cdot)$:
\begin{equation}\label{eq_1}\mathcal{P}:=\Bigg\{\{f_1,f_2,...,f_K\}:\textnormal{max}_{x_{*,k}\in\textnormal{argmin}_{x\in\mathcal{X}}f_k(x)}\sum_{k=1}^{K-1}\|x_{*,k}-x_{*,k+1}\|\leq P_K, P_K> 0\Bigg\},\end{equation}
where $\|\cdot\|$ is the $\ell_2$ norm.
\subsection{Composite Regret}
Based on the definitions of both static and dynamic regrets, the composite regret is defined formally as follows.
\begin{defn}\label{lyapunov_regret}
The static composite regret of a distributed online algorithm $\mathcal{A}$ is defined as
\begin{equation}\label{lya_s}
\begin{split}
    \textnormal{SC-regret}_K(\mathcal{A})= &\sum_{k=1}^K\sum_{i=1}^Nf^i_k(x^i_k)+c\sum_{k=1}^K\sum_{i=1}^N\sum_{j\in Nb(i)}\pi_{ij}\|x^i_k-x^j_k\|^2\\&-\Bigg(\sum_{k=1}^K\sum_{i=1}^Nf^i_k(x^i_*)+c\sum_{k=1}^K\sum_{i=1}^N\sum_{j\in Nb(i)}\pi_{ij}\|x^i_*-x^j_*\|^2\Bigg),
\end{split}
\end{equation}
where $\Pi = (\pi_{ij}) \in\mathbb{R}^N\times\mathbb{R}^N$, $Nb(i)=\{j\in\mathcal{V}|(i,j)\in\mathcal{E}\}\cup\{i\}$.
Similarly, the dynamic composite regret of a distributed online algorithm $\mathcal{A}$ is defined as
\begin{equation}\label{lya_d}
\begin{split}
    \textnormal{DC-regret}_K(\mathcal{A})= &\sum_{k=1}^K\sum_{i=1}^Nf^i_k(x^i_k)+c\sum_{k=1}^K\sum_{i=1}^N\sum_{j\in Nb(i)}\pi_{ij}\|x^i_k-x^j_k\|^2\\&-\Bigg(\sum_{k=1}^K\sum_{i=1}^Nf^i_k(x^i_{*,k})+c\sum_{k=1}^K\sum_{i=1}^N\sum_{j\in Nb(i)}\pi_{ij}\|x^i_{*,k}-x^j_{*,k}\|^2\Bigg).
\end{split}
\end{equation}
\end{defn}
Based on Definition~\ref{lyapunov_regret}, one extra term of $$c(\sum_{k=1}^K\sum_{i=1}^N\sum_{j\in Nb(i)}\pi_{ij}\|x^i-x^j\|^2 - \sum_{k=1}^K\sum_{i=1}^N\sum_{j\in Nb(i)}\pi_{ij}\|x^i_*-x^j_*\|^2)$$ for $\textnormal{SC-regret}_K$ or $$c(\sum_{k=1}^K\sum_{i=1}^N\sum_{j\in Nb(i)}\pi_{ij}\|x^i_{k}-x^j_{k}\|^2-\sum_{k=1}^K\sum_{i=1}^N\sum_{j\in Nb(i)}\pi_{ij}\|x^i_{*,k}-x^j_{*,k}\|^2)$$ for $\textnormal{DC-regret}_K$ is added accounting for the network loss caused by the disagreement among agents. 
% Thus, the proposed Lyapunov regret takes into account explicitly the tradeoff between the optimlaity and consensus within a horizon $K$.
$c$ is a constant parameter that may vary depending on the different methods adopted. For simplicity, $c$ can be set equal to 1, while in this paper, we will show that $c$ can be expressed using the step size.  
%Directly using the regret for the analysis in agent-wise may result in complex results. 
To obtain clean analytical results, we rewrite Eqs.~\ref{lya_s} and~\ref{lya_d} in a more compact form. Defining $$V(\mathbf{x}) = \sum_{i=1}^Nf^i(x^i) + c\mathbf{x}^T(\mathbf{I}_N-\Pi)\mathbf{x}=\mathcal{F}(\mathbf{x})+c\mathbf{x}^T(\mathbf{I}_N-\Pi)\mathbf{x},$$ we have
\begin{equation}
    \textnormal{SC-regret}_K(\mathcal{A})=\sum_{k=1}^KV_k(\mathbf{x}_k)-\sum_{k=1}^KV_k(\mathbf{x}_*)
\end{equation}
Likewise, we can obtain the compact form for the dynamic regret as follows.
\begin{equation}
    \textnormal{DC-regret}_K(\mathcal{A})=\sum_{k=1}^KV_k(\mathbf{x}_k)-\sum_{k=1}^KV_k(\mathbf{x}_{*,k})
\end{equation}
% Before concluding this section, we analyze the connection between the Lyapunov regret and the distributed ERM as well as the regular regrets.
The mixing matrix $\Pi$ is assumed to be doubly stochastic, symmetric, with diagonal elements $\pi_{ii}> 0$ and off-diagonal elements $\pi_{ij}> 0$ if and only if $i$ and $j$ are neighbors in the communication network. Under these conditions, $\Pi$ only has one eigenvalue exactly equal to 1 while the rest of eigenvalues are strictly less than 1. We use $\lambda$ to represent the second-largest eigenvalue where $0< \lambda< 1$. The spectral gap, $1-\lambda$, will play a critical role in relating network topology to regret upper bounds.

The composite regret proposed in this context is an extension of the distributed empirical risk minimization in the offline setting. 
% For this setting, we have seen how we are able to convert the generic distributed optimization problem formulation to an equivalent form which involves the regularization term to penalize the disagreement among agents or network loss. Such a formulation is able to help connect the update law and loss functions in a closer way. It has been well studied that in an offline setting, the network loss is a key part contributing to the deviation of the best estimate of the loss function value from the optimal loss. In addition, the explicit expression of the network loss in the loss function reflects the spectrum between optimality and consensus, which is a tradeoff in the distributed offline optimization. 
% We then apply the similar idea to the distributed online optimization. 
Although the regular static and dynamic regrets have been extended straightforward from centralized to distributed online learning, such regrets do not imply good performance, particularly when the agent-wise loss functions are quite different. This scenario could lead to even divergence for some online algorithms. The composite regret enables either the regular static or dynamic regret to have an extra regularization term in the loss functions for coping with the \textit{agent variations} corresponding to the spatial change, compared to the path variation of only one agent shown in Eq.~\ref{eq_1}, corresponding to the temporal change. Therefore, the distributed online learning with the composite regret can simultaneously keep track of both spatial and temporal (hence spatiotemporal) changes along the optimizer path, shown in the regret bound in the rest of the analysis. 
% In addition, although in OCO, regularization has been adopted such as regularized-follow-the-leader (RFTL)~\citep{hazan2016introduction} that utilized a strongly convex function, obtaining the optimal function can be a difficult and challenging problem. In contrast, the extra regularization term in the Lyapunov regret is a non-strongly convex function since $\mathbf{I}_N-\Pi$ is a positive semi-definite matrix. Moreover, in this context, minimizing such a term is to reduce the network loss so as to alleviate the negative impact of agent variations, not just regularizing the decision variables. 
Another advantage of having the composite regret will be illustrated for the distributed online non-convex learning where the gradient oracle is not used.

\section{Proposed Algorithm for Pseudo-convex Losses}
\label{pro_alg}
We present the proposed algorithm for pseudo-convex losses in this section as well as the associated results. Before that, some necessary assumptions and preliminaries are given. 
% \begin{assumption}\label{assump_1}
% The loss function $V$ is $L> 0$ Lipschitz continuous for any $\mathbf{x}, \mathbf{y}\in\mathbb{R}^n$, i.e.,
% \begin{equation}
%     |V(\mathbf{x})- V(\mathbf{y})|\leq L\|\mathbf{x}-\mathbf{y}\|.
% \end{equation}
% \end{assumption}
\begin{assumption}\label{assump_1}
There exists a constant $G> 0$ such that $\mathcal{F}$ is Lipschitz continuous for all $\mathbf{x}, \mathbf{y}\in\mathcal{X}^N$, i.e.,
\begin{equation}
|\mathcal{F}(\mathbf{y})-\mathcal{F}(\mathbf{x})|\leq G\|\mathbf{y}-\mathbf{x}\|.  
\end{equation}
\end{assumption}
\begin{assumption}\label{assump_2}
The set $\mathcal{X}^N\subset\mathbb{R}^N$ is a compact set and  $\forall \mathbf{x}, \mathbf{y}\in\mathcal{X}^N, \|\mathbf{x}-\mathbf{y}\|_{\infty}\leq D$.
\end{assumption}
Assumption~\ref{assump_2} implies immediately $\|\mathbf{x}-\mathbf{y}\|\leq\sqrt{N}\|\mathbf{x}-\mathbf{y}\|_{\infty}\leq\sqrt{N}D$. 
In~\citet{zhang2019distributed}, the authors imposed one additional assumption that each local loss function is smooth, while in this paper, we relax such an assumption as in~\citet{li2018distributed}. Moreover, in most previous works~\citep{li2018distributed,sharma2020distributed}, they had explicit assumptions for the boundedness of $\mathcal{F}$. Similarly, in this paper, we have that $|\mathcal{F}|< \infty$. Assumption~\ref{assump_2} does not indicate whether the set is convex or not, which allows the analysis throughout the paper to use this assumption. However, when analyzing (strongly) convex and pseudo-convex losses, it defaults to a convex set. At the same time, it is not assumed to be so for non-convex analysis. To simplify the analysis in the next section, we also assume that $\mathcal{X}^N$ contains the origin, while in practice, this may not necessarily be true. This simplification allows us to initialize $\mathbf{x}$ at the origin.
\subsection{CONGD for Pseudo-convex Losses}
In literature, numerous existing works have focused on convex optimization, and non-convex problems in the distributed online setting have rarely been explored and investigated with any established methods. In this context, we propose the CONGD for studying the scenario where the loss functions satisfy the pseudo-convexity and show that a sublinear dynamic regret can be achieved with the properly set step size. The online normalized gradient descent (ONGD) has been adopted in~\citet{gao2018online}, while \citet{nesterov2013introductory} proposed the normalized gradient descent. The CONGD utilizes the first-order information $\nabla f^i_k(x^i_k)$ for each agent to compute the normalized vector $\frac{\nabla f^i_k(x^i_k)}{\|\nabla f^i_k(x^i_k)\|}$ as the search direction. The algorithmic framework is presented as follows.
\begin{algorithm}
\caption{Consensus-based Online Normalized Gradient Descent (CONGD)}
\label{congd}
\begin{algorithmic}[1]
%\SetAlgoLined
%\KwResult{$\mathbf{x}_K$}
 \STATE \textbf{Input:} convex sets $\mathcal{X}^i$, $K$, $x^i_1=0\in\mathcal{X}^i$, step size $\{\alpha_k\}$, $\Pi$, $i\in\{1,...,N\}$
 \FOR{$k=1\;\textnormal{to}\;K$}
    \FOR{each agent $i$}
  \STATE Play the strategy to form $x^i_k$ and observe the loss $f^i_k(x^i_k)$
  \STATE Calculate the feedback $\nabla f^i_k(x^i_k)$
  \IF{$\|\nabla f^i_k(x^i_k)\|> 0$}
  \STATE Update:
    $y^i_{k+1} = 
    \sum_{i=1}^N\pi_{ij}x^j_k - \alpha_k\frac{\nabla f^i_k(x^i_k)}{\|\nabla f^i_k(x^i_k)\|}$\\
  \STATE Project:
    $x^i_{k+1} = \mathbf{P}_{\mathcal{X}^i}(y^i_{k+1})$
    \ELSE
    \STATE $x^i_{k+1}=x^i_k$
    \ENDIF
    \ENDFOR
    \ENDFOR

\end{algorithmic}
\end{algorithm}
$\mathbf{P}_{\mathcal{X}^i}(y)=\textnormal{arg min}_{x\in\mathcal{X}^i}\|y-x\|$ is the Euclidean projection. We denote by $\|\nabla \mathcal{F}_k(\mathbf{x}_k)\| = \|[\nabla f^1_k(x^1_k),...,f^N_k(x^N_k)]^\top\|$. For each agent, they play their own strategies and then observe the corresponding losses, subsequently followed by calculating the feedback. To update the strategy, if the norm of the feedback is not equal to 0, a consensus step is taken for an auxilibary variable $y^i_k$ with the movement in the direction of negative normalized gradient. Then, a projection is taken for the updated auxiliary variable. Otherwise, the current strategy is taken as the next strategy. Algorithm~\ref{congd} shows the agent-wise update for the implementation, while for analysis, we will use a compact form for the ease of the understanding. However, in Line 7, the compact form of the normalized vector is $\bigg[\frac{\nabla f^1_k(x^1_k)}{\|\nabla f^1_k(x^1_k)\|},...,\frac{\nabla f^N_k(x^N_k)}{\|\nabla f^N_k(x^N_k)\|}\bigg]^\top$, which is not exactly equivalent to $\frac{\nabla\mathcal{F}_k(\mathbf{x}_k)}{\|\nabla\mathcal{F}_k(\mathbf{x}_k)\|}$. Hence, in this context, a slight modification is applied for the analysis, i.e., $\bigg[\frac{\nabla f^1_k(x^1_k)}{\|\nabla\mathcal{F}_k(\mathbf{x}_k)\|},...,\frac{\nabla f^N_k(x^N_k)}{\|\nabla\mathcal{F}_k(\mathbf{x}_k)\|}\bigg]^\top$. Due to Assumption~\ref{assump_1}, such a replacement in the analysis will not affect the ultimate convergence rate, while it may lead to slight different constants in the error bound.  Note that such a modification is \textit{only} for the theoretical analysis as in practice, each agent cannot get access to other agents' local gradients.

\begin{rem}
In CONGD, we employ the normalized gradient vector to eliminate the potential negative effect of the gradient's magnitude. 
Another advantage that the normalized vector can bring is to control the speed of convergence and stability in practice by setting an appropriate step size.
\end{rem}
As discussed above, CONGD can be derived in closed form as follows. In order to apply the composite regret to CONGD, such a form helps characterize the analysis for pseudo-convex losses. 
Recall\[\mathbf{y}_{k+1} = 
    \Pi\mathbf{x}_k - \alpha_k\frac{\nabla \mathcal{F}_k(\mathbf{x}_k)}{\|\nabla \mathcal{F}_k(\mathbf{x}_k)\|},\]
which can be rewritten as
\[\mathbf{y}_{k+1} = 
    \mathbf{x}_k - \alpha_k\Bigg(\frac{\nabla \mathcal{F}_k(\mathbf{x}_k)}{\|\nabla \mathcal{F}_k(\mathbf{x}_k)\|}+\frac{1}{\alpha_k}(\mathbf{I}_N-\Pi)\mathbf{x}_k\Bigg).\]
One can observe that the second term on the right-hand side of the last equation is close to a composite gradient, including the normalized gradient and the consensus term. For keeping consistency inside the composite gradient, we also derive the \textit{normalized consensus} term, which will be found to be useful for the subsequent analysis. Thus, we have \[\mathbf{y}_{k+1} = 
    \mathbf{x}_k - \alpha_k\Bigg(\frac{\nabla \mathcal{F}_k(\mathbf{x}_k)}{\|\nabla \mathcal{F}_k(\mathbf{x}_k)\|}+\frac{\|(\mathbf{I}_N-\Pi)\mathbf{x}_k\|}{\alpha_k}\frac{(\mathbf{I}_N-\Pi)\mathbf{x}_k}{\|(\mathbf{I}_N-\Pi)\mathbf{x}_k\|}\Bigg),\] which is \begin{equation}\label{congd_lya}\mathbf{y}_{k+1} = 
    \mathbf{x}_k - \alpha_k\Bigg(\frac{\nabla \mathcal{F}_k(\mathbf{x}_k)}{\|\nabla \mathcal{F}_k(\mathbf{x}_k)\|}+\hat{\alpha}_k\frac{(\mathbf{I}_N-\Pi)\mathbf{x}_k}{\|(\mathbf{I}_N-\Pi)\mathbf{x}_k\|}\Bigg),\end{equation}with $\hat{\alpha}_k=\frac{\|(\mathbf{I}_N-\Pi)\mathbf{x}_k\|}{\alpha_k}$. When $\alpha_k$ is set constant for all $k$, $\hat{\alpha}_k$ can be bounded above by a constant. More details can be found in Appendix~\ref{app_a}. Such a property will help with the analysis when adopting the pseudo-convexity to the loss functions. 
% The update law in Eq.~\ref{congd_lya} now has a similar form as that in the OLGD, which allows us to present the theoretical analysis using the $\textnormal{D-Lya-regret}_K$.
However, we still have to define the constant $c$ for the CONGD. Motivated by the distributed empirical risk minimization in Eq.~\ref{dis_erm_2} in Appendix~\ref{app_f}, $c=\frac{1}{2\alpha_k}, \forall k=1,2,...,K$ is employed in the CONGD. Thus,
the following theorem is shown to claim a sublinear non-stationary regret bound for CONGD.
\begin{thm}\label{theorem_3}
Let all assumptions hold. CONGD with $\textnormal{DC-regret}_K$ and the step size $\alpha_k=\alpha=\sqrt{\frac{\sqrt{N}D(\sqrt{N}D+3P_K)}{K}}$ guarantees the sublinear regret $\mathcal{O}\Bigg(\Bigg(1+\frac{1}{1-\lambda}\Bigg)\sqrt{K(ND^2+3\sqrt{N}DP_K)}\Bigg)$ for all $K\geq 1$ when the loss functions are pseudo-convex.
\end{thm}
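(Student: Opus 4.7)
The plan is to mimic the standard online normalized gradient descent analysis of Gao et al.~\citep{gao2018online}, but applied to the composite objective $V_k(\mathbf{x}) = \mathcal{F}_k(\mathbf{x}) + \tfrac{1}{2\alpha}\mathbf{x}^\top(\mathbf{I}_N-\Pi)\mathbf{x}$ rather than to $\mathcal{F}_k$ alone. With $c=1/(2\alpha)$, the derivation in Eq.~\ref{congd_lya} lets me rewrite the CONGD update in the compact form $\mathbf{y}_{k+1} = \mathbf{x}_k - \alpha\bigl(\mathbf{g}_k + \hat\alpha_k \mathbf{h}_k\bigr)$, where $\mathbf{g}_k = \nabla\mathcal{F}_k(\mathbf{x}_k)/\|\nabla\mathcal{F}_k(\mathbf{x}_k)\|$ and $\mathbf{h}_k = (\mathbf{I}_N-\Pi)\mathbf{x}_k/\|(\mathbf{I}_N-\Pi)\mathbf{x}_k\|$ are both unit-norm directions, and $\hat\alpha_k=\|(\mathbf{I}_N-\Pi)\mathbf{x}_k\|/\alpha$ is bounded on $\mathcal{X}^N$ (this is exactly the boundedness promised to be shown in Appendix~A). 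Because Euclidean projection onto a convex set is non-expansive, $\|\mathbf{x}_{k+1}-\mathbf{x}_{*,k}\| \le \|\mathbf{y}_{k+1}-\mathbf{x}_{*,k}\|$, so I can work entirely with the auxiliary iterate.

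Next I would expand $\|\mathbf{y}_{k+1}-\mathbf{x}_{*,k}\|^2$ to get the identity
\begin{equation*}
\|\mathbf{x}_k-\mathbf{x}_{*,k}\|^2 - \|\mathbf{x}_{k+1}-\mathbf{x}_{*,k}\|^2 \ge 2\alpha\bigl\langle \mathbf{g}_k + \hat\alpha_k\mathbf{h}_k,\ \mathbf{x}_k-\mathbf{x}_{*,k}\bigr\rangle - \alpha^2\|\mathbf{g}_k+\hat\alpha_k\mathbf{h}_k\|^2,
\end{equation*}
and then invoke pseudo-convexity to lower-bound the inner product. Concretely, whenever $V_k(\mathbf{x}_k) > V_k(\mathbf{x}_{*,k})$ one has $\langle \nabla V_k(\mathbf{x}_k),\mathbf{x}_k-\mathbf{x}_{*,k}\rangle \ge 0$; dividing by the magnitudes of $\nabla\mathcal{F}_k$ and $(\mathbf{I}_N-\Pi)\mathbf{x}_k$ and rescaling lets me convert this to a statement about the normalized directions $\mathbf{g}_k+\hat\alpha_k\mathbf{h}_k$. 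Following the Gao et al.\ trick, I would use the Lipschitz continuity of $\mathcal{F}$ (Assumption~\ref{assump_1}) together with boundedness of the consensus term to turn the one-sided pseudo-convexity inequality into a quantitative lower bound of the form $\langle \mathbf{g}_k+\hat\alpha_k\mathbf{h}_k,\mathbf{x}_k-\mathbf{x}_{*,k}\rangle \ge \tfrac{V_k(\mathbf{x}_k)-V_k(\mathbf{x}_{*,k})}{C}$ for an appropriate constant $C$ depending on $G$, $D$, $N$, and the spectral gap $1-\lambda$ (the latter enters through $\|(\mathbf{I}_N-\Pi)\mathbf{x}_k\|$ bounds).

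Rearranging and summing from $k=1$ to $K$ telescopes the squared distances. To handle the fact that the comparator changes with $k$, I would shift from $\mathbf{x}_{*,k}$ to $\mathbf{x}_{*,k+1}$ after each step, picking up a term $\sum_k\|\mathbf{x}_{*,k+1}-\mathbf{x}_{*,k}\|\cdot \|\mathbf{x}_{k+1}-\mathbf{x}_{*,k+1}\|$ that is controlled by $DP_K$ (note $P_K$ is the path length defined in Eq.~\ref{eq_1} and $\|\cdot\|\le \sqrt{N}D$ by Assumption~\ref{assump_2}). After telescoping, the regret bound takes the schematic form
\begin{equation*}
\textnormal{DC-regret}_K(\mathcal{A}) \ \le\ \Bigl(1+\tfrac{1}{1-\lambda}\Bigr)\Bigl(\tfrac{ND^2+\sqrt{N}D P_K}{\alpha} + \alpha K\Bigr),
\end{equation*}
and balancing the two terms with $\alpha=\sqrt{\sqrt{N}D(\sqrt{N}D+3P_K)/K}$ yields exactly the claimed $\mathcal{O}\bigl((1+\tfrac{1}{1-\lambda})\sqrt{K(ND^2+3\sqrt{N}DP_K)}\bigr)$ bound.

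The main obstacle will be step two: passing from the pseudo-convexity of $V_k$ evaluated at the \emph{true} gradient direction $\nabla V_k(\mathbf{x}_k)$ to a usable inequality involving the \emph{normalized} hybrid direction $\mathbf{g}_k+\hat\alpha_k\mathbf{h}_k$ that actually drives the update. One has to argue carefully that the pseudo-convex inequality on $V_k$ (which is what the composite regret measures) survives the normalization of the gradient-of-$\mathcal{F}$ piece and the consensus piece separately, despite the fact that the sum of a pseudo-convex function and a convex quadratic is not automatically pseudo-convex; here the slight modification flagged in the excerpt (replacing per-agent norms with $\|\nabla\mathcal{F}_k(\mathbf{x}_k)\|$) combined with Assumption~\ref{assump_1} is what makes the constants come out cleanly. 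A secondary technical nuisance is ensuring the boundedness of $\hat\alpha_k$ uniformly in $k$, which requires Assumption~\ref{assump_2} and the choice of fixed step size.
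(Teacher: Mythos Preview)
Your proposal is essentially the paper's proof: the same compact rewriting (Eq.~\ref{congd_lya}), the same projection/expansion/telescoping with the $3\sqrt{N}DP_K$ comparator-shift term, and the same step-size balancing. The ``main obstacle'' you correctly flag is exactly the paper's Lemma~\ref{lem_6}, which is proved not by showing $V_k$ is pseudo-convex but by applying the quantitative pseudo-convexity inequality (Definition~\ref{pseudo-convex}) \emph{separately} to $\mathcal{F}_k$ (with constant $G$) and to the convex quadratic $r_k$ (with constant $\tfrac{2}{1-\lambda}$, via the consensus bound $\hat\alpha_k\le\tfrac{2}{1-\lambda}$ from Lemma~\ref{lem_5}---this comes from Assumption~\ref{assump_1}, not Assumption~\ref{assump_2} as you suggest), and then taking $\zeta=\max\{G,b\}$ with $b\hat\alpha_k\ge\tfrac{2}{1-\lambda}$ to merge the two inequalities into the single one you wrote as $\langle\mathbf{g}_k+\hat\alpha_k\mathbf{h}_k,\mathbf{x}_k-\mathbf{x}_{*,k}\rangle\ge\tfrac{1}{\zeta}(V_k(\mathbf{x}_k)-V_k(\mathbf{x}_{*,k}))$.
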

The proof is provided in Appendix~\ref{app_b}. It can be observed that for CONGD, the regret growth is the same as the second part of Theorem~\ref{theorem_2} in Appendix~\ref{app_e} for convex losses, with a slightly different step size. Hence, Theorem~\ref{theorem_3} implies that even if the loss function is pseudo-convex, which is weaker than convex, the regret minimization does not differ significantly. It is noted that we omit some constants in the expression of $\mathcal{O}(\cdot)$. The regret bound also implies the impact of network topology such that when the network is dense, which corresponds a smaller $\lambda$ value, the regret bound approaches to $\mathcal{O}\bigg(2\sqrt{K(ND^2+3\sqrt{N}DP_K)}\bigg)$. However, a sparse network results in a larger regret bound.
We present a table in Appendix~\ref{table_section} to show the comparisons between different online optimization methods and give a detailed analysis for difference. 
\section{Proposed Algorithm for Non-convex Losses}\label{ftpl_lya}
%\subsection{Discussion on the Non-convex Setting}
As mentioned above, non-convex online learning remains an active research area in the algorithmic learning community. In the centralized setting, recent works~\citep{agarwal2018learning,suggala2019online} have focused on the FTPL algorithm to leverage the perturbation for stabilizing the learning. However, in the distributed setting, no results have been reported to the best of our knowledge. In this paper, we shed light on the availability of the regret bound on top of the existing works. 

\citet{suggala2019online} addressed the natural question whether there exist counterparts of regularized-follow-the-leader (RFTL) and online mirror descent (OMD) which achieve sublinear regret bound, and confirmed that the answer is negative; evidence for this has also been provided in~\citet{agarwal2018learning}. More specifically, there are no deterministic algorithms in the centralized setting that can achieve sublinear regret bound when the losses are general non-convex. We restate their claim as follows.
\begin{prop}~\citep{suggala2019online}\label{prop_1}
No deterministic algorithm can achieve $o(1)$ regret in the setting of online non-convex learning. 
\end{prop}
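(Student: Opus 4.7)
The plan is to establish this lower bound by an adversarial argument that exploits the predictability of any deterministic algorithm, following the style of \citet{suggala2019online}. Fix an arbitrary deterministic online algorithm $\mathcal{A}$. Because $\mathcal{A}$'s play $x_k\in\mathcal{X}$ at round $k$ is a fixed function of the revealed history $f_1,\dots,f_{k-1}$, an adversary that knows $\mathcal{A}$ can simulate it to compute $x_k$ before committing to $f_k$. The strategy is to design $\{f_k\}$ so that $\mathcal{A}$ always incurs the worst per-round loss while some fixed comparator is hit only about half the time.

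First I would specialize to a minimal non-convex instance: take $\mathcal{X}=\{0,1\}$ with losses $f_k:\mathcal{X}\to[0,1]$ (or, if the setting tacitly requires Lipschitz continuity, take $\mathcal{X}=[0,1]$ with smooth unit-height bump functions of fixed narrow width, which are patently non-convex). Given the simulated $x_k$, set $f_k(x_k)=1$ and $f_k(1-x_k)=0$ in the two-point model, or a bump of height $1$ centred at $x_k$ decaying to $0$ near $1-x_k$ in the continuous smooth model. By construction, $\sum_{k=1}^K f_k(x_k)=K$.

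Next I would bound the best comparator. In the two-point model, let $N_0$ and $N_1$ be the numbers of rounds with $x_k=0$ and $x_k=1$, respectively; then $\sum_k f_k(0)=N_0$, $\sum_k f_k(1)=N_1$, and $N_0+N_1=K$, so pigeonhole gives $\min_{x\in\{0,1\}}\sum_k f_k(x)\le K/2$. Consequently,
\begin{equation*}
\mathrm{Regret}_K(\mathcal{A}) \;=\; \sum_{k=1}^K f_k(x_k) \;-\; \min_{x\in\mathcal{X}}\sum_{k=1}^K f_k(x) \;\ge\; K - \frac{K}{2} \;=\; \frac{K}{2},
\end{equation*}
so the per-round (average) regret is at least $1/2$ for every $K$, which rules out $o(1)$ average regret (equivalently, $o(K)$ cumulative regret).

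The main obstacle I anticipate is verifying that the adversarial loss sequence lies in whatever admissible class of non-convex losses is tacitly assumed (merely bounded, or additionally Lipschitz, differentiable, and so on). In the bounded case the two-point indicator construction is immediate; for Lipschitz losses one has to argue that bumps centred at $x_k$ still leave some $x^\star\in\mathcal{X}$ with cumulative loss $\le cK$ for some constant $c<1$, which follows from a standard covering or averaging refinement of the pigeonhole step once the domain is a continuum. Beyond this admissibility check, the remainder of the argument is purely combinatorial.
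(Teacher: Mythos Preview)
The paper does not supply its own proof of this proposition; it is quoted verbatim as a result of \citet{suggala2019online} and then used as a springboard for the distributed extension in Proposition~\ref{prop_2}. Consequently there is no in-paper argument to compare against.

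Your adversarial construction is the standard one for this type of lower bound and is correct: because a deterministic learner's action $x_k$ is a function of the revealed history, the adversary can simulate it, place a unit loss at $x_k$, and zero elsewhere; the learner then suffers $K$ while pigeonhole guarantees some fixed comparator suffers at most $K/2$, ruling out $o(K)$ cumulative (equivalently $o(1)$ average) regret. Your caveat about the admissible loss class is well placed, and the bump-function refinement handles the Lipschitz/differentiable case that the paper's standing assumptions would require. This is precisely the style of argument underlying the cited reference.
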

We remark that with Proposition~\ref{prop_1}, only randomized algorithms can achieve the sublinear regret bound. However, another natural question in this context is ``\textit{Whether there exist counterparts of distributed RFTL and OMD which achieve sublinear regret bound}''.  If the answer is positive, we could extend our proposed algorithms to distributed online non-convex learning and achieve the sublinear regret as the online gradient descent (OGD) type of algorithms can be derived from the OMD and CONGD is a distributed variant of OGD. Regarding the derivation from OMD to OGD please refer to~\citet{hazan2016introduction} for more details. Unfortunately, the answer here is still negative, and we provide a similar formal result as Proposition~\ref{prop_1} in the following.
\begin{prop}\label{prop_2}
No deterministic algorithm can achieve $o(1)$ regret in the setting of \emph{distributed} online non-convex learning. 
\end{prop}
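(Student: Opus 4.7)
The plan is to reduce Proposition~\ref{prop_2} to Proposition~\ref{prop_1} by specializing the adversary to a homogeneous loss assignment across agents and invoking the centralized impossibility result. I would argue by contradiction: assume there exists a deterministic distributed algorithm $\mathcal{A}_d$ whose static composite regret is sublinear in $K$ for every sequence of per-agent losses $\{f^i_k\}$, and produce from $\mathcal{A}_d$ a deterministic centralized algorithm that violates Proposition~\ref{prop_1}.

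The first technical step is to restrict the adversary to the \emph{homogeneous} regime, in which every agent receives the same loss $f^i_k = f_k$ at round $k$. In this regime the offline benchmark collapses to the consensus vector $\mathbf{x}_* = [x_*;\ldots;x_*]$ with $x_* \in \arg\min_{x \in \mathcal{X}} \sum_k f_k(x)$. Because $\Pi$ is doubly stochastic, $\Pi \mathbf{x}_* = \mathbf{x}_*$, hence $\mathbf{x}_*^\top(\mathbf{I}_N - \Pi)\mathbf{x}_* = 0$; and because every eigenvalue of $\Pi$ is bounded above by $1$, the matrix $\mathbf{I}_N - \Pi$ is positive semi-definite, so $\mathbf{x}_k^\top(\mathbf{I}_N - \Pi)\mathbf{x}_k \geq 0$. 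Combining these two observations with Definition~\ref{lyapunov_regret} yields
$$\textnormal{SC-regret}_K(\mathcal{A}_d) \;\geq\; \sum_{i=1}^N \sum_{k=1}^K \bigl(f_k(x^i_k) - f_k(x_*)\bigr),$$
so that if the composite regret is $o(K)$ then the aggregate per-agent regret on the right-hand side must be $o(K)$ as well.

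Next, since $\mathcal{A}_d$ is deterministic and every agent is driven by the same loss history, each coordinate trajectory $\{x^i_k\}_k$ is a deterministic function of $f_1,\ldots,f_{k-1}$ and thus defines a deterministic centralized online algorithm $\mathcal{A}_c^{(i)}$. The cleanest instantiation is $N=1$: the distributed setup literally coincides with the centralized one, $\Pi = [1]$, and the composite regret reduces to the standard regret; invoking Proposition~\ref{prop_1} on $\mathcal{A}_c^{(1)}$ then produces a loss sequence forcing $\Omega(K)$ regret, contradicting the hypothesized $o(K)$ bound. This already settles the statement for the weakest distributed instance.

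For $N \geq 2$, the remaining step---which I expect to be the main obstacle---is to exhibit a \emph{single} common loss sequence that drives the \emph{aggregate} per-agent regret linearly, since individual per-agent terms can in principle be negative under non-convex losses and therefore cannot be lower-bounded one at a time. I would handle this by lifting the adversarial construction behind Proposition~\ref{prop_1} from~\citet{suggala2019online}: the standard hard instance picks $f_k$, after observing the deterministic plays, from a family of bounded non-convex functions in such a way that $f_k(y) - f_k(x_*) \geq \Omega(1)$ for every $y$ in a pre-specified cover of the algorithm's plays. Applying this rule simultaneously to the full $N$-tuple $(x^1_k,\ldots,x^N_k)$---which is fully predictable because $\mathcal{A}_d$ is deterministic---forces $\sum_i \bigl(f_k(x^i_k) - f_k(x_*)\bigr) = \Omega(N)$ per round, so $\textnormal{SC-regret}_K(\mathcal{A}_d) = \Omega(NK)$, contradicting sublinearity and completing the proof.
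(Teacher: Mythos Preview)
The paper does not actually prove Proposition~\ref{prop_2}: the only justification offered is the single sentence ``Proposition~\ref{prop_2} is a natural extension from Proposition~\ref{prop_1}.'' Your proposal therefore goes well beyond what the paper supplies, and the reduction you sketch---specialize the adversary to homogeneous losses, drop the network penalty via positive semidefiniteness of $\mathbf{I}_N-\Pi$, and invoke the centralized impossibility result---is a correct way to make that ``natural extension'' precise.

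One remark on your $N\geq 2$ step: you anticipate needing a single common $f_k$ that simultaneously penalizes all $N$ plays, and you propose lifting the adversarial family to a cover of the tuple $(x^1_k,\dots,x^N_k)$. That works, but there is a simpler route that sidesteps the issue of negative per-agent contributions entirely. The hard instance underlying Proposition~\ref{prop_1} in \citet{suggala2019online} uses losses that are nonnegative and attain their minimum at a common point $x_*$, so $f_k(x)-f_k(x_*)\geq 0$ for every $x$. Under homogeneous losses this gives
\[
\sum_{i=1}^N\sum_{k=1}^K\bigl(f_k(x^i_k)-f_k(x_*)\bigr)\;\geq\;\sum_{k=1}^K\bigl(f_k(x^1_k)-f_k(x_*)\bigr),
\]
and the right-hand side is exactly the regret of the deterministic centralized algorithm $\mathcal{A}_c^{(1)}$ induced by agent~1's trajectory. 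Applying Proposition~\ref{prop_1} to $\mathcal{A}_c^{(1)}$ already forces $\Omega(K)$, without having to control all $N$ agents at once. This is closer in spirit to what the paper presumably has in mind by ``natural extension,'' and it avoids the (mild) technical burden of verifying that a single Lipschitz loss can be made large at $N$ arbitrary points while remaining small at $x_*$.
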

Proposition~\ref{prop_2} is a natural extension from Proposition~\ref{prop_1}. 
% So far almost all of the existing distributed online optimization algorithms focus only on either strongly convex or non-strongly convex losses and are deterministic. The proposed OLGD and CONGD in this paper are not randomized algorithms such that they cannot be applied directly to the non-convex losses to achieve the sublinear regret. However, as discussed above, FTPL has recently attained particular attention to address the online learning problem when losses are non-convex. 
% For convex losses, any interested reader can refer to~\cite{hazan2016introduction} for more details since the author has discussed the convex FTPL exhaustively. In this context, we only focus on the non-convex losses.
The algorithms studied in~\citet{suggala2019online} and~\citet{agarwal2018learning} both relied on a technique, the \textit{offline optimization oracle}, while the former leveraged an approximate one and the latter required a certain one. Qualitatively, the learner perturbs the cumulative loss and submits the perturbed loss to the offline optimization oracle, which then produces a decision. In both works, a linear perturbed term whose random perturbation is sampled from an exponential distribution was added to the cumulative loss. One advantage of using perturbation is that the computational complexity wouldn't increase significantly. 

Motivated by the existing works, in the sequel, we leverage the FTPL with the composite regret to analyze the regret bound for the distributed online non-convex learning.
One simplification that we adopt from~\citet{hazan2016introduction} is that the adversary is \textit{oblivious} such that all losses are adversarially chosen ahead of time and do not depend on the actual decisions of the online learner. %We make such a simplification because we try to avoid some complications such as the dependency of either loss functions or the regret on the randomness of the algorithm itself.
\subsection{Proposed Algorithm: DINOCO}
Some preliminaries are presented before presenting the algorithmic framework of DINOCO for the setting of distributed non-convex online learning. The perturbation is attained by sampling a random vector from an exponential distribution. Therefore, we denote by $\sigma$ an $N$ dimension random vector from the distribution $\textnormal{exp}(\eta)$, where $\eta> 0$ is the rate parameter for an exponential distribution. Intuitively, $\eta$ plays a similar role as the step size, $\alpha$, and it will be defined as a function of $K$, which significantly affects the growth order of the regret. Also, in Assumption~\ref{assump_1}, we use $l_2$ norm to define the Lipschitz continuity for $\mathcal{F}$. While we adjust the assumption slightly to replace the $l_2$ norm with the $l_1$ norm. Such an adjustment helps characterize the subsequent analysis for the regret bound as the random perturbation vector is sampled coordinate-wise. We will see that in the regret bound, the dimension of such a random perturbation vector also has an impact. By re-defining a constant $\hat{G}> 0$, the conclusion from Assumption~\ref{assump_1} becomes \[|\mathcal{F}(\mathbf{y})-\mathcal{F}(\mathbf{x})|\leq\hat{G}\|\mathbf{y}-\mathbf{x}\|_1.\] We also correspondingly modify the norm of network loss in Eqs.~\ref{lya_s} and~\ref{lya_d}, changing the $l_2$ norm to $l_1$ norm. Such a change does not hurt the concept of composite regret due to the property of the equivalence of norms. Also, this change will not affect the regret bound as the difference is only related to some constants that would be omitted when presenting the regret bound.
To obtain the DINOCO, we adapt an offline optimization oracle with an approximate form~\citep{suggala2019online} and define it formally as follows.
\begin{defn}
For each agent $i$, a ($\rho_i,\beta_i$)-approximate optimization oracle is an oracle that takes as input a function $V^i:\mathcal{X}^i\to\mathbb{R}$ and a random perturbation $\sigma^i$ sampled from an exponential distribution such that\begin{equation}\label{rho_beta}
V^i(x_*)-\sigma^i x_*\leq, \underset{x\in\mathcal{X}^i}{\textnormal{inf}}V^i(x)-\sigma^i x+(\rho+\beta|\sigma^i|),\end{equation}
where $x_*\in\mathcal{X}^i$, $\rho_i, \beta_i > 0$.
\end{defn}

For convenience, we denote by $\mathcal{T}_{\rho_i,\beta_i}(V^i,\sigma^i)$ the ($\rho,\beta$)-approximate optimization oracle for agent $i$.
Given access to $\mathcal{T}_{\rho_i,\beta_i}(V^i,\sigma^i)$, the DINOCO algorithm utilizes the following core prediction rule
\begin{equation}\label{ftpl2}
    x^i_{k+1}=\mathcal{T}_{\rho_i,\beta_i}\Bigg(\sum_{l=1}^kf^i_l+r^i_l,\sigma^i_{k+1}\Bigg)
\end{equation}
where $\sigma^i_{k+1}$ is a random perturbation at the time step $k+1$ sampled from $\textnormal{exp}(\eta)$. $\eta$ will be specified when determining the regret bound. We now formally present the DINOCO in Algorithm~\ref{ftpl2_alg}.

\begin{algorithm}
\caption{DINOCO}
\label{ftpl2_alg}
\begin{algorithmic}[1]
%\SetAlgoLined
%\KwResult{$\mathbf{x}_K$}
 \STATE \textbf{Input:} rate parameter of exponential distribution $\eta$, ($\rho_i,\beta_i$)-approximate optimization oracle $\mathcal{T}_{\rho_i,\beta_i}$, $\Pi$\\
 \FOR{$k=1\;\textnormal{to}\;K$}
 \FOR{each agent $i$}
  \STATE Agent $i$ plays the strategy to form $x^i_k$ and observe the losses $f^i_k(x^i_k)$ and $r^i_k(x^i_k)=\frac{1}{2\eta}\sum_{j\in Nb(i)}\pi_{ij}\|x^i_k-x^j_k\|^2$
  \STATE Generate random perturbation $\sigma^i_{k+1}\overset{i.i.d}{\sim}\textnormal{exp}(\eta)$
  \STATE Update:
    $x^i_{k+1} = \mathcal{T}_{\rho_i,\beta_i}\Bigg(\sum_{l=1}^k(f^i_l+r^i_l),\sigma^i_{k+1}\Bigg)$
\ENDFOR
 \ENDFOR
\end{algorithmic}
\end{algorithm}
$f^i_l+r^i_l$ is replaced by $V^i_l$ throughout the analysis when presenting the update law. It can be observed from Algorithm~\ref{ftpl2_alg} that each $\sigma^i_{k+1}$ is drawn i.i.d from the same exponential distribution. $V^i_k(x^i_k)$ includes the network loss $r^i_k$ among diverse agents in the neighborhood of agent $i$, which signifies the communication among them for each iteration. For the update at the iteration $k$, $\mathcal{T}_{\rho_i,\beta_i}$ takes as input the cumulative loss up to $k$ and the random perturbation $\sigma^i_{k+1}$ and then predicts $x^i_{k+1}$. Intuitively, the next approximate best decision is attained in hindsight by perturbing the cumulative loss. Next, we will show how such a decision-making process benefits from the perturbation for stability.
\subsection{Sublinear Regret for DINOCO}
In this section, we analyze the regret bound for DINOCO and shed light on how the composite regret is leveraged for the setting of distributed online non-convex learning. To ease the complications of the analysis, we introduce the $(\rho,\beta)$-\textit{aggregate approximate optimization oracle} for the network such that
\begin{equation}
    V(\mathbf{x}_*) - \sigma^\top \mathbf{x}_*\leq\textnormal{inf}_{\mathbf{x}\in\mathcal{X}^N}V(\mathbf{x})-\sigma^\top\mathbf{x}+(\rho+\beta\|\sigma\|_1),
\end{equation}
where $\rho = \textnormal{max}\{\rho_1,\rho_2,...,\rho_N\}$ and $\beta = \textnormal{max}\{\beta_1,\beta_2,...,\beta_N\}$. Each coordinate of $\sigma$ corresponds to the random perturbation of an agent, which is sample from the same exponential distribution in the i.i.d. manner.
The aggregate in this context is due to the agents in the network. 
% One can choose to define the approximate optimization oracle for each agent separately, while doing so may have more complications to evaluate each agent's decision. Additionally, if one solution $\mathbf{x}_*$ allows Eq.~\ref{rho_beta} to hold, which implies that this applies to the decision of agent $i$, $x^i_*$ as well, but not vice versa.
% In the sequel, we present main lemmas and theorem to show the sublinear regret bound for FTPL2.
A key lemma for upper bounding the expected composite regret is first given.
\begin{lem}\label{lem_9}
Let all assumptions hold. The expected composite regret of the DINOCO is upper bounded as
\begin{equation}\label{eq_75}
    \mathbb{E}\Bigg[\sum_{k=1}^KV_k(\mathbf{x}_k)-\sum_{k=1}^KV_k(\mathbf{x}_*)\Bigg]\leq L\sum_{k=1}^K\mathbb{E}[\|\mathbf{x}_{k+1}-\mathbf{x}_k\|_1]+\frac{N(\beta K+\sqrt{N}D)}{\eta}+\rho K,
\end{equation}
where $\mathbf{x}_*=\underset{\mathbf{x}\in\mathcal{X}^N}{\textnormal{inf}}\sum_{k=1}^KV_k(\mathbf{x})$.
\end{lem}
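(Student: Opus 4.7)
}
The plan is to adapt the classical Be-the-Leader (BTL) template for Follow-the-Perturbed-Leader to the present distributed, approximate-oracle setting. I would begin with the standard telescoping identity
\begin{equation*}
\sum_{k=1}^K V_k(\mathbf{x}_k)-V_k(\mathbf{x}_*) \;=\; \sum_{k=1}^K\bigl(V_k(\mathbf{x}_k)-V_k(\mathbf{x}_{k+1})\bigr) \;+\; \sum_{k=1}^K\bigl(V_k(\mathbf{x}_{k+1})-V_k(\mathbf{x}_*)\bigr),
\end{equation*}
so that the first sum isolates the round-to-round ``stability'' of DINOCO and the second is in BTL form. Bounding the stability sum is the routine step: since each $V_k=\mathcal{F}_k+c(\cdot)^\top(\mathbf{I}_N-\Pi)(\cdot)$ is $\ell_1$-Lipschitz on the compact set $\mathcal{X}^N$ with some constant $L$ (combining the $\ell_1$-modified Assumption~\ref{assump_1} for $\mathcal{F}_k$ with the bounded-domain Lipschitz behavior of the quadratic network piece), one has $V_k(\mathbf{x}_k)-V_k(\mathbf{x}_{k+1})\le L\|\mathbf{x}_k-\mathbf{x}_{k+1}\|_1$ pointwise, and taking expectations yields the $L\sum_k\mathbb{E}[\|\mathbf{x}_{k+1}-\mathbf{x}_k\|_1]$ contribution.

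The BTL sum is the substantive step. I would run an induction on $K$ by peeling off one round at a time with the aggregate $(\rho,\beta)$-approximate optimization oracle. Specifically, the oracle inequality at round $k-1$ (the call that generated $\mathbf{x}_k$ from cumulative loss $V_{1:k-1}$ and perturbation $\sigma_k$), applied at the test point $\mathbf{x}=\mathbf{x}_{k+1}$, gives
\begin{equation*}
V_{1:k-1}(\mathbf{x}_k) \;\le\; V_{1:k-1}(\mathbf{x}_{k+1}) + \sigma_k^\top(\mathbf{x}_k-\mathbf{x}_{k+1}) + \rho + \beta\|\sigma_k\|_1,
\end{equation*}
which is exactly what is needed to advance the hypothesis $\sum_{k=1}^{K-1}V_k(\mathbf{x}_{k+1})\le V_{1:K-1}(\mathbf{x}_K)+\textnormal{(residuals)}$ from $K-1$ to $K$. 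One more oracle call with $\mathbf{x}=\mathbf{x}_*$ converts $V_{1:K}(\mathbf{x}_{K+1})$ into $V_{1:K}(\mathbf{x}_*)$ at the cost of an extra $\sigma_{K+1}^\top(\mathbf{x}_{K+1}-\mathbf{x}_*)+\rho+\beta\|\sigma_{K+1}\|_1$. Collecting the terms produces
\begin{equation*}
\sum_{k=1}^K V_k(\mathbf{x}_{k+1}) - V_{1:K}(\mathbf{x}_*) \;\le\; K\rho + \beta\sum_{k=2}^{K+1}\|\sigma_k\|_1 + \sigma_{K+1}^\top(\mathbf{x}_{K+1}-\mathbf{x}_*) + \sum_{k=2}^K \sigma_k^\top(\mathbf{x}_k-\mathbf{x}_{k+1}).
\end{equation*}

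I would then take expectations and compute the perturbation moments for an $N$-dimensional i.i.d.\ $\mathrm{exp}(\eta)$ vector: $\mathbb{E}[\|\sigma\|_1]=N/\eta$, producing $\beta K\cdot N/\eta=N\beta K/\eta$, while H\"older's inequality applied to the terminal residual together with $\|\mathbf{x}_{K+1}-\mathbf{x}_*\|\le\sqrt{N}D$ from Assumption~\ref{assump_2} yields $\mathbb{E}[\sigma_{K+1}^\top(\mathbf{x}_{K+1}-\mathbf{x}_*)]\le\sqrt{N}D\cdot N/\eta$. Summed with $K\rho$ and the stability piece this is exactly the claimed bound. The main obstacle I anticipate is handling the intermediate residuals $\sigma_k^\top(\mathbf{x}_k-\mathbf{x}_{k+1})$ cleanly: they can be absorbed into the Lipschitz/stability term (by H\"older against $\|\mathbf{x}_k-\mathbf{x}_{k+1}\|_1$ with a rescaling of $L$ to account for $\mathbb{E}[\|\sigma_k\|_\infty]$) or eliminated in expectation by exploiting the i.i.d.\ symmetry of the $\sigma_k$'s, but getting the constants to match $N(\beta K+\sqrt{N}D)/\eta$ exactly rather than a looser form with a stray logarithmic factor is the delicate piece of the bookkeeping.
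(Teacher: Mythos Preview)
Your decomposition into a stability sum and a Be-the-Leader sum, together with the Lipschitz bound on the first, matches the paper exactly. The divergence is entirely in how you handle the BTL sum $\sum_{k=1}^K\bigl(V_k(\mathbf{x}_{k+1})-V_k(\mathbf{x}_*)\bigr)$, and this is where your proposal has a genuine gap.

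Your induction peels off one round at a time using the oracle call with the \emph{round-specific} perturbation $\sigma_k$, which inevitably leaves the cross terms $\sum_{k=2}^K\sigma_k^\top(\mathbf{x}_k-\mathbf{x}_{k+1})$. Neither of your proposed fixes closes this cleanly. The H\"older route gives $\|\sigma_k\|_\infty\|\mathbf{x}_k-\mathbf{x}_{k+1}\|_1$, but $\sigma_k$ and $\mathbf{x}_k$ are correlated, so you cannot simply factor out $\mathbb{E}[\|\sigma_k\|_\infty]$; and even if you could, that expectation scales like $(\log N)/\eta$, which would blow up the constant and, since $\eta$ is later taken of order $K^{-1/2}$, would change the regret order. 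The vague ``i.i.d.\ symmetry'' suggestion does not work either: $\mathbf{x}_k-\mathbf{x}_{k+1}$ depends on both $\sigma_k$ and $\sigma_{k+1}$ in a non-symmetric way, so these terms do not cancel in expectation.

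The idea you are missing is the one the paper uses implicitly: because the adversary is oblivious and the $\sigma_k$ are i.i.d., the expected regret is unchanged if you replace all the $\sigma_k$ by a \emph{single} draw $\sigma$. With a common $\sigma$, the paper then runs the Kalai--Vempala style induction by substituting the comparator: apply the induction hypothesis at horizon $K_0$ with $\mathbf{x}_*$ replaced by $\mathbf{x}_{K_0+2}$, add $V_{K_0+1}(\mathbf{x}_{K_0+2})$, and invoke the oracle optimality of $\mathbf{x}_{K_0+2}$ for $V_{1:K_0+1}(\cdot)-\sigma^\top(\cdot)$ against $\mathbf{x}_*$. This telescopes to a single residual $\sigma^\top(\mathbf{x}_2-\mathbf{x}_*)+K(\rho+\beta\|\sigma\|_1)$ with no intermediate cross terms at all. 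Taking expectations then gives exactly $N(\beta K+\sqrt{N}D)/\eta+\rho K$, with no logarithmic slack. So the ``delicate bookkeeping'' you anticipate is avoided entirely by coupling to one perturbation and using the comparator-substitution form of the BTL lemma rather than the round-peeling form.
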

As Lemma~\ref{lem_9} shows, the upper bound is determined by the stability of decisions, the approximation error by the aggregate approximate optimization oracle, and the diameter of the decision set. Another suggestion from Lemma~\ref{lem_9} is that the number of agents can impact the upper bound, which is evident in the second term on the right-hand side of Eq.~\ref{eq_75}. However, the impact could be reduced by choosing appropriately $\eta$ and $\beta$. We defer the analysis after presenting the main theorem. This term also shows the tradeoff between the stability and the quality of predictions. Although more randomness can better stabilize the decision-making process, the prediction may become worse to cause the worse regret when $\eta$ decreases.
Additionally, if substituting $L\approx\hat{G}(1+\frac{\gamma}{1-\lambda}), \gamma> 0$ (please see Appendix~\ref{app_c}) into the upper bound, we can obtain the quantitative relationship between the regret bound and the network topology, which provides us a way to investigate regrets with different topologies. To get the main theorem, we have to show the first term on the right-hand side in Eq.~\ref{eq_75} to be bounded above. Therefore, we provide details in the Appendix~\ref{app_c}. In this context, we have adopted  $\textnormal{SC-regret}_K$ to show the regret bound, and the application of $\textnormal{DC-regret}_K$ for non-convex losses is left as future work.
We now state the main theorem for DINOCO in the setting of distributed online non-convex learning. 
\begin{thm}\label{theorem_4}
Suppose that the offline optimization oracle used by DINOCO is a ($\rho,\beta$)-aggregate approximate. Also, the random perturbation $\sigma$ is assumed to be sampled from an exponential distribution $\textnormal{exp}(\eta)$ with a constant rate parameter $\eta$. Then DINOCO can achieve the expected regret bound where $\mathbf{x}_*=\underset{\mathbf{x}\in\mathcal{X}^N}{\textnormal{inf}}\sum_{k=1}^KV_k(\mathbf{x})$:
\begin{equation}
\begin{split}
    \mathbb{E}\Bigg[\sum_{k=1}^KV_k(\mathbf{x}_k)-\sum_{k=1}^KV_k(\mathbf{x}_*)\Bigg]&\leq\mathcal{O}\Bigg(K\eta N^{\frac{5}{2}}D\hat{G}^2\Bigg(1+\frac{\gamma}{1-\lambda}\Bigg)^2+\frac{N(\beta K+\sqrt{N}D)}{\eta}+\rho K\\&+K\beta N\hat{G}\Bigg(1+\frac{\gamma}{1-\lambda}\Bigg)\Bigg),
\end{split}
\end{equation}
\end{thm}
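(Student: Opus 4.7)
The plan is to start from the key Lemma~\ref{lem_9}, which already reduces the problem to controlling the one-step stability term $L\sum_{k=1}^K\mathbb{E}[\|\mathbf{x}_{k+1}-\mathbf{x}_k\|_1]$ together with the ``baseline'' perturbation cost $N(\beta K+\sqrt{N}D)/\eta$ and the cumulative oracle approximation error $\rho K$. Since the last two terms match the theorem's statement verbatim, the entire work reduces to (i) bounding the per-round displacement $\mathbb{E}[\|\mathbf{x}_{k+1}-\mathbf{x}_k\|_1]$ and (ii) substituting a workable value of the Lipschitz constant $L$ for the composite objective $V_k$.

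First I would pin down $L$. Because $V_k(\mathbf{x})=\mathcal{F}_k(\mathbf{x})+\tfrac{1}{2\eta}\mathbf{x}^\top(\mathbf{I}_N-\Pi)\mathbf{x}$, the Lipschitz constant of $V_k$ on $\mathcal{X}^N$ (in the $\ell_1$ metric) inherits $\hat G$ from the modified Assumption~\ref{assump_1} plus a consensus contribution governed by the spectral gap $1-\lambda$ and the diameter of $\mathcal{X}^N$; this is exactly the surrogate $L\approx\hat G\bigl(1+\tfrac{\gamma}{1-\lambda}\bigr)$ announced right after Lemma~\ref{lem_9} and detailed in Appendix~\ref{app_c}, so I would invoke that computation directly.

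Second, and this is the main technical step, I would bound the stability $\mathbb{E}[\|\mathbf{x}_{k+1}-\mathbf{x}_k\|_1]$. The standard FTPL route is a coupling argument exploiting the memoryless property of the exponential distribution: writing $\widetilde V_k=\sum_{l=1}^k V_l$, both $\mathbf{x}_k$ and $\mathbf{x}_{k+1}$ are approximate argmins of $\widetilde V_{k-1}-\sigma^\top\mathbf{x}$ and $\widetilde V_k-\sigma^\top\mathbf{x}$ respectively, where the two perturbations can be coupled so that they differ only on a set of probability $O(\eta L)$ per coordinate. On the unchanged event the two oracle outputs coincide up to an $O(\rho+\beta\|\sigma\|_1)$ slack coming from Eq.~\ref{rho_beta}; on the complementary event the displacement is at most the diameter $\sqrt{N}D$. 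Summing coordinatewise over the $N$ agents and taking expectation of $\|\sigma\|_1=N/\eta$ under $\mathrm{exp}(\eta)$, I expect a bound of the form $\mathbb{E}[\|\mathbf{x}_{k+1}-\mathbf{x}_k\|_1]\lesssim \eta N^{3/2}D\,L+\beta N$. Multiplying by $L$ and summing over $k$ yields precisely the two remaining terms in the theorem: $K\eta N^{5/2}D\hat G^2(1+\gamma/(1-\lambda))^2$ and $K\beta N\hat G(1+\gamma/(1-\lambda))$.

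The hardest part will be the coupling step in the stability argument. Two complications make it delicate: the perturbations are drawn coordinatewise (so events on different agents must be handled via a union bound whose price appears as an extra factor of $N$), and the oracle is only $(\rho,\beta)$-approximate, which forces the comparison between consecutive argmins to carry an additive $\beta\|\sigma\|_1$ error that must be handled in expectation using the mean of $\mathrm{exp}(\eta)$. Once the stability bound is established, the remainder is a direct substitution into Lemma~\ref{lem_9}; collecting the four terms, absorbing numerical constants into the $\mathcal{O}(\cdot)$, and identifying $L$ with $\hat G(1+\gamma/(1-\lambda))$ gives the stated regret bound.
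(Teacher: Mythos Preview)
Your high-level structure matches the paper exactly: invoke Lemma~\ref{lem_9}, bound the stability term $L\sum_k\mathbb{E}[\|\mathbf{x}_{k+1}-\mathbf{x}_k\|_1]$, and substitute $L\approx\hat G(1+\gamma/(1-\lambda))$ from Lemma~\ref{lem_8}. The last two terms of the theorem then fall out directly.

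The gap is in the stability step. The coupling you describe---arrange the two exponential perturbations to agree except on an event of probability $O(\eta L)$ per coordinate, and on the agreement event declare that the oracle outputs ``coincide up to slack''---is the classical Kalai--Vempala device for \emph{linear} losses. It works there because adding a new linear loss to the cumulative objective is equivalent to shifting $\sigma$, so after coupling the two objectives become identical. Here $V_k$ is non-convex and cannot be absorbed into a linear shift of $\sigma$; even when the two perturbations are \emph{equal}, $\mathbf{x}_k$ and $\mathbf{x}_{k+1}$ are approximate minimizers of \emph{different} non-convex functions and the argmin can jump discontinuously. There is no high-probability event on which they coincide, and attempting to bound the residual difference puts you back at the quantity you are trying to control.

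What the paper (following Suggala--Netrapalli) actually uses is an \emph{approximate coordinate-wise monotonicity} of the oracle output: Lemma~\ref{lem_10} shows $\mathbf{x}_{k,i}(\sigma+h\mathbf{e}_i)\ge\mathbf{x}_{k,i}(\sigma)-O((\rho+\beta\|\sigma\|_1)/h+\beta)$, and Lemma~\ref{lem_11} combines this with the $L$-Lipschitzness of $V_k$ to compare the $i$-th coordinates of $\mathbf{x}_k(\sigma)$ and $\mathbf{x}_{k+1}(\sigma)$ after shifting $\sigma_i$ by $b_2LN$. The memoryless property is then applied not to couple two independent draws but to relate the conditional laws of $\sigma_i$ and $\sigma_i+b_2LN$ given $\{\sigma_j\}_{j\neq i}$; the diameter $\sqrt N D$ is paid on the tail, and on the bulk the monotonicity sandwiches $\mathbf{x}_{\min,i}$ by $\mathbf{x}_{\max,i}$. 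Summing over coordinates and solving the resulting self-referential inequality gives $\mathbb{E}[\|\mathbf{x}_k-\mathbf{x}_{k+1}\|_1]\lesssim N^{5/2}D\eta L+\rho/L+\beta N+\beta N/(L\eta)$. Note also that the leading $N$-power is $N^{5/2}$, not the $N^{3/2}$ you wrote: the shift size $b_2LN$, the diameter $\sqrt N D$, and the sum over $N$ coordinates together produce $N\cdot\sqrt N\cdot N$; your back-of-envelope bound would not recover the theorem's exponent even if the coupling were valid.
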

The proof is provided in Appendix~\ref{app_c}. We omit some constants in the regret bound for the expected composite regret in Theorem~\ref{theorem_4}. Unlike the previous analysis given for pseudo-convex losses, the regret bound for non-convex losses is more sophisticated. Its growth order relies on some parameters, including the rate parameter $\eta$ and the parameters associated with the offline optimization oracle. Critical constants such as the number of agents and the spectral gap of the network topology also affect the regret bound. The regret bound increases as the number of agents $N$ increases. When $N$ has a similar value as the number of iterations $K$, the regret bound becomes trivial, but in practice, $N$ remains finite and a smaller number compared to $K$. We now look into the impact of topology. With a dense topology, the spectral gap $1-\lambda$ is relatively larger, leading to a smaller regret bound, while a sparse topology yields a larger regret bound. Such a property has been well studied in distributed offline non-convex optimization and still holds for the distributed online non-convex optimization. Next, we present a corollary to show the sublinear regret bound by explicitly setting $\eta$, $\rho$, and $\beta$.

\begin{corol}\label{coro_2}
Suppose that the offline optimization oracle used by DINOCO is ($\rho,\beta$)-aggregate approximate, where $\rho=\mathcal{O}(\frac{1}{\sqrt{K}})$ and $\beta=\mathcal{O}(\frac{1}{K})$. Also, the random perturbation $\sigma$ is assumed to be sampled from an exponential distribution $\textnormal{exp}(\eta)$ with a constant rate parameter $\eta=\mathcal{O}(\frac{1}{\sqrt{K}})$. Then DINOCO can achieve the expected regret bound
\begin{equation}\label{eq_95}
    \mathbb{E}\Bigg[\sum_{k=1}^KV_k(\mathbf{x}_k)-\sum_{k=1}^KV_k(\mathbf{x}_*)\Bigg]\leq\mathcal{O}\Bigg(\sqrt{K}N^{\frac{5}{2}}\Bigg(1+\frac{\gamma}{1-\lambda}\Bigg)^2+\sqrt{K}N^{\frac{3}{2}}+\sqrt{K}\Bigg),
\end{equation}
where $\mathbf{x}_*=\underset{\mathbf{x}\in\mathcal{X}^N}{\textnormal{inf}}\sum_{k=1}^KV_k(\mathbf{x})$.
\end{corol}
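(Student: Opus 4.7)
The plan is to obtain Corollary~\ref{coro_2} as a direct instantiation of Theorem~\ref{theorem_4}: the bound in Theorem~\ref{theorem_4} is already a sum of four terms that are monomial in $\eta$, $\rho$, $\beta$, and the chosen schedule $\eta=\Theta(K^{-1/2})$, $\rho=\Theta(K^{-1/2})$, $\beta=\Theta(K^{-1})$ is designed precisely to balance these terms at order $\sqrt{K}$. So the proof is essentially a substitution followed by a term-by-term asymptotic comparison; no new inequality or probabilistic argument is needed beyond Theorem~\ref{theorem_4}.

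Concretely, I would proceed term by term. First, the ``stability'' term $K\eta N^{5/2}D\hat{G}^2(1+\gamma/(1-\lambda))^2$ becomes, after plugging in $\eta=\Theta(K^{-1/2})$, exactly $\Theta(\sqrt{K}\,N^{5/2}(1+\gamma/(1-\lambda))^2)$, which matches the first summand in Eq.~\ref{eq_95}. Second, the perturbation/diameter term $N(\beta K+\sqrt{N}D)/\eta$ splits as $N\beta K/\eta+N^{3/2}D/\eta$; with $\beta=\Theta(1/K)$ and $\eta=\Theta(1/\sqrt{K})$ the first piece is $\Theta(\sqrt{K}\,N)$ and the second is $\Theta(\sqrt{K}\,N^{3/2})$, so the dominant contribution is $\Theta(\sqrt{K}\,N^{3/2})$, matching the second summand. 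Third, the oracle accuracy term $\rho K$ collapses to $\Theta(\sqrt{K})$, yielding the third summand. Finally, the residual term $K\beta N\hat{G}(1+\gamma/(1-\lambda))$ becomes $\Theta(N(1+\gamma/(1-\lambda)))$, which is $O(1)$ in $K$ and is therefore dominated by any of the three $\sqrt{K}$-order terms and absorbed into the $\mathcal{O}(\cdot)$.

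Summing these four contributions and keeping only the leading dependence in $K$ and $N$ gives exactly Eq.~\ref{eq_95}. The only mildly delicate point is to justify that the last term is genuinely absorbed: since $N$ and $\lambda$ are treated as problem constants while $K\to\infty$, $N(1+\gamma/(1-\lambda))=o(\sqrt{K})$, so it is legitimate to fold it into the $\mathcal{O}(\sqrt{K})$ summand. I would therefore state the schedule, substitute, bound each of the four terms as above, and conclude.

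The main (and really the only) obstacle here is bookkeeping rather than mathematics: one has to be careful to carry the network-dependent factor $(1+\gamma/(1-\lambda))^2$ through the first term without conflating it with the purely $K$-dependent constants, and to verify that no cross term is lost when the oracle parameters and the rate parameter are taken jointly to zero. Since Theorem~\ref{theorem_4} has already done the hard analytic work (the stability bound, the control of $\mathbb{E}\|\mathbf{x}_{k+1}-\mathbf{x}_k\|_1$, and the approximation error from the aggregate oracle), the corollary amounts to a careful but routine calibration of constants.
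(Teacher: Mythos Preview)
Your proposal is correct and mirrors the paper's own treatment: the paper states that Corollary~\ref{coro_2} ``is an immediate consequence of Theorem~\ref{theorem_4} when substituting $\eta$, $\rho$, and $\beta$ into the regret bound,'' and your term-by-term substitution is exactly this immediate consequence, with the same handling of the four summands and the same absorption of the $O(1)$-in-$K$ residual term.
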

The proof of Corollary~\ref{coro_2} is an immediate consequence of Theorem~\ref{theorem_4} when substituting $\eta$, $\rho$, and $\beta$ into the regret bound. It clearly shows that the regret bound for the DINOCO with the expected composite regret is sublinear and closely related to the number of agents and the topology of the network. When $N=1$ such that the term $\frac{\gamma}{1-\lambda}$ disappears and the setting is centralized, the regret bound maintains $\mathcal{O}(\sqrt{K})$. Hence we can recover the regret bound shown in~\citet{suggala2019online}. When $N$ is large, one can observe that the \textit{first} term on the right-hand side of Eq.~\ref{eq_95} dominates the regret bound. 
By specifically defining $\eta$ and $\beta$ as a function of $N$, the order of $N$ in the regret bound can be reduced. Let $\eta = \mathcal{O}(\frac{1}{\sqrt{NK}})$ and $\beta = \mathcal{O}(\frac{1}{NK})$. We then have
\begin{equation}\label{eq_96}
    \mathbb{E}\Bigg[\sum_{k=1}^KV_k(\mathbf{x}_k)-\sum_{k=1}^KV_k(\mathbf{x}_*)\Bigg]\leq\mathcal{O}\Bigg(\sqrt{K}N^{2}\Bigg(1+\frac{\gamma}{1-\lambda}\Bigg)^2+\sqrt{K}N^{2}+\sqrt{K}\Bigg),
\end{equation}
which suggests that the regret bound is approximately reduced by an order of $\sqrt{N}$. In summary, though the extension of online non-convex optimization algorithm from centralized to distributed setting is straightforward, such an extension is still critical as rare results have been reported and the analysis involving diverse topologies is non-trivial.
\section{Conclusions}\label{conclusion}
This paper develops a novel regret that extends the regular static and dynamic regrets to account for the network loss in distributed online learning. With the composite regret, we propose algorithms for different classes of losses. For pseudo-convex losses, the proposed algorithm allows achieving the sublinear regret, which is the first time for the distributed setting. For general non-convex losses, based on the recent development, we have found that no deterministic algorithm can achieve sublinear regret. By introducing an offline optimization oracle, the proposed DINOCO using the composite regret can achieve the best sublinear regret, which sheds light on the regret bound in distributed online non-convex learning. Additionally, for strongly convex and non-strongly convex losses, the regrets are shown to match the best performance of the state-of-the-art.
% \begin{itemize}
%   \item Use the \texttt{\textbackslash documentclass[anon,12pt]\{alt2021\}} option during submission process -- this automatically hides the author names listed under \texttt{\textbackslash altauthor}. Do not include author names in the remainder of the text, and to the extent possible, avoid directly identifying the authors. You should still include all relevant references, including your own, and any other relevant discussion, even if this might allow a reviewer to infer the author identities. Use \texttt{\textbackslash documentclass[final,12pt]\{alt2021\}} only during camera-ready submission.
% \item The \textsf{jmlr} class automatically loads \textsf{natbib}
% and automatically sets the bibliography style, so you don't need to
% use \verb|\bibliographystyle|.
% This sample file has the citations defined in the accompanying
% BibTeX file \texttt{jmlr-sample.bib}. For a parenthetical
% citation use \verb|\citep|. For example: ``ALT 2020 proceedings
% \citep{ALT2020}". For a textual citation use
% \verb|\citet|. For example: ``The proceedings were edited by \citet{ALT2020}''.
% Both commands may take a comma-separated list.

% These commands have optional arguments and have a starred
% version. See the \textsf{natbib} documentation for further
% details.\footnote{Either \texttt{texdoc natbib} or
% \url{http://www.ctan.org/pkg/natbib}}

% \end{itemize}

% Acknowledgments---Will not appear in anonymized version
%\acks{We thank a bunch of people.}

\bibliography{alt2021-sample}
\newpage
\appendix
\section{Analysis for CONGD}\label{app_a}
We provide the detailed analysis for CONGD.
\subsection{CONGD for Pseudo-convex Losses}
Recalling \[\mathbf{y}_{k+1} = 
    \Pi\mathbf{x}_k - \alpha_k\frac{\nabla \mathcal{F}_k(\mathbf{x}_k)}{\|\nabla \mathcal{F}_k(\mathbf{x}_k)\|},\]
we then give the upper bound for the consensus among agents that can be extended from Lemma~\ref{consensus_lem} of the OLGD. Thus, we have
\begin{lem}\label{lem_5}
Let Assumption~\ref{assump_1} hold. The sequence $\{x^i_k\}$ generated by the CONGD satisfies the following relationship
\begin{equation}
    \|x^i_k-\hat{x}_k\|\leq \sum_{s=1}^{k-1}\alpha_s\lambda^{k-1-s}.
\end{equation}
\end{lem}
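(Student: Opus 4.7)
The plan is to track the per-agent deviation from the network average $\hat{x}_k := \frac{1}{N}\sum_{i=1}^N x^i_k$ by setting up a linear recursion governed by the consensus contraction matrix $\Pi - J$, where $J := \tfrac{1}{N}\mathbf{1}\mathbf{1}^\top$. Because each summand of CONGD's driving direction is a normalized gradient with unit norm (using the compact analysis convention $\nabla\mathcal{F}_k/\|\nabla\mathcal{F}_k\|$ introduced earlier in the paper), the per-step contribution to the deviation is effectively of size $\alpha_s$, and only the spectral gap of $\Pi$ governs how quickly earlier contributions decay.

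Concretely, I would define $\mathbf{e}_k := \mathbf{x}_k - \hat{x}_k \mathbf{1} = (I - J)\mathbf{x}_k$. Writing the CONGD update in compact form as $\mathbf{x}_{k+1} = \Pi\mathbf{x}_k - \alpha_k \mathbf{g}_k$ with $\mathbf{g}_k := \nabla\mathcal{F}_k(\mathbf{x}_k)/\|\nabla\mathcal{F}_k(\mathbf{x}_k)\|$, and using $(I - J)\Pi = \Pi - J$ (since $\mathbf{1}^\top \Pi = \mathbf{1}^\top$) together with $J\mathbf{e}_k = 0$, I obtain
$$\mathbf{e}_{k+1} = (\Pi - J)\mathbf{e}_k - \alpha_k (I - J)\mathbf{g}_k.$$
Unrolling from $\mathbf{e}_1 = 0$, which holds because $x^i_1 = 0$ for all $i$, yields
$$\mathbf{e}_k = -\sum_{s=1}^{k-1}\alpha_s (\Pi - J)^{k-1-s}(I - J)\mathbf{g}_s.$$
Taking norms and using (i) $\|(\Pi - J)^m\|_2 \leq \lambda^m$, because $J$ projects out the leading eigenvector of the symmetric doubly stochastic matrix $\Pi$ and the remaining eigenvalues are dominated by $\lambda$ in magnitude; (ii) $\|I - J\|_2 \leq 1$; and (iii) $\|\mathbf{g}_s\|_2 = 1$, the triangle inequality delivers $\|\mathbf{e}_k\|_2 \leq \sum_{s=1}^{k-1}\alpha_s \lambda^{k-1-s}$. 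Since $|x^i_k - \hat{x}_k| \leq \|\mathbf{e}_k\|_2$, the stated bound follows.

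The main obstacle is integrating the Euclidean projection $\mathbf{P}_{\mathcal{X}^i}$ and the fallback branch that activates when $\|\nabla f^i_k(x^i_k)\| = 0$, since neither preserves the clean linear recursion above. The cleanest route is to rely on the common-feasible-set convention $\mathcal{X}^i = \mathcal{X}$ implicit in the paper's setup (so $\hat{x}_k$ lies in $\mathcal{X}$) and invoke nonexpansiveness of the projection: the post-projection deviation is no larger than the pre-projection deviation. Equivalently, the projection slack $\mathbf{p}_k := \mathbf{x}_{k+1} - \mathbf{y}_{k+1}$ can be folded into an effective driving term whose norm remains bounded by that of the normalized gradient, so the recursion and thus the final estimate are preserved. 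The fallback branch sets the corresponding coordinate of $\mathbf{g}_k$ to zero, which only tightens the bound.
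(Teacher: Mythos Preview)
Your proposal is correct and follows essentially the same route the paper intends: the paper does not give an explicit proof of this lemma but states it ``can be extended from Lemma~\ref{consensus_lem}'' (the OCGD consensus bound), which in turn is attributed to Lemma~V.2 of \citet{berahas2018balancing}; that argument is precisely the unroll-and-contract computation you wrote out, with the only change being that the driving term $\nabla\mathcal{F}_k/\|\nabla\mathcal{F}_k\|$ has unit norm rather than norm bounded by $G$, which removes the factor $G$ from the final estimate. Your treatment of the projection step is actually more careful than the paper's own unrolling (the identity $\mathbf{x}_k=\Pi^{k-1}\mathbf{x}_1-\sum_{s=1}^{k-1}\Pi^{k-1-s}\alpha_s\nabla\mathcal{F}_s(\mathbf{x}_s)$ given before Lemma~\ref{consensus_lem} simply ignores the projection), so at the level of rigor the paper adopts your argument is complete.
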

When $\alpha_k=\alpha$ for all $k$, one immediate consequence from Lemma~\ref{lem_5} is that $\|x^i_k-\hat{x}_k\|\leq\frac{\alpha}{1-\lambda}$ such that we have $\frac{1}{\alpha}\|(\mathbf{I}_N-\Pi)\mathbf{x}_k\|\leq \frac{2}{1-\lambda}$. Intuitively, this result can be obtained by extending from Lemma~\ref{coro_1} as in the update law of CONGD it is the normalized gradient instead of the gradient.
We introduce the definition of pseudo-convexity for any differentiable function $f$.
\begin{defn}{\citep{nesterov2013introductory}}\label{pseudo-convex}
If a differentiable function $f(\cdot)$ is Lipschitz continuous and pseudo-convex, then for all $x, y\in\mathbb{R}^d$ with $f(x)\geq f(y)$, we have
\begin{equation}\label{ps-cov}
    f(x)-f(y)\leq M\frac{\nabla f(x)^T(x-y)}{\|\nabla f(x)\|},
\end{equation}
where $M$ is the Lipschitz constant. If $\nabla f(x)=0$, then $\frac{\nabla f(x)}{\|\nabla f(x)\|}=0$.
\end{defn}
According to Definition~\ref{pseudo-convex}, it is immediately obtained that for any constant $M'$ if $M'\geq M$, Eq.~\ref{ps-cov} still holds true. We then have that with the constant step size, if $V_k(\cdot)$ satisfies the pseudo-convexity, 
\begin{equation}
    V_k(\mathbf{x})-V_k(\mathbf{y})\leq C\frac{\nabla V_k(\mathbf{x})^T(\mathbf{x}-\mathbf{y})}{\|\nabla V_k(\mathbf{x})\|},
\end{equation}
for all $\mathbf{x}, \mathbf{y}\in\mathcal{X}$ with $V_k(\mathbf{x})\geq V_k(\mathbf{y})$, for all $k$.
Recall Eq.~\ref{congd_lya} as follows
\begin{equation}
    \mathbf{y}_{k+1} = 
    \mathbf{x}_k - \alpha_k\Bigg(\frac{\nabla \mathcal{F}_k(\mathbf{x}_k)}{\|\nabla \mathcal{F}_k(\mathbf{x}_k)\|}+\hat{\alpha}_k\frac{(\mathbf{I}_N-\Pi)\mathbf{x}_k}{\|(\mathbf{I}_N-\Pi)\mathbf{x}_k\|}\Bigg),
\end{equation}
which is the rewritten update rule for the CONGD. It can be observed that if we directly apply the composite regret to the CONGD, the equivalent gradient here can not match the exact gradient $\nabla V_k(\mathbf{x}_k)$, due to the normalization for $\nabla\mathcal{F}_k(\mathbf{x}_k)$ and $(\mathbf{I}_N-\Pi)\mathbf{x}_k$. Since there is no normalization in the composite regret. However, in this context, we leverage Definition~\ref{pseudo-convex} to separately study for $\mathcal{F}_k(\mathbf{x}_k)$ and $c\mathbf{x}^T(\mathbf{I}_N-\Pi)\mathbf{x}_k$ and then combine them together to form an equivalent composite regret, which surprisingly is exactly the same as $\textnormal{DC-regret}_K$. We will also show that there exists a constant that can lead us to obtain the equivalent Lipschitz constant for $\frac{\nabla \mathcal{F}_k(\mathbf{x}_k)}{\|\nabla \mathcal{F}_k(\mathbf{x}_k)\|}+\hat{\alpha}_k\frac{(\mathbf{I}_N-\Pi)\mathbf{x}_k}{\|(\mathbf{I}_N-\Pi)\mathbf{x}_k\|}$. The following lemma summarizes such a result. 
\begin{lem}\label{lem_6}
Let Assumption~\ref{assump_1} hold. There exists $\zeta> 0$ such that 
\begin{equation}\label{eq_57}
    V_k(\mathbf{x}_k)-V(\mathbf{x}_{*,k})\leq \zeta \Bigg(\frac{\nabla \mathcal{F}_k(\mathbf{x}_k)}{\|\nabla \mathcal{F}_k(\mathbf{x}_k)\|}+\hat{\alpha}_k\frac{(\mathbf{I}_N-\Pi)\mathbf{x}_k}{\|(\mathbf{I}_N-\Pi)\mathbf{x}_k\|}\Bigg)^T(\mathbf{x}_k-\mathbf{x}_{*,k}).
\end{equation}
\end{lem}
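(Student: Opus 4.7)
My plan is to decompose $V_k(\mathbf{x}_k) - V_k(\mathbf{x}_{*,k})$ into its original-loss piece $\mathcal{F}_k(\mathbf{x}_k)-\mathcal{F}_k(\mathbf{x}_{*,k})$ and its network-loss piece, bound each separately, and then glue the two bounds together with a single constant $\zeta$. The critical preliminary observation I will exploit is that $\mathbf{x}_{*,k}=[x_{*,k};\dots;x_{*,k}]$ lies in the consensus subspace, so $\Pi\mathbf{x}_{*,k}=\mathbf{x}_{*,k}$ and therefore $c\,\mathbf{x}_{*,k}^\top(\mathbf{I}_N-\Pi)\mathbf{x}_{*,k}=0$. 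Hence $V_k(\mathbf{x}_{*,k})=\mathcal{F}_k(\mathbf{x}_{*,k})$, and the entire network contribution reduces to $c\,\mathbf{x}_k^\top(\mathbf{I}_N-\Pi)\mathbf{x}_k$.

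For the first summand I will invoke Assumption~\ref{assump_1} together with pseudo-convexity of $\mathcal{F}_k$ and apply Definition~\ref{pseudo-convex} directly to obtain
\[\mathcal{F}_k(\mathbf{x}_k)-\mathcal{F}_k(\mathbf{x}_{*,k}) \;\leq\; G\,\frac{\nabla\mathcal{F}_k(\mathbf{x}_k)^\top(\mathbf{x}_k-\mathbf{x}_{*,k})}{\|\nabla\mathcal{F}_k(\mathbf{x}_k)\|}.\]
For the second summand I will use the spectral properties of $\Pi$ stated after Definition~\ref{lyapunov_regret} to conclude $\mathbf{I}_N-\Pi\succeq 0$, so the quadratic $c\,\mathbf{x}^\top(\mathbf{I}_N-\Pi)\mathbf{x}$ is convex (hence pseudo-convex) with gradient $2c(\mathbf{I}_N-\Pi)\mathbf{x}=\alpha_k^{-1}(\mathbf{I}_N-\Pi)\mathbf{x}$, where I substituted $c=\tfrac{1}{2\alpha_k}$. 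The tangent-line inequality then gives
\[c\,\mathbf{x}_k^\top(\mathbf{I}_N-\Pi)\mathbf{x}_k \;\leq\; \frac{1}{\alpha_k}\,\bigl((\mathbf{I}_N-\Pi)\mathbf{x}_k\bigr)^\top(\mathbf{x}_k-\mathbf{x}_{*,k}),\]
and multiplying and dividing by $\|(\mathbf{I}_N-\Pi)\mathbf{x}_k\|$ produces exactly the factor $\hat{\alpha}_k=\|(\mathbf{I}_N-\Pi)\mathbf{x}_k\|/\alpha_k$ in front of the normalized consensus direction required by the statement.

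Summing the two estimates yields the right-hand side with coefficient $G$ in front of the normalized gradient of $\mathcal{F}_k$ and coefficient $1$ in front of the $\hat{\alpha}_k$-scaled normalized consensus direction; taking $\zeta:=\max\{G,1\}$ (the ``equivalent Lipschitz constant'' foreshadowed in the paragraph preceding the lemma) closes the argument. The step I expect to be most delicate is the bookkeeping around the degenerate and sign-sensitive cases of Definition~\ref{pseudo-convex}: (i) when $\nabla\mathcal{F}_k(\mathbf{x}_k)=0$ or $(\mathbf{I}_N-\Pi)\mathbf{x}_k=0$, the corresponding normalized direction is set to zero by convention and the matching summand on the left must be shown to vanish or be non-positive; and (ii) when $\mathcal{F}_k(\mathbf{x}_k)<\mathcal{F}_k(\mathbf{x}_{*,k})$, which is possible because $\mathbf{x}_{*,k}$ only minimizes $\mathcal{F}_k$ over the consensus subspace rather than all of $\mathcal{X}^N$, the pseudo-convexity bound is trivial and one must verify that the negative first summand does not destroy the combined estimate. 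These checks are routine but must be done explicitly to guarantee that a single $\zeta>0$ works uniformly in $k$.
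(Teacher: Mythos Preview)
Your proposal follows the same decomposition as the paper: split $V_k$ into the original-loss piece $\mathcal{F}_k$ and the network-loss piece $r_k(\mathbf{x})=c\,\mathbf{x}^\top(\mathbf{I}_N-\Pi)\mathbf{x}$, bound each separately, then combine under a single constant $\zeta$. The first piece is handled identically in both, via Definition~\ref{pseudo-convex} with constant $G$.

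The genuine difference is in the consensus piece. The paper applies Definition~\ref{pseudo-convex} to $r_k$ as well, using the Lipschitz bound $\|\nabla r_k(\mathbf{x}_k)\|\le \tfrac{2}{1-\lambda}$ from Lemma~\ref{lem_5}, which produces the coefficient $\tfrac{2}{1-\lambda}$ in front of the \emph{un}-scaled normalized consensus direction. To match the target form (which carries the factor $\hat{\alpha}_k$), the paper then introduces an auxiliary constant $b$ satisfying $\tfrac{2}{1-\lambda}\le b\hat{\alpha}_k$ and sets $\zeta=\max\{G,b\}$. Your route is more direct: you use the plain convexity/tangent-line inequality for $r_k$, which already yields the factor $\hat{\alpha}_k=\|(\mathbf{I}_N-\Pi)\mathbf{x}_k\|/\alpha_k$ with coefficient $1$, so you take $\zeta=\max\{G,1\}$ without any auxiliary constant. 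This buys you a cleaner argument and sidesteps a subtle point in the paper's version, namely that the existence of a finite $b$ with $\tfrac{2}{1-\lambda}\le b\hat{\alpha}_k$ for all $k$ implicitly requires a uniform positive lower bound on $\hat{\alpha}_k$, which is not established.

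Both arguments share the sign concern you raise in your point (ii): when passing from mixed coefficients $(G,1)$ (or $(G,\tfrac{2}{1-\lambda})$) to the common coefficient $\zeta$, one needs the two inner products to be nonnegative. The consensus inner product is indeed nonnegative because $(\mathbf{I}_N-\Pi)\mathbf{x}_{*,k}=0$ forces it to equal $\mathbf{x}_k^\top(\mathbf{I}_N-\Pi)\mathbf{x}_k\ge 0$; the paper does not verify this explicitly, but your observation about $\mathbf{x}_{*,k}$ lying in the consensus subspace covers it. The $\mathcal{F}_k$ inner product is not obviously nonnegative in general, and the paper does not address this either; your flagging it is appropriate, though note that if $G\ge 1$ (which one may assume without loss of generality) your choice $\zeta=\max\{G,1\}=G$ makes the first comparison an equality and the issue disappears.
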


Intuitively, Lemma~\ref{lem_6} states that $V_k(\mathbf{x})$ for all $k$ satisfies the pseudo-convexity if $\mathcal{F}_k(\mathbf{x})$ is pseudo-convex. This is a key technical lemma to facilitate the analysis for the CONGD for pseudo-convex losses. We now present the proof for Theorem~\ref{theorem_3} in the following.
\section{Proof of Theorem~\ref{theorem_3}}\label{app_b}
\subsection{Proof for Lemma~\ref{lem_6}}
\begin{proof}
Since $\mathcal{F}_k(\mathbf{x}_k)$ is differentiable and pseudo-convex, applying Definition~\ref{pseudo-convex} to the pair of $\mathbf{x}_k$ and $\mathbf{x}_{*,k}$, the following relationship can be obtained
\begin{equation}\label{eq_58}
    \mathcal{F}_k(\mathbf{x}_k)-\mathcal{F}_k(\mathbf{x}_{*,k})\leq G\frac{\nabla \mathcal{F}_k(\mathbf{x}_k)^T(\mathbf{x}_k-\mathbf{x}_{*,k})}{\|\nabla \mathcal{F}(\mathbf{x}_k)\|}.
\end{equation}
Let $r(\mathbf{x})=c\mathbf{x}^T(\mathbf{I}_N-\Pi)\mathbf{x}$, which is convex function at any $\mathbf{x}$ because $\mathbf{I}_N-\Pi$ is positive semidefinite. Hence, analogously, applying Definition~\ref{pseudo-convex} to $r(\mathbf{x})$ yields
\begin{equation}\label{eq_59}
    r_k(\mathbf{x}_k)-r_k(\mathbf{x}_{*,k})\leq \frac{2}{1-\lambda}\frac{\mathbf{x}_k^T(\mathbf{I}_N-\Pi)^T(\mathbf{x}_k-\mathbf{x}_{*,k})}{\|(\mathbf{I}_N-\Pi)\mathbf{x}_k\|},
\end{equation}
with $c=\frac{1}{2\alpha}$ and $\alpha_k=\alpha, \forall k$. 
% As $\hat{\alpha}_k=\hat{\alpha}=\frac{\|(\mathbf{I}_N-\Pi)\mathbf{x}_k\|}{\alpha}$ and based on Lemma~\ref{lem_5}, $\hat{\alpha}\leq\frac{2}{1-\lambda}$.

Since $V_k(\mathbf{x}_k)-V_k(\mathbf{x}_{*,k}):=\mathcal{F}_k(\mathbf{x}_k)+r_k(\mathbf{x}_k)-(\mathcal{F}_k(\mathbf{x}_{*,k})+r_k(\mathbf{x}_{*,k}))$, the following inequality is obtained
\begin{equation}
    V_k(\mathbf{x}_k)-V_k(\mathbf{x}_{*,k})\leq G\frac{\nabla \mathcal{F}_k(\mathbf{x}_k)^T(\mathbf{x}_k-\mathbf{x}_{*,k})}{\|\nabla \mathcal{F}(\mathbf{x}_k)\|}+\frac{2}{1-\lambda}\frac{\mathbf{x_k}^T(\mathbf{I}_N-\Pi)^T(\mathbf{x_k}-\mathbf{x}_{*,k})}{\|(\mathbf{I}_N-\Pi)\mathbf{x}_k\|}.
\end{equation}
However, to arrive at the conclusion, we need to find out a constant $\zeta> 0$ that enables the following inequality to hold
\begin{equation}\label{eq_61}
    \begin{split}
        G\frac{\nabla \mathcal{F}_k(\mathbf{x}_k)^T(\mathbf{x}_k-\mathbf{x}_{*,k})}{\|\nabla \mathcal{F}(\mathbf{x}_k)\|}&+\frac{2}{1-\lambda}\frac{\mathbf{x}_k^T(\mathbf{I}_N-\Pi)^T(\mathbf{x}_k-\mathbf{x}_{*,k})}{\|(\mathbf{I}_N-\Pi)\mathbf{x}_k\|}\\&\leq\zeta \Bigg(\frac{\nabla \mathcal{F}_k(\mathbf{x}_k)}{\|\nabla \mathcal{F}_k(\mathbf{x}_k)\|}+\hat{\alpha}\frac{(\mathbf{I}_N-\Pi)\mathbf{x}_k}{\|(\mathbf{I}_N-\Pi)\mathbf{x}_k\|}\Bigg)^T(\mathbf{x}_k-\mathbf{x}_{*,k}).
    \end{split}
\end{equation}
Define $\zeta=\textnormal{max}\{G,b\}$, where $b$ satisfies $\frac{2}{1-\lambda}\leq b\hat{\alpha}$ and $0< b<\infty$. We justify the validity of $\frac{2}{1-\lambda}\leq b\hat{\alpha}$. Since the upper bound of $\hat{\alpha}$ is $\frac{2}{1-\lambda}<\infty $, there exists $0< b<\infty$ enabling $\frac{2}{1-\lambda}\leq b\hat{\alpha}$ to hold true.
We next show that when $\zeta$ satisfies the above condition, Eq.~\ref{eq_57} holds.

If $\zeta= G$, then we have
\begin{equation}
    G\frac{\nabla \mathcal{F}_k(\mathbf{x}_k)^T(\mathbf{x}_k-\mathbf{x}_{*,k})}{\|\nabla \mathcal{F}(\mathbf{x}_k)\|}\leq\zeta\frac{\nabla \mathcal{F}_k(\mathbf{x}_k)^T(\mathbf{x}_k-\mathbf{x}_{*,k})}{\|\nabla \mathcal{F}_k(\mathbf{x}_k)\|}.
\end{equation}
To guarantee Eq.~\ref{eq_61} to hold, we require
\begin{equation}
    \frac{2}{1-\lambda}\frac{\mathbf{x}_k^T(\mathbf{I}_N-\Pi)^T(\mathbf{x}_k-\mathbf{x}_{*,k})}{\|(\mathbf{I}_N-\Pi)\mathbf{x}_k\|}\leq b\hat{\alpha}\frac{\mathbf{x}_k^T(\mathbf{I}_N-\Pi)^T(\mathbf{x}_k-\mathbf{x}_{*,k})}{\|(\mathbf{I}_N-\Pi)\mathbf{x}_k\|}.
\end{equation}
Since now $b\leq G$, we have $\frac{2}{1-\lambda}\leq b\hat{\alpha}\leq G\hat{\alpha}$. Thus, Eq.~\ref{eq_61} holds. Similarly, when $\zeta=b$, we arrive at the same conclusion.
\end{proof}
\subsection{Proof for Theorem~\ref{theorem_3}}
\begin{proof}
Let $\mathbf{x}_{*,k}, k=1,2,...,K$ be the sequence of optimal solutions satisfying Eq.~\ref{eq_1}. Then we have:
\begin{equation}
\begin{split}
    \|\mathbf{x}_{k+1}-\mathbf{x}_{*,k+1}\|^2&=\|\mathbf{x}_{k+1}-\mathbf{x}_{*,k}\|^2+\|\mathbf{x}_{*,k}-\mathbf{x}_{*,k+1}\|^2\\&+2(\mathbf{x}_{k+1}-\mathbf{x}_{*,k})^T(\mathbf{x}_{*,k}-\mathbf{x}_{*,k+1})\\&\leq\Bigg\|\mathbf{P}_{\mathcal{X}^N}\Bigg(\mathbf{x}_k-\alpha\Bigg(\frac{\nabla\mathcal{F}(\mathbf{x}_k)}{\|\nabla\mathcal{F}(\mathbf{x}_k)\|}+\hat{\alpha}\frac{(\mathbf{I}_N-\Pi)\mathbf{x}_k}{\|(\mathbf{I}_N-\Pi)\mathbf{x}_k\|}\Bigg)\Bigg)-\mathbf{x}_{*,k}\Bigg\|^2\\&+3\sqrt{N}D\|\mathbf{x}_{*,k}-\mathbf{x}_{*,k+1}\|\\&\leq\Bigg\|\mathbf{x}_k-\alpha\Bigg(\frac{\nabla\mathcal{F}(\mathbf{x}_k)}{\|\nabla\mathcal{F}(\mathbf{x}_k)\|}+\hat{\alpha}\frac{(\mathbf{I}_N-\Pi)\mathbf{x}_k}{\|(\mathbf{I}_N-\Pi)\mathbf{x}_k\|}\Bigg)-\mathbf{x}_{*,k}\Bigg\|^2\\&+3\sqrt{N}D\|\mathbf{x}_{*,k}-\mathbf{x}_{*,k+1}\|\\&=\|\mathbf{x}_{k}-\mathbf{x}_{*,k}\|^2+\alpha^2\Bigg\|\frac{\nabla\mathcal{F}(\mathbf{x}_k)}{\|\nabla\mathcal{F}(\mathbf{x}_k)\|}+\hat{\alpha}\frac{(\mathbf{I}_N-\Pi)\mathbf{x}_k}{\|(\mathbf{I}_N-\Pi)\mathbf{x}_k\|}\Bigg\|^2\\&-2\alpha\Bigg(\frac{\nabla\mathcal{F}(\mathbf{x}_k)}{\|\nabla\mathcal{F}(\mathbf{x}_k)\|}+\hat{\alpha}\frac{(\mathbf{I}_N-\Pi)\mathbf{x}_k}{\|(\mathbf{I}_N-\Pi)\mathbf{x}_k\|}\Bigg)^T(\mathbf{x}_k-\mathbf{x}_{*,k})\\&+3\sqrt{N}D\|\mathbf{x}_{*,k}-\mathbf{x}_{*,k+1}\|.
\end{split}
\end{equation}
By rearranging terms and multiplying $\zeta$ on both sides we have
\begin{equation}
\begin{split}
    &\zeta\Bigg(\frac{\nabla\mathcal{F}(\mathbf{x}_k)}{\|\nabla\mathcal{F}(\mathbf{x}_k)\|}+\hat{\alpha}\frac{(\mathbf{I}_N-\Pi)\mathbf{x}_k}{\|(\mathbf{I}_N-\Pi)\mathbf{x}_k\|}\Bigg)^T(\mathbf{x}_k-\mathbf{x}_{*,k})\\&\leq\frac{\zeta}{2\alpha}\Bigg(\|\mathbf{x}_{k}-\mathbf{x}_{*,k}\|^2-\|\mathbf{x}_{k+1}-\mathbf{x}_{*,k+1}\|^2+\alpha^2\Bigg\|\frac{\nabla\mathcal{F}(\mathbf{x}_k)}{\|\nabla\mathcal{F}(\mathbf{x}_k)\|}+\hat{\alpha}\frac{(\mathbf{I}_N-\Pi)\mathbf{x}_k}{\|(\mathbf{I}_N-\Pi)\mathbf{x}_k\|}\Bigg\|^2\Bigg)\\&+\frac{3\sqrt{N}D\zeta}{2\alpha}\|\mathbf{x}_{*,k}-\mathbf{x}_{*,k+1}\|.
\end{split}
\end{equation}
Summing the above inequality from $k=1,2,...,K$, we have
\begin{equation}
\begin{split}
    &\sum_{k=1}^K(V_k(\mathbf{x}_k)-V_k(\mathbf{x}_{*,k}))\leq \frac{\zeta}{2\alpha}(\|\mathbf{x}_1-\mathbf{x}_{*,1}\|^2-\|\mathbf{x}_{K+1}-\mathbf{x}_{*,K+1}\|^2)\\&+\frac{\zeta\alpha}{2}\sum_{k=1}^K\Bigg\|\frac{\nabla\mathcal{F}(\mathbf{x}_k)}{\|\nabla\mathcal{F}(\mathbf{x}_k)\|}+\hat{\alpha}\frac{(\mathbf{I}_N-\Pi)\mathbf{x}_k}{\|(\mathbf{I}_N-\Pi)\mathbf{x}_k\|}\Bigg\|^2\\&+\frac{3\sqrt{N}D\zeta}{2\alpha}\sum^K_{k=1}\|\mathbf{x}_{*,k}-\mathbf{x}_{*,k+1}\|\\&\leq\frac{\zeta N D^2}{2\alpha}+\frac{\zeta\alpha}{2}K\Bigg(1+\frac{2}{1-\lambda}\Bigg)^2+\frac{3\sqrt{N}D\zeta}{2\alpha}P_K
\end{split}
\end{equation}
The first inequality follows from Lemma~\ref{lem_6}. The first term of the second inequality follows from $-\|\mathbf{x}_{K+1}-\mathbf{x}_{*,K+1}\|^2\leq 0$, the second term follows from $\Bigg\|\frac{\nabla\mathcal{F}(\mathbf{x}_k)}{\|\nabla\mathcal{F}(\mathbf{x}_k)\|}+\hat{\alpha}\frac{(\mathbf{I}_N-\Pi)\mathbf{x}_k}{\|(\mathbf{I}_N-\Pi)\mathbf{x}_k\|}\Bigg\|\leq 1+\frac{2}{1-\lambda}$, and the third term follows from Eq.~\ref{eq_1}. Define the step size \[\alpha=\sqrt{\frac{\sqrt{N}D(3P_K+\sqrt{N}D)}{K}}\] such that
\begin{equation}
\begin{split}
    &\sum_{k=1}^K(V_k(\mathbf{x}_k)-V_k(\mathbf{x}_{*,k}))\leq\frac{\zeta\sqrt{K}}{\sqrt{\sqrt{N}D(3P_K+\sqrt{N}D)}}+\Bigg((ND^2+3\sqrt{N}DP_K)\Bigg(1+\frac{1}{1-\lambda}\Bigg)\Bigg)\\&=\zeta\Bigg(1+\frac{1}{1-\lambda}\Bigg)\sqrt{K(ND^2+3\sqrt{N}DP_K)},
\end{split}
\end{equation}
which completes the proof.
\end{proof}

\section{Proof of Theorem~\ref{theorem_4}}\label{app_c}
A few technical lemmas are presented in the following to help characterize Theorem~\ref{theorem_4}.
First, a lemma is presented to introduce some nice properties of an exponential distribution. 
\begin{lem}\citep{agarwal2018learning}\label{lem_7}
Let $X$ be an exponential random variable with a rate parameter $\eta$. Then the following properties hold: a) for any $o\in\mathbb{R}$, $\mathbb{P}(X\geq o)=\textnormal{exp}(-\eta o)$. b) For any $o,p\in\mathbb{R}, \mathbb{P}(X\geq p+o|X\geq p)=\mathbb{P}(X\geq o)$.
\end{lem}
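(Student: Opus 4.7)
The plan is to prove both statements directly from the definition of an exponential random variable, since Lemma~\ref{lem_7} just records the tail formula and the memoryless property. Throughout I will interpret the statement in the standard way, namely that $X$ is supported on $[0,\infty)$ with density $f_X(x)=\eta e^{-\eta x}\mathbf{1}\{x\ge 0\}$, and that the claimed identities are to be read for the nontrivial range $o,p\ge 0$ (outside this range the tail probability is trivially $1$ and the conditioning event is measure-preserving, so the identities degenerate).

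For part (a), the plan is to simply evaluate the tail probability as an integral of the density. Writing
\begin{equation*}
\mathbb{P}(X\ge o)=\int_o^{\infty}\eta e^{-\eta x}\,dx
\end{equation*}
and computing the antiderivative yields $-e^{-\eta x}\big|_o^{\infty}=e^{-\eta o}$, which is the claimed identity. No further estimates are needed; this is a single line of calculus.

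For part (b), the plan is to reduce the conditional probability to two tail probabilities and then apply part (a) twice. Starting from the definition of conditional probability and observing that for $o\ge 0$ one has the inclusion $\{X\ge p+o\}\subseteq\{X\ge p\}$, I would write
\begin{equation*}
\mathbb{P}(X\ge p+o\mid X\ge p)=\frac{\mathbb{P}(X\ge p+o,\,X\ge p)}{\mathbb{P}(X\ge p)}=\frac{\mathbb{P}(X\ge p+o)}{\mathbb{P}(X\ge p)}.
\end{equation*}
Substituting the formula from part (a) into numerator and denominator gives $e^{-\eta(p+o)}/e^{-\eta p}=e^{-\eta o}$, and a final application of part (a) in reverse identifies this with $\mathbb{P}(X\ge o)$. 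This is the classical memoryless property of the exponential distribution.

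Honestly, there is no meaningful obstacle here: the lemma is a textbook fact cited from \citet{agarwal2018learning} and used later only to justify properties of the random perturbation $\sigma$ in DINOCO. The only thing to be careful about is the implicit domain restriction $o,p\ge 0$; if I were writing this out formally I would state that restriction explicitly, since the displayed identities in the lemma as written are not literally correct for negative arguments. Beyond that clarification, the proof is two short calculations.
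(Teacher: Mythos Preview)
Your proof is correct and is the standard textbook derivation of the tail formula and memoryless property. The paper itself does not prove Lemma~\ref{lem_7} at all---it is stated as a cited fact from \citet{agarwal2018learning} and used without argument---so there is no approach to compare against; your remark about the implicit restriction $o,p\ge 0$ is a worthwhile clarification that the paper's statement glosses over.
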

Part (b) in Lemma~\ref{lem_7} is known as \textit{memorylessness} of an exponential distribution. Due to the combined loss between $\mathcal{F}$ and $r$ in $V$, we need to derive the equivalent Lipschitz constant for $V$. One simple way is to directly impose an assumption for $V$ by introducing a positive constant, but in a more explicit sense, we present a close form involving the Lipschitz constant of $\mathcal{F}$.
\begin{lem}\label{lem_8}
There exists a constant $\gamma> 0$ such that for all $\mathbf{y},\mathbf{x}\in\mathcal{X}^N$
\begin{equation}\label{eq_74}
    |V(\mathbf{y})-V(\mathbf{x})|\leq L\|\mathbf{y}-\mathbf{x}\|_1,
\end{equation}
where $L\approx\hat{G}\Bigg(1+\frac{\gamma}{1-\lambda}\Bigg)$.
\end{lem}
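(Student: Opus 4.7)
The plan is to split $V(\mathbf{y})-V(\mathbf{x})$ into the original loss piece and the quadratic network loss piece, and bound each separately in the $\ell_1$ norm. Concretely, write
\begin{equation*}
V(\mathbf{y})-V(\mathbf{x}) = \bigl[\mathcal{F}(\mathbf{y})-\mathcal{F}(\mathbf{x})\bigr] + c\bigl[\mathbf{y}^\top(\mathbf{I}_N-\Pi)\mathbf{y} - \mathbf{x}^\top(\mathbf{I}_N-\Pi)\mathbf{x}\bigr].
\end{equation*}
The first bracket is handled immediately by the $\ell_1$-modified version of Assumption~\ref{assump_1}, giving $\hat{G}\|\mathbf{y}-\mathbf{x}\|_1$, so this contributes the standalone $\hat{G}$ inside the $(1+\gamma/(1-\lambda))$ factor.

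For the quadratic piece, I would use the polarization identity $\mathbf{y}^\top A\mathbf{y}-\mathbf{x}^\top A\mathbf{x} = (\mathbf{y}-\mathbf{x})^\top A(\mathbf{y}+\mathbf{x})$ with $A=\mathbf{I}_N-\Pi$ (symmetric), then apply Hölder to get $|(\mathbf{y}-\mathbf{x})^\top A(\mathbf{y}+\mathbf{x})| \le \|\mathbf{y}-\mathbf{x}\|_1\cdot \|A(\mathbf{y}+\mathbf{x})\|_\infty$. Bounding $\|A(\mathbf{y}+\mathbf{x})\|_\infty$ is where the spectral-gap $1-\lambda$ has to enter: I would decompose $\mathbf{y}+\mathbf{x}$ into its projection onto the consensus subspace (on which $A$ vanishes) and the orthogonal complement, bound the orthogonal component using the compactness from Assumption~\ref{assump_2} ($\|\mathbf{y}+\mathbf{x}\|_\infty\le 2D$), and use $\|A\|_2$ together with the fact that the relevant operator norm of the resolvent-like quantity controlling $c$ scales with $1/(1-\lambda)$.

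The constant $c$ in DINOCO is tied to the rate parameter (it is $\tfrac{1}{2\eta}$ in Line~4 of Algorithm~\ref{ftpl2_alg}), so once $c$ is pushed inside, the combined factor multiplying $\|\mathbf{y}-\mathbf{x}\|_1$ naturally takes the form $\hat{G}\,\gamma/(1-\lambda)$ for a constant $\gamma$ that absorbs $D$, $c$, and the spectral-norm bound on $A$. Summing the $\mathcal{F}$ and network contributions then gives $L \approx \hat{G}(1+\gamma/(1-\lambda))$, which is exactly Eq.~\ref{eq_74}.

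The main obstacle is making the $1/(1-\lambda)$ dependence emerge cleanly: a naive bound $\|\mathbf{I}_N-\Pi\|_\infty\le 2$ kills the spectral gap, so the argument has to carefully exploit that $\mathbf{I}_N-\Pi$ acts trivially on the consensus direction while the $c$ scaling, which encodes the strength of the soft consensus constraint, is the object that carries the $1/(1-\lambda)$ factor through to $L$. I would absorb all topology-independent constants into $\gamma$ at the end, which justifies the ``$\approx$'' in the statement.
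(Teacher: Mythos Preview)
The paper does not actually prove this lemma. It states Lemma~\ref{lem_8} and then, in lieu of a proof, remarks that $L$ ``has a close form as the upper bound of $\nabla V$ for the CONGD'' (i.e., the $G(1+2/(1-\lambda))$ bound of Corollary~\ref{coro_1}, which was obtained \emph{along the algorithm's iterates}, not for arbitrary points) and that ``given a sufficiently large $\gamma$, we can at least find an $L$ to satisfy Eq.~\ref{eq_74}.'' In other words, the paper's argument is: $V$ is Lipschitz on a compact domain, call the constant $L$, and then \emph{define} $\gamma := (L/\hat{G}-1)(1-\lambda)$ so that $L = \hat{G}(1+\gamma/(1-\lambda))$. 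The ``$\approx$'' is a signal that this is a parameterization by analogy, not a derivation.

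Your decomposition and the Hölder step are fine for establishing that \emph{some} Lipschitz constant exists, and that part matches the (implicit) content of the paper. Where your proposal goes astray is in trying to make the $1/(1-\lambda)$ dependence emerge from the spectral structure of $\mathbf{I}_N-\Pi$. Note that $1-\lambda$ is the \emph{smallest} nonzero eigenvalue of $\mathbf{I}_N-\Pi$; the Lipschitz constant of the quadratic $\mathbf{x}\mapsto \mathbf{x}^\top(\mathbf{I}_N-\Pi)\mathbf{x}$ on a bounded set scales with the \emph{largest} eigenvalue (at most $2$), so $1-\lambda$ cannot appear in the denominator from that route. Nor does $c=1/(2\eta)$ carry any topology dependence, contrary to your suggestion. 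Your observation that ``a naive bound kills the spectral gap'' is exactly right, and is not an obstacle to be overcome: the $1/(1-\lambda)$ in the statement is not something that falls out of bounding $|V(\mathbf{y})-V(\mathbf{x})|$; it is imposed by the authors after the fact through the choice of $\gamma$. So your final move (absorb everything into $\gamma$) is precisely what the paper does, and the preceding spectral discussion in your proposal is unnecessary.
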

One observation from Lemma~\ref{lem_8} is that the Lipschitz constant $L$ has a close form as the upper bound of $\nabla V$ for the CONGD, depending on the step size $\alpha$. In DINOCO, as discussed before, $\eta$ in fact plays a similar role as a step size in gradient-based algorithm, though it is a rate parameter for an exponential distribution. That is also the reason why an approximation is given for $L$ instead of an accurate expression. Since using the DINOCO in this context allows us to predict the decision without access to the gradients and even in the centralized setting, the direct application of FTPL to a convex loss has not yet reported~\citep{agarwal2018learning}. Given a sufficiently large $\gamma$, we can at least find an $L$ to satisfy Eq.~\ref{eq_74}.
\subsection{Proof for Lemma~\ref{lem_9}}
\begin{proof}
Recalling the static composite regret yields
\[\sum_{k=1}^KV_k(\mathbf{x}_k)-\sum_{k=1}^KV_k(\mathbf{x}_*)=\sum_{k=1}^K[V_k(\mathbf{x}_k)-V_k(\mathbf{x}_{k+1})]+\sum_{k=1}^K[V_k(\mathbf{x}_{k+1})-V_k(\mathbf{x}_*)].\]
Applying the Lispchitz continuity results in
\begin{equation}\label{eq_76}
    \sum_{k=1}^K[V_k(\mathbf{x}_k)-V_k(\mathbf{x}_*)]\leq L\sum_{k=1}^K\|\mathbf{x}_k-\mathbf{x}_{k+1}\|_1+\sum_{k=1}^K[V_k(\mathbf{x}_{k+1})-V_k(\mathbf{x}_*)].
\end{equation}
To arrive at Eq.~\ref{eq_75}, we need to upper bound the second term on the right hand side in Eq.~\ref{eq_76}. We next show by induction that $\sum_{k=1}^K[V_k(\mathbf{x}_{k+1})-V_k(\mathbf{x}_*)]\leq K(\rho+\beta\|\sigma\|_1) + \sigma^T(\mathbf{x}_2-\mathbf{x}_*)$.

Start with the base scenario where $K=1$. As $\mathbf{x}_2$ is the aggregate approximate optimizer of $V_1(\mathbf{x})-\sigma^T\mathbf{x}$, we have
\begin{equation}
    V_1(\mathbf{x}_2)-\sigma^T\mathbf{x}_2\leq\underset{\mathbf{x}\in\mathbf{X}^N}{\textnormal{inf}}V_1(\mathbf{x})-\sigma^T\mathbf{x}+(\rho+\beta\|\sigma\|_1)\leq V_1(\mathbf{x}_*)-\sigma^T\mathbf{x}_*+(\rho+\beta\|\sigma\|_1).
\end{equation}
The second inequality holds true for any $\mathbf{x}_*\in\mathcal{X}^N$. Consequently, we have $V_1(\mathbf{x}_2)-V_1(\mathbf{x}_*)\leq\sigma^T(\mathbf{x}_2-\mathbf{x}_*)$.

We now proceed to prove the induction step. Suppose that the claim holds for all $K\leq K_0$. We next show that it also holds for $K_0+1$.
\begin{equation}
    \begin{split}
        \sum^{K_0+1}_{k=1}V_k(\mathbf{x}_{k+1})&\leq\Bigg[\sum_{k=1}^{K_0}V_k(\mathbf{x}_{K_0+2})+\sigma^T(\mathbf{x}_2-\mathbf{x}_{K_0+2})+(\rho+\beta\|\sigma\|_1)K_0\Bigg]+V_{K_0+1}(\mathbf{x}_{K_0+2})\\&=\Bigg[\sum_{k=1}^{K_0+1}V_k(\mathbf{x}_{K_0+2})-\sigma^T\mathbf{x}_{K_0+2}\Bigg]+\sigma^T\mathbf{x}_2+(\rho+\beta\|\sigma\|_1)K_0\\&\leq\sum_{k=1}^{K_0+1}V_k(\mathbf{x}_*)+\sigma^T(\mathbf{x}_2-\mathbf{x}_*)+(\rho+\beta\|\sigma\|_1)(K_0+1).
    \end{split}
\end{equation}
The first inequality follows from that the claim holds for $K\leq K_0$ and the second inequality follows from the approximate optimality of $\mathbf{x}_{K_0+2}$. Therefore, we can get the upper bound for the expected composite regret
\begin{equation}
\begin{split}
    \mathbb{E}\Bigg[\sum_{k=1}^KV_k(\mathbf{x}_k)-\sum_{k=1}^KV_k(\mathbf{x}_*)\Bigg]&\leq L\sum_{k=1}^K\mathbb{E}[\|\mathbf{x}_{k+1}-\mathbf{x}_k\|_1]+\mathbb{E}[K(\rho+\beta\|\sigma\|_1) + \sigma^T(\mathbf{x}_2-\mathbf{x}_*)]\\&\leq L\sum_{k=1}^K\mathbb{E}[\|\mathbf{x}_{k+1}-\mathbf{x}_k\|_1]+\rho K+(\beta K+\sqrt{N}D)\Bigg(\sum_{i=1}^N\mathbb{E}[\sigma_i]\Bigg)\\&=L\sum_{k=1}^K\mathbb{E}[\|\mathbf{x}_{k+1}-\mathbf{x}_k\|_1]+\rho K+\frac{(\beta K+\sqrt{N}D)N}{\eta}.
\end{split}
\end{equation}
The second inequality follows from Assumption~\ref{assump_2} while the last equality follows from the property of the exponential distribution, $\mathbb{E}[\sigma_i]=\frac{1}{\eta_i}$. Then the proof is completed.
\end{proof}
\subsection{Proof for Lemma~\ref{lem_11}}
We introduce in this context two auxiliary technical lemmas and focus on the proof of the second one.
\begin{lem}\label{lem_10}
Let $\mathbf{x}_k(\sigma)$ be the prediction of the DINOCO in iteration $k$, with a random perturbation $\sigma$. Let $\mathbf{e}_i$ denote the $i$-th standard basis vector and $\mathbf{x}_{k,i}$ denote the $i$-th coordinate of $\mathbf{x}_k$. Then the following inequality holds for any $h> 0$
\begin{equation}
    \mathbf{x}_{k,i}(\sigma+h\mathbf{e}_i)\geq \mathbf{x}_{k,i}(\sigma) - \frac{2(\rho+\beta\|\sigma\|_1)}{h}-\beta.
\end{equation}
\end{lem}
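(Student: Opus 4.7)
The plan is to exploit the defining inequality of the $(\rho,\beta)$-aggregate approximate optimization oracle twice, once at each perturbation, and then add the two inequalities so that the cumulative loss terms cancel out and only the linear perturbation terms remain. Write $F_k(\mathbf{x})=\sum_{l=1}^{k-1}V_l(\mathbf{x})$ for the cumulative loss that the oracle sees at round $k$, so $\mathbf{x}_k(\sigma)$ is the $(\rho,\beta)$-approximate minimizer of $F_k(\mathbf{x})-\sigma^\top\mathbf{x}$ over $\mathcal{X}^N$.

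First, I would apply the oracle inequality at perturbation $\sigma$, using $\mathbf{x}_k(\sigma+h\mathbf{e}_i)$ as the comparator, to get
\[
F_k(\mathbf{x}_k(\sigma))-\sigma^\top\mathbf{x}_k(\sigma)\;\leq\;F_k(\mathbf{x}_k(\sigma+h\mathbf{e}_i))-\sigma^\top\mathbf{x}_k(\sigma+h\mathbf{e}_i)+\bigl(\rho+\beta\|\sigma\|_1\bigr),
\]
and symmetrically at perturbation $\sigma+h\mathbf{e}_i$, using $\mathbf{x}_k(\sigma)$ as the comparator, to get
\[
F_k(\mathbf{x}_k(\sigma+h\mathbf{e}_i))-(\sigma+h\mathbf{e}_i)^\top\mathbf{x}_k(\sigma+h\mathbf{e}_i)\;\leq\;F_k(\mathbf{x}_k(\sigma))-(\sigma+h\mathbf{e}_i)^\top\mathbf{x}_k(\sigma)+\bigl(\rho+\beta\|\sigma+h\mathbf{e}_i\|_1\bigr).
\]

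Next, I would add the two displays; the $F_k$ terms on both sides cancel exactly, leaving a statement purely in terms of the linear perturbations. Collecting the $\mathbf{x}_k$ terms on the left-hand side yields
\[
\bigl(\sigma-(\sigma+h\mathbf{e}_i)\bigr)^\top\bigl(\mathbf{x}_k(\sigma+h\mathbf{e}_i)-\mathbf{x}_k(\sigma)\bigr)\;\leq\;2\rho+\beta\|\sigma\|_1+\beta\|\sigma+h\mathbf{e}_i\|_1,
\]
which simplifies to $h\bigl(\mathbf{x}_{k,i}(\sigma)-\mathbf{x}_{k,i}(\sigma+h\mathbf{e}_i)\bigr)\leq 2\rho+\beta\|\sigma\|_1+\beta\|\sigma+h\mathbf{e}_i\|_1$. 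Finally, because $\sigma$ is sampled coordinate-wise from an exponential distribution its entries are nonnegative, so $\|\sigma+h\mathbf{e}_i\|_1=\|\sigma\|_1+h$; dividing through by $h>0$ and rearranging produces the claimed bound $\mathbf{x}_{k,i}(\sigma+h\mathbf{e}_i)\geq\mathbf{x}_{k,i}(\sigma)-\tfrac{2(\rho+\beta\|\sigma\|_1)}{h}-\beta$.

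The argument is essentially a stability-by-swapping-comparators trick and contains no hard analytic step; the main thing to be careful about is the signs and the direction of the inequality when adding the two oracle inequalities, since the $F_k$ terms must cancel and the coefficient of the $i$-th coordinate difference must come out with the right sign. A secondary subtlety is the $\ell_1$ norm of $\sigma+h\mathbf{e}_i$, which is handled cleanly by the nonnegativity of exponential samples but would otherwise only cost an extra factor of $\beta$ via the triangle inequality.
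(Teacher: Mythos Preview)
Your argument is correct and is exactly the standard ``swap comparators and add'' technique that the paper invokes by citing Lemma~5 of \citet{suggala2019online}; the paper does not spell out its own proof here, so your write-up in fact supplies the missing details. The only point worth noting is that you correctly use the nonnegativity of the exponential perturbation to get $\|\sigma+h\mathbf{e}_i\|_1=\|\sigma\|_1+h$, which is what makes the final $-\beta$ term come out without an extra factor.
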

The proof can directly follow from Lemma 5 in~\citep{suggala2019online}. Lemma~\ref{lem_10} states the monotonicity property for decisions played with the perturbation, which helps characterize another monotonicity required for the subsequent analysis of stability.
\begin{lem}\label{lem_11}
Let $\mathbf{x}_k(\sigma)$ be the prediction of FTPL2 in iteration $k$, with random perturbation $\sigma$. Let $\mathbf{e}_i$ denote the $i$-th standard basis vector and $\mathbf{x}_{k,i}$ denote the $i$-th coordinate of $\mathbf{x}_k$. Suppose that $\|\mathbf{x}_k(\sigma)-\mathbf{x}_{k+1}(\sigma)\|_1\leq b_1N|\mathbf{x}_{k,i}-\mathbf{x}_{k+1,i}(\sigma)|$. For $\hat{\sigma}=\sigma+b_2LN\mathbf{e}_i$, we have
\begin{equation}
\begin{split}
    \textnormal{min}(\mathbf{x}_{k,i}(\hat{\sigma}), \mathbf{x}_{k+1,i}(\hat{\sigma}))&\geq\textnormal{max}(\mathbf{x}_{k,i}(\sigma), \mathbf{x}_{k+1,i}(\sigma))-\frac{1}{b_1}|\mathbf{x}_{k,i}(\sigma)-\mathbf{x}_{k+1,i}(\sigma)|\\&-\frac{3(\rho+\beta\|\sigma\|_1)}{b_2LN}-\beta,
\end{split}
\end{equation}
where $b_1,b_2\geq 3$.
\end{lem}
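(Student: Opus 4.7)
The plan is to adapt the coordinate-wise stability argument developed by Suggala and Netrapalli in the centralized non-convex FTPL setting to our distributed composite-regret framework. The role of their centralized Lipschitz constant is played by $L$ from Lemma~\ref{lem_8}, and the perturbation magnitude $b_2 L N$ is calibrated so that adding it in direction $\mathbf{e}_i$ overwhelms every competing term inside the aggregate oracle's objective. By relabeling, I may assume without loss of generality that $\mathbf{x}_{k+1,i}(\sigma) \geq \mathbf{x}_{k,i}(\sigma)$, so $\max(\cdot,\cdot)$ on the right-hand side equals $\mathbf{x}_{k+1,i}(\sigma)$. The bound for $\mathbf{x}_{k+1,i}(\hat{\sigma})$ then follows immediately from Lemma~\ref{lem_10} applied at iteration $k+1$ with $h = b_2 L N$, which yields $\mathbf{x}_{k+1,i}(\hat{\sigma}) \geq \mathbf{x}_{k+1,i}(\sigma) - 2(\rho+\beta\|\sigma\|_1)/(b_2 L N) - \beta$, which is within the prescribed slack.

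The substantive step is lower-bounding $\mathbf{x}_{k,i}(\hat{\sigma})$ by $\mathbf{x}_{k+1,i}(\sigma)$ up to the required error. For this I would write down two $(\rho,\beta)$-aggregate approximate-optimality inequalities: one saying that $\mathbf{x}_k(\hat{\sigma})$ nearly minimizes $\sum_{l<k} V_l - \hat{\sigma}^\top \mathbf{x}$, tested against the comparator $\mathbf{x}_{k+1}(\sigma)$, and one saying that $\mathbf{x}_{k+1}(\sigma)$ nearly minimizes $\sum_{l\leq k} V_l - \sigma^\top \mathbf{x}$, tested against the comparator $\mathbf{x}_k(\hat{\sigma})$. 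Adding these two inequalities, the cumulative-loss terms $\sum_{l<k} V_l$ cancel, the $\sigma^\top(\cdot)$ inner products cancel, and after substituting $\hat{\sigma} = \sigma + b_2 L N \mathbf{e}_i$ the residual simplifies to
\begin{equation*}
b_2 L N \bigl(\mathbf{x}_{k+1,i}(\sigma) - \mathbf{x}_{k,i}(\hat{\sigma})\bigr) \leq V_k(\mathbf{x}_k(\hat{\sigma})) - V_k(\mathbf{x}_{k+1}(\sigma)) + 2(\rho + \beta\|\sigma\|_1) + \beta\, b_2 L N.
\end{equation*}
Lemma~\ref{lem_8} then bounds the $V_k$-difference by $L \|\mathbf{x}_k(\hat{\sigma}) - \mathbf{x}_{k+1}(\sigma)\|_1$, after which a triangle decomposition $\|\mathbf{x}_k(\hat{\sigma}) - \mathbf{x}_{k+1}(\sigma)\|_1 \leq \|\mathbf{x}_k(\hat{\sigma}) - \mathbf{x}_k(\sigma)\|_1 + \|\mathbf{x}_k(\sigma) - \mathbf{x}_{k+1}(\sigma)\|_1$, the hypothesis $\|\mathbf{x}_k(\sigma) - \mathbf{x}_{k+1}(\sigma)\|_1 \leq b_1 N |\mathbf{x}_{k,i}(\sigma) - \mathbf{x}_{k+1,i}(\sigma)|$, and a coordinate-wise control of the first summand (via Lemma~\ref{lem_10} and the diameter) let me divide through by $b_2 L N$. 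The numerical constraint $b_1, b_2 \geq 3$ is used to ensure the coefficient in front of $|\mathbf{x}_{k,i}(\sigma) - \mathbf{x}_{k+1,i}(\sigma)|$ collapses to at most $1/b_1$ while the constant pieces aggregate to $3(\rho+\beta\|\sigma\|_1)/(b_2 L N) + \beta$.

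The main obstacle is managing the $V_k$-difference without any a priori control on $\mathbf{x}_k(\hat{\sigma})$ itself. The hypothesis on the $i$-th coordinate is the key unlocker: it lets a single-coordinate quantity dominate the full $\ell_1$-distance up to the factor $b_1 N$, and this $N$ is exactly what the chosen perturbation scale $b_2 L N$ is designed to absorb, leaving the residual $1/b_1$ factor in the final bound. The tedious piece, and the one where a careless constant will break the statement, is the book-keeping that matches the accumulated slack from the two oracle inequalities plus the triangle-inequality losses against the prescribed error $3(\rho+\beta\|\sigma\|_1)/(b_2 L N) + \beta$; the constraints $b_1, b_2 \geq 3$ are precisely what make this accounting close.
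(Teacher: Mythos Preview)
Your residual equation after summing the two oracle inequalities is correct, but the step that follows has a real gap. After invoking Lemma~\ref{lem_8} you must control $\|\mathbf{x}_k(\hat{\sigma}) - \mathbf{x}_{k+1}(\sigma)\|_1$, and your triangle decomposition throws off the summand $\|\mathbf{x}_k(\hat{\sigma}) - \mathbf{x}_k(\sigma)\|_1$. You propose to handle it ``via Lemma~\ref{lem_10} and the diameter,'' but Lemma~\ref{lem_10} bounds only the $i$-th coordinate when the perturbation is shifted along $\mathbf{e}_i$; it says nothing about the other $N-1$ coordinates of $\mathbf{x}_k(\hat{\sigma}) - \mathbf{x}_k(\sigma)$. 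Falling back on the diameter gives at best $\|\mathbf{x}_k(\hat{\sigma}) - \mathbf{x}_k(\sigma)\|_1 \leq ND$, which after multiplying by $L$ and dividing by $b_2 L N$ leaves a residual of order $D/b_2$. The stated bound has no diameter term, and the $b_1,b_2 \geq 3$ accounting cannot absorb one.

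The paper sidesteps this by arranging the oracle comparisons so that the only place Lipschitz continuity is invoked is on $V_k(\mathbf{x}_k(\sigma)) - V_k(\mathbf{x}_{k+1}(\sigma))$, where \emph{both} arguments carry the same perturbation $\sigma$; the hypothesis on $\|\mathbf{x}_k(\sigma) - \mathbf{x}_{k+1}(\sigma)\|_1$ then applies directly with no $\hat{\sigma}$-versus-$\sigma$ leftover. Concretely, the paper sandwiches the single quantity $\sum_{l\leq k}V_l(\mathbf{x}_k(\sigma)) - \sigma^\top \mathbf{x}_k(\sigma)$: the upper bound uses the approximate optimality of $\mathbf{x}_k(\sigma)$ against $\mathbf{x}_{k+1}(\sigma)$ together with the Lipschitz step and the hypothesis; the lower bound rewrites $\sigma = \hat{\sigma} - b_2 L N \mathbf{e}_i$ and chains the approximate optimality of $\mathbf{x}_{k+1}(\hat{\sigma})$ (against $\mathbf{x}_k(\sigma)$) with that of $\mathbf{x}_{k+1}(\sigma)$ (against $\mathbf{x}_{k+1}(\hat{\sigma})$). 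This uses three oracle inequalities rather than your two, and it is exactly the third comparison that removes the $\hat{\sigma}$-dependence before any $\ell_1$-norm appears.

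A smaller point: your ``by relabeling'' WLOG is not quite free, since $\mathbf{x}_k$ and $\mathbf{x}_{k+1}$ optimize different cumulative sums ($\sum_{l<k}V_l$ versus $\sum_{l\leq k}V_l$) and are therefore not interchangeable by a simple swap. The paper instead proves all four one-sided inequalities (two from Lemma~\ref{lem_10} and two from the sandwich argument run in each direction) and then takes the minimum and maximum.
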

\begin{proof}
Let $\xi(\sigma)=\rho+\beta\|\sigma\|_1$ be the approximation error of the aggregate approximate optimization oracle. When evaluating the approximate optimizer of $\mathbf{x}_k(\sigma)$, we have
\begin{equation}\label{eq_82}
\begin{split}
\sum_{l=1}^{k-1}V_l&(\mathbf{x}_k(\sigma))-\sigma^T\mathbf{x}_k(\sigma)+V_k(\mathbf{x}_k(\sigma))\\&\leq\sum_{l=1}^{k-1}V_l(\mathbf{x}_{k+1}(\sigma))-\sigma^T\mathbf{x}_{k+1}(\sigma)+V_k(\mathbf{x}_k(\sigma))+\xi(\sigma)\\&\leq\sum_{l=1}^{k-1}V_l(\mathbf{x}_{k+1}(\sigma))-\sigma^T\mathbf{x}_{k+1}(\sigma)+V_k(\mathbf{x}_k(\sigma))+V_k(\mathbf{x}_{k+1}(\sigma))-V_k(\mathbf{x}_{k+1}(\sigma))+\xi(\sigma)\\&\leq\sum_{l=1}^{k-1}V_l(\mathbf{x}_{k+1}(\sigma))-\sigma^T\mathbf{x}_{k+1}(\sigma)+V_k(\mathbf{x}_{k+1}(\sigma))+L\|\mathbf{x}_{k}(\sigma)-\mathbf{x}_{k+1}(\sigma)\|_1+\xi(\sigma)\\&\leq\sum_{l=1}^{k-1}V_l(\mathbf{x}_{k+1}(\sigma))-\sigma^T\mathbf{x}_{k+1}(\sigma)+V_k(\mathbf{x}_{k+1}(\sigma))+b_1NL|\mathbf{x}_{k,i}(\sigma)-\mathbf{x}_{k+1,i}(\sigma)|_1+\xi(\sigma).
\end{split}
\end{equation}
The third inequality follows from the Lipschitz continuity while the last inequality follows from the assumption on $\|\mathbf{x}_k(\sigma)-\mathbf{x}_{k+1}(\sigma)\|_1$. Eq.~\ref{eq_82} provides us with an upper bound of $\sum_{l=1}^{k-1}V_l(\mathbf{x}_k(\sigma))-\sigma^T\mathbf{x}_k(\sigma)+V_k(\mathbf{x}_k(\sigma))$. We next present a lower bound leveraging the approximate optimality.
\begin{equation}\label{eq_83}
\begin{split}
    \sum_{l=1}^{k-1}V_l&(\mathbf{x}_k(\sigma))-\sigma^T\mathbf{x}_k(\sigma)+V_k(\mathbf{x}_k(\sigma))\\&=\sum_{l=1}^{k-1}V_l(\mathbf{x}_k(\sigma))-\hat{\sigma}^T\mathbf{x}_k(\sigma)+V_k(\mathbf{x}_k(\sigma))+b_2LN\mathbf{e}_i^T\mathbf{x}_k(\sigma)\\&\geq \sum_{l=1}^{k-1}V_l(\mathbf{x}_{k+1}(\hat{\sigma}))-\hat{\sigma}^T\mathbf{x}_{k+1}(\hat{\sigma})+V_k(\mathbf{x}_{k+1}(\hat{\sigma}))+b_2LN\mathbf{x}_{k,i}(\sigma)-\xi(\hat{\sigma})\\&=\sum_{l=1}^{k-1}V_l(\mathbf{x}_{k+1}(\hat{\sigma}))-\sigma^T\mathbf{x}_{k+1}(\hat{\sigma})+V_k(\mathbf{x}_{k+1}(\hat{\sigma}))+b_2LN(\mathbf{x}_{k,i}(\sigma)-\mathbf{x}_{k+1,i}(\hat{\sigma}))-\xi(\hat{\sigma})\\&\geq \sum_{l=1}^{k-1}V_l(\mathbf{x}_{k+1}(\sigma))-\sigma^T\mathbf{x}_{k+1}(\sigma)+V_k(\mathbf{x}_{k+1}(\sigma))+b_2LN(\mathbf{x}_{k,i}(\sigma)-\mathbf{x}_{k+1,i}(\hat{\sigma}))-\xi(\hat{\sigma})-\xi(\sigma).
\end{split}    
\end{equation}
The last inequality follows from the approximate optimality of $\mathbf{x}_{k+1}(\sigma)$. Combining Eq.~\ref{eq_82} and Eq.~\ref{eq_83} yields 
\begin{equation}\label{eq_84}
    \mathbf{x}_{k+1,i}(\hat{\sigma})-\mathbf{x}_{k,i}(\sigma)\geq -\frac{1}{b_1}|\mathbf{x}_{k,i}-\mathbf{x}_{k+1,i}|-\frac{3\xi(\sigma)}{b_2NL}-\beta.
\end{equation}
Analogously, we could apply the same derivation to the pair of $\mathbf{x}_{k,i}(\hat{\sigma}), \mathbf{x}_{k+1,i}(\sigma)$ and obtain the following equation
\begin{equation}\label{eq_85}
        \mathbf{x}_{k,i}(\hat{\sigma})-\mathbf{x}_{k+1,i}(\sigma)\geq -\frac{1}{b_1}|\mathbf{x}_{k,i}-\mathbf{x}_{k+1,i}|-\frac{3\xi(\sigma)}{b_2NL}-\beta.
\end{equation}
According to Lemma~\ref{lem_10}, we can obtain
\begin{equation}\label{eq_86}
    \mathbf{x}_{k+1,i}(\hat{\sigma})-\mathbf{x}_{k+1,i}(\sigma)\geq-\frac{2\xi(\sigma)}{b_2NL}-\beta\geq -\frac{3\xi(\sigma)}{b_2NL}-\beta,
\end{equation}
and 
\begin{equation}\label{eq_87}
    \mathbf{x}_{k,i}(\hat{\sigma})-\mathbf{x}_{k,i}(\sigma)\geq-\frac{2\xi(\sigma)}{b_2NL}-\beta\geq -\frac{3\xi(\sigma)}{b_2NL}-\beta.
\end{equation}
The proof is completed by combining Eqs.~\ref{eq_84} - \ref{eq_87}.
\end{proof}
\subsection{Proof for Theorem~\ref{theorem_4}}
\begin{proof}
Noting that $\mathbb{E}[\|\mathbf{x}_{k}-\mathbf{x}_{k+1}\|_1] = \sum_{i=1}^N\mathbb{E}[|\mathbf{x}_{k,i}-\mathbf{x}_{k+1,i}|]$, we proceed to bound $\mathbb{E}[|\mathbf{x}_{k,i}-\mathbf{x}_{k+1,i}|]$ for $i=1,2,...,N$. To investigate the impact of the random perturbation thoroughly, we first define $\mathbb{E}_{-i}[|\mathbf{x}_{k,i}-\mathbf{x}_{k+1,i}|]:=\mathbb{E}\Bigg[|\mathbf{x}_{k,i}-\mathbf{x}_{k+1,i}\;|\; \{\sigma_j\}_{j\neq i}\Bigg]$. Let $\mathbf{x}_{max,i} = \textnormal{max}\{\mathbf{x}_{t,i}(\sigma),\mathbf{x}_{t+1,i}(\sigma)\}$ and $\mathbf{x}_{min,i} = \textnormal{min}\{\mathbf{x}_{t,i}(\sigma),\mathbf{x}_{t+1,i}(\sigma)\}$. Hence, it is immediately obtained that $\mathbb{E}_{-i}[|\mathbf{x}_{k,i}-\mathbf{x}_{k+1,i}|]=\mathbb{E}_{-i}[\mathbf{x}_{max,i}(\sigma)]-\mathbb{E}_{-i}[\mathbf{x}_{min,i}]$. We also define event $\mathcal{Z}$ as\[\mathcal{Z}:=\{\sigma: \|\mathbf{x}_k(\sigma)-\mathbf{x}_{k+1}(\sigma)\|_1\leq b_1N|\mathbf{x}_{k, i}(\sigma)-\mathbf{x}_{k+1,i}(\sigma)|\}.\] The condition in the event is easily satisfied as we can always find a constant $b_1 > 0$. We next give a lower bound for $\mathbf{x}_{min,i}$. Rewriting $\mathbf{x}_{min,i}$ by using the law of total probability, we have
\begin{equation}
\begin{split}
    \mathbb{E}_{-i}[\mathbf{x}_{min,i}(\sigma)] &= \mathbb{P}(\sigma_i < b_2NL)\mathbb{E}_{-i}[\mathbf{x}_{min,i}|\sigma_i< b_2NL]+\mathbb{P}(\sigma_i\geq b_2NL)\mathbb{E}_{-i}[\mathbf{x}_{min,i}|\sigma_i\geq b_2NL]\\&\geq (1-\textnormal{exp}(-b_2NL))(\mathbb{E}_{-i}[\mathbf{x}_{max, i}(\sigma)-\sqrt{N}D])+\textnormal{exp}(-b_2NL)\mathbb{E}_{-i}[\mathbf{x}_{min,i}(\sigma+b_2NL\mathbf{e}_i)].
\end{split}
\end{equation}
The last inequality follows from Lemma~\ref{lem_7} and the fact that the domain of the $i$-th coordinate lies in the interval of length $\sqrt{N}D$ and $\mathbb{E}_{-i}[\mathbf{x}_{min,i}|\sigma_i< b_2NL]$ and $\mathbb{E}_{-i}[\mathbf{x}_{max, i}(\sigma)]$ are points in this interval such that the difference between them should be bounded by $\sqrt{N}D$. This intuitively makes sense also based on Assumption~\ref{assump_2}. Using the event $\mathcal{Z}$ to further give lower bound yields
\begin{equation}
\begin{split}
   \mathbb{E}_{-i}[\mathbf{x}_{min,i}(\sigma)] &\geq (1-\textnormal{exp}(-b_2NL))(\mathbb{E}_{-i}[\mathbf{x}_{max, i}(\sigma)-\sqrt{N}D])\\&+\textnormal{exp}(-b_2NL)\mathbb{P}_{-i}(\mathcal{Z}))\mathbb{E}_{-i}[\mathbf{x}_{min,i}(\sigma+b_2NL\mathbf{e}_i)|\mathcal{Z}]\\&+\textnormal{exp}(-b_2NL)\mathbb{P}_{-i}(\mathcal{Z}^c))\mathbb{E}_{-i}[\mathbf{x}_{min,i}(\sigma+b_2NL\mathbf{e}_i)|\mathcal{Z}^c],
\end{split}
\end{equation}
where $\mathbb{P}_{-i}(\mathcal{Z}):=\mathbb{P}\Bigg(\mathcal{Z}|\{\sigma\}_{j\neq i}\Bigg)$ and $\mathcal{Z}^c$ is the complement of $\mathcal{Z}$. Leveraging Lemmas~\ref{lem_10} and~\ref{lem_11}, $\mathbf{x}_{min,i}$ is further lower bounded as follows.
\begin{equation}
\begin{split}
 \mathbb{E}_{-i}[\mathbf{x}_{min,i}(\sigma)] &\geq  (1-\textnormal{exp}(-b_2NL))(\mathbb{E}_{-i}[\mathbf{x}_{max, i}(\sigma)-\sqrt{N}D])\\&+\textnormal{exp}(-b_2NL)\mathbb{P}_{-i}(\mathcal{Z}))\mathbb{E}_{-i}\Bigg[\mathbf{x}_{max,i}(\sigma)-\frac{1}{b_1}|\mathbf{x}_{k,i}(\sigma)-\mathbf{x}_{k+1,i}(\sigma)|-\frac{3\xi(\sigma)}{b_2NL}-\beta|\mathcal{Z}\Bigg]\\&+\textnormal{exp}(-b_2NL)\mathbb{P}_{-i}(\mathcal{Z}^c))\mathbb{E}_{-i}\Bigg[\mathbf{x}_{min, i}-\frac{2\xi(\sigma)}{b_2NL}-\beta|\mathcal{Z}^c\Bigg]\\&\geq (1-\textnormal{exp}(-b_2NL))(\mathbb{E}_{-i}[\mathbf{x}_{max, i}(\sigma)-\sqrt{N}D])\\&+\textnormal{exp}(-b_2NL)\mathbb{P}_{-i}(\mathcal{Z}))\mathbb{E}_{-i}\Bigg[\mathbf{x}_{max,i}(\sigma)-\frac{1}{b_1}|\mathbf{x}_{k,i}(\sigma)-\mathbf{x}_{k+1,i}(\sigma)|-\frac{3\xi(\sigma)}{b_2NL}-\beta|\mathcal{Z}\Bigg]\\&+\textnormal{exp}(-b_2NL)\mathbb{P}_{-i}(\mathcal{Z}^c))\mathbb{E}_{-i}\Bigg[\mathbf{x}_{max, i}-\frac{1}{b_1N}\|\mathbf{x}_k(\sigma)-\mathbf{x}_{k+1}(\sigma)\|_1-\frac{2\xi(\sigma)}{b_2NL}-\beta|\mathcal{Z}^c\Bigg].
\end{split}
\end{equation}
The first inequality follows from Lemmas~\ref{lem_10} and~\ref{lem_11} while the second one follows directly from the definition of $\mathcal{Z}^c$. Using the law of total probability as well as $\mathbb{P}_{-i}(\mathcal{Z})\leq 1$ and rearranging the right hand side of the last inequality lead to 
\begin{equation}
\begin{split}
\mathbb{E}_{-i}[\mathbf{x}_{min,i}(\sigma)] &\geq  (1-\textnormal{exp}(-b_2NL))(\mathbb{E}_{-i}[\mathbf{x}_{max, i}(\sigma)-\sqrt{N}D])\\&+\textnormal{exp}(-b_2NL)\mathbb{E}_{-i}\Bigg[\mathbf{x}_{max,i}(\sigma)-\frac{3\xi(\sigma)}{b_2NL}-\beta\Bigg]\\&-\textnormal{exp}(-b_2NL)\mathbb{E}_{-i}\Bigg[\frac{1}{b_1}|\mathbf{x}_{k,i}(\sigma)-\mathbf{x}_{k+1,i}(\sigma)|+\frac{1}{b_1N}\|\mathbf{x}_k(\sigma)-\mathbf{x}_{k+1}(\sigma)\|_1\Bigg]\\&\geq\mathbb{E}_{-i}[\mathbf{x}_{max, i}(\sigma)]-b_2N^{\frac{3}{2}}D\eta L-\frac{3\xi(\sigma)}{b_2NL}-\beta\\&-\mathbb{E}_{-i}\Bigg[\frac{1}{b_1}|\mathbf{x}_{k,i}(\sigma)-\mathbf{x}_{k+1,i}(\sigma)|+\frac{1}{b_1N}\|\mathbf{x}_k(\sigma)-\mathbf{x}_{k+1}(\sigma)\|_1\Bigg].
\end{split}
\end{equation}
The last inequality follows from the fact that $\textnormal{exp}(\mathbf{x})\geq 1+\mathbf{x}$. Rearranging the last inequality yields the following relationship
\begin{equation}
\begin{split}
    \mathbb{E}_{-i}[|\mathbf{x}_{k,i}(\sigma)-\mathbf{x}_{k+1,i}(\sigma)|]&\leq\frac{1}{N(b_1-1)}\mathbb{E}_{-i}[\|\mathbf{x}_{k}(\sigma)-\mathbf{x}_{k+1}(\sigma)\|_1]+\frac{b_2-b_1}{b_1-1}N^{\frac{3}{2}}D\eta L\\&+\frac{3b_1\mathbb{E}_{-i}[\xi(\sigma)]}{b_2(b_1-1)NL}+\frac{b_1\beta}{b_1-1}.
\end{split}
\end{equation}
As the last upper bound holds for any $\{\sigma_j\}_{j\neq i}$, we have 
\begin{equation}
\begin{split}
    \mathbb{E}[|\mathbf{x}_{k,i}(\sigma)-\mathbf{x}_{k+1,i}(\sigma)|]&\leq\frac{1}{N(b_1-1)}\mathbb{E}[\|\mathbf{x}_{k}(\sigma)-\mathbf{x}_{k+1}(\sigma)\|_1]+\frac{b_2-b_1}{b_1-1}N^{\frac{3}{2}}D\eta L\\&+\frac{3b_1\mathbb{E}[\xi(\sigma)]}{b_2(b_1-1)NL}+\frac{b_1\beta}{b_1-1}.
\end{split}    
\end{equation}
Recalling $\mathbb{E}[\|\mathbf{x}_k(\sigma)-\mathbf{x}_{k+1}(\sigma)\|_1]=\sum_{i=1}^N\mathbb{E}[|\mathbf{x}_{k,i}-\mathbf{x}_{k+1,i}|]$, we can obtain
\begin{equation}
\mathbb{E}[\|\mathbf{x}_k(\sigma)-\mathbf{x}_{k+1}(\sigma)\|_1]\leq \frac{b_1b_2}{b_1-2}N^{\frac{5}{2}}D\eta L+\frac{3b_1\rho}{b_2L(b_1-2)}+\frac{b_1\beta N}{b_1-2}+\frac{3b_1\beta N}{b_2(b_1-2)L\eta}.
\end{equation}
Substituting the last equation into Eq~\ref{eq_75} results in 
\begin{equation}
\begin{split}
\mathbb{E}\Bigg[\sum_{k=1}^KV_k(\mathbf{x}_k)-\sum_{k=1}^KV_k(\mathbf{x}_*)\Bigg]&\leq \frac{b_1b_2}{b_1-2}N^{\frac{5}{2}}D\eta L^2K+\frac{3b_1\rho K}{b_2(b_1-2)}\\&+\frac{b_1\beta NLK}{b_1-2}+\frac{3b_1\beta NK}{b_2(b_1-2)\eta}+\frac{N(\beta K+\sqrt{N}D)}{\eta}\\&+\rho K.
\end{split}
\end{equation}
Substituting $L\approx\hat{G}\Bigg(1+\frac{\gamma}{1-\lambda}\Bigg)$ into the last inequality leads to 
\begin{equation}
\begin{split}
\mathbb{E}\Bigg[\sum_{k=1}^KV_k(\mathbf{x}_k)-\sum_{k=1}^KV_k(\mathbf{x}_*)\Bigg]&\leq \frac{b_1b_2}{b_1-2}N^{\frac{5}{2}}D\eta K\hat{G}^2\Bigg(1+\frac{\gamma}{1-\lambda}\Bigg)^2+\frac{3b_1\rho K}{b_2(b_1-2)}\\&+\frac{b_1\beta NK\hat{G}}{b_1-2}\Bigg(1+\frac{\gamma}{1-\lambda}\Bigg)+\frac{3b_1\beta NK}{b_2(b_1-2)\eta}+\frac{N(\beta K+\sqrt{N}D)}{\eta}\\&+\rho K,
\end{split}
\end{equation}
which completes the proof.
\end{proof}
\section{OCGD for Strongly Convex and Convex Losses}\label{app_d}
We present the algorithmic framework for the online composite gradient descent (OCGD).

\begin{algorithm}
\caption{Online composite Gradient Descent (OCGD)}
\label{olgd}
\begin{algorithmic}[1]
%\SetAlgoLined
%\KwResult{$\mathbf{x}_K$}
 \STATE \textbf{Input:} convex sets $\mathcal{X}^N$, $K$, $\mathbf{x}_1=\mathbf{0}\in\mathcal{X}^N$, step size $\{\alpha_k\}$, $\Pi$\\
 \FOR{$k=1\;\textnormal{to}\;K$}
 \FOR{each agent $i$}
 \STATE
  Agent $i$ plays its own strategy $x^i_k$ and observe the losses $V^i_k(x^i_k)$
  \STATE Calculate the feedback $\nabla V^i_k(x^i_k) = \nabla f^i_k(x^i_k) + \frac{1}{\alpha_k}\sum_{j\in Nb(i)}\pi_{ij}\|x^i_k-x^j_k\|$
  \STATE Update:
    $y^i_{k+1} = x^i_k - \alpha_k\nabla V^i_k(x^i_k)$
  \STATE Project:
    $x^i_{k+1} = \mathbf{P}_{\mathcal{X}^i}(y^i_{k+1})$
\ENDFOR
\ENDFOR
\end{algorithmic}
\end{algorithm}
The compact form of OCGD is close to the gradient descent in the offline setting while differing in its online characteristic of regret minimization instead of the static optimization error. We briefly overview Algorithm~\ref{olgd}. For each time step $k$, any agent $i$ plays its own strategy $x^i_k$, and observe its corresponding loss $f^i_k(x^i_k)$, which is the $i$-th component of $\mathcal{F}(\mathbf{x}_k)$, while observing the strategies taken by any other agent $j$ in its neighborhood to calculate the network loss, $c\sum_{j\in Nb(i)}\pi_{ij}\|x^i_k-x^j_k\|^2$, which is the $i$-th component of $c\mathbf{x}_k^T(\mathbf{I}_N-\Pi)\mathbf{x}_k$. It should be noted that the communication way among agents is synchronous. The regularization constant $c$ here is $\frac{1}{2\alpha_k}$. In the algorithmic framework, we combine two losses together to use $V^i_k(x^i_k)$. Then agent $i$ updates itself combining the strategies from all its neighbors and its own first order information $\nabla f^i_k(x^i_k)$. Before repeating this process, a projection step is taken with the Euclidean projection $\mathbf{P}_{\mathcal{X}^i}(y)=\textnormal{arg min}_{x\in\mathcal{X}^i}\|y-x\|$, which is used to guarantee searching the optimal strategy in the feasible space. We next recover the consensus-based online gradient descent using OCGD. We recall $V_k(\mathbf{x}_k)=\mathcal{F}_k(\mathbf{x}_k)+\frac{1}{2\alpha_k}\mathbf{x}_k\|\mathbf{I}_N-\Pi\|^2\mathbf{x}_k$. Based on the expression of $V_k(\mathbf{x}_k)$, calculating its gradient yields $\nabla V_k(\mathbf{x}_k) = \nabla \mathcal{F}_k(\mathbf{x}_k) + \frac{1}{\alpha_k}(\mathbf{I}_N-\Pi)\mathbf{x}_k$. Substituting $\nabla V(\mathbf{x}_k)$ into $\mathbf{y}_{k+1}=\mathbf{x}_k-\alpha_k\nabla V_k(\mathbf{x}_k)$ results in $\mathbf{y}_{k+1}=\Pi\mathbf{x}_k-\alpha\nabla \mathcal{F}_k(\mathbf{x}_k)$, which is the consensus-based online gradient descent. Hence using composite regret leads to a simpler form, based on which we can follow the analysis techniques for the centralized online gradient descent. However, due to the consensus term in $\nabla V(\mathbf{x}_k)$, we will have to derive a new upper bound for it instead of directly setting the assumption. 
For the main result, we apply OCGD to solve distributed online optimization problems when the loss functions are strongly convex and non-strongly convex.
By revisiting the definition of strong convexity, the following relationship can be obtained.
\begin{defn}\label{strong_convexity}
A loss function $f^i$ is called $\mu_i$-strongly convex for $x,y\in\mathcal{X}^i$ if the following inequality is satisfied
\begin{equation}
    f^i(y)\geq f^i(x)+\nabla f^i(x)^T(y-x)+\frac{\mu_i}{2}\|y-x\|^2
\end{equation}
\end{defn}
It is immediately obtained that $V(\mathbf{x})$ is strongly convex with $\mu:=\mu_m+\frac{1}{2\alpha}(1-\lambda)$, where $\mu_m=\textnormal{min}\{\mu_1,\mu_2,...,\mu_N\}$. With this in hand, we next present the main result for the strongly convex loss function. In this context, we require the loss of each agent to be strongly convex, which could be a quite strong condition. However, our results could be relaxed to a weaker scenario where $\mathcal{F}(\mathbf{x})$ is strongly convex, but not all $f^i(x^i)$ need to be strongly convex.   
\begin{thm}\label{theorem_1}
Let all assumptions hold. OCGD with $\textnormal{SC-regret}_K$ and the step size $\alpha_k=\frac{1}{\mu k}, \frac{1}{\alpha_0}= 0, k=0,1,...,K$ guarantees the logarithmic regret $\mathcal{O}(\textnormal{log}K)$ for all $K\geq 1$ when the loss function $\mathcal{F}(\cdot)$ is strongly convex.
\end{thm}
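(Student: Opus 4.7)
\textbf{Proof plan for Theorem~\ref{theorem_1}.} The approach is to lift the classical logarithmic-regret analysis of OGD for strongly convex losses (as in~\citet{hazan2016introduction}) to the composite objective $V_k(\mathbf{x}) = \mathcal{F}_k(\mathbf{x}) + \tfrac{1}{2\alpha_k}\mathbf{x}^\top(\mathbf{I}_N-\Pi)\mathbf{x}$, exploiting the observation already made in the excerpt that $V_k$ inherits strong convexity with parameter $\mu$ from the $\mu_m$-strong convexity of each $f^i_k$ together with the spectral properties of $\mathbf{I}_N-\Pi$. First I would apply strong convexity of $V_k$ at the pair $(\mathbf{x}_k,\mathbf{x}_*)$ to get $V_k(\mathbf{x}_k)-V_k(\mathbf{x}_*)\leq \nabla V_k(\mathbf{x}_k)^\top(\mathbf{x}_k-\mathbf{x}_*)-\tfrac{\mu}{2}\|\mathbf{x}_k-\mathbf{x}_*\|^2$, which is the distributed analogue of the per-round descent lemma in the centralized strongly convex case.

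Next, expanding $\|\mathbf{x}_{k+1}-\mathbf{x}_*\|^2$ under the update $\mathbf{x}_{k+1}=\mathbf{P}_{\mathcal{X}^N}(\mathbf{x}_k-\alpha_k\nabla V_k(\mathbf{x}_k))$ and invoking the non-expansiveness of the Euclidean projection yields the standard bound
\begin{equation*}
\nabla V_k(\mathbf{x}_k)^\top(\mathbf{x}_k-\mathbf{x}_*)\leq \tfrac{1}{2\alpha_k}\bigl(\|\mathbf{x}_k-\mathbf{x}_*\|^2-\|\mathbf{x}_{k+1}-\mathbf{x}_*\|^2\bigr)+\tfrac{\alpha_k}{2}\|\nabla V_k(\mathbf{x}_k)\|^2.
\end{equation*}
Substituting into the strong-convexity inequality and summing from $k=1$ to $K$, the quadratic distance terms reorganize by Abel summation into $\sum_{k=1}^K\bigl(\tfrac{1}{\alpha_k}-\tfrac{1}{\alpha_{k-1}}-\mu\bigr)\|\mathbf{x}_k-\mathbf{x}_*\|^2$, and the prescribed schedule $\alpha_k=1/(\mu k)$ together with the convention $1/\alpha_0=0$ makes this expression telescope to exactly zero. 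What remains is the residual $\tfrac{1}{2}\sum_{k=1}^K\alpha_k\|\nabla V_k(\mathbf{x}_k)\|^2 = \tfrac{1}{2\mu}\sum_{k=1}^K\tfrac{1}{k}\|\nabla V_k(\mathbf{x}_k)\|^2$, which, together with the harmonic-sum estimate $\sum_{k=1}^K 1/k\leq 1+\log K$ and a uniform bound on $\|\nabla V_k(\mathbf{x}_k)\|$, yields the claimed $\mathcal{O}(\log K)$ rate.

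\textbf{Main obstacle.} The crux is obtaining a step-size-uniform bound on $\|\nabla V_k(\mathbf{x}_k)\|$. Because $\nabla V_k(\mathbf{x}_k)=\nabla \mathcal{F}_k(\mathbf{x}_k)+\tfrac{1}{\alpha_k}(\mathbf{I}_N-\Pi)\mathbf{x}_k$ and $1/\alpha_k=\mu k$ grows linearly in $k$, a naive application of Assumptions~\ref{assump_1}--\ref{assump_2} only yields $\|\nabla V_k(\mathbf{x}_k)\|\leq G + \mu k\,\sqrt{N}D(1-\lambda)$, which would destroy the $\log K$ rate. The resolution requires a consensus-contraction lemma analogous to Lemma~\ref{lem_5} for CONGD but adapted to the diminishing schedule: one must show $\|(\mathbf{I}_N-\Pi)\mathbf{x}_k\|=\mathcal{O}(\alpha_k)=\mathcal{O}(1/k)$, so that the product $\tfrac{1}{\alpha_k}(\mathbf{I}_N-\Pi)\mathbf{x}_k$ remains $\mathcal{O}(1)$ and can be absorbed into a constant $G'$. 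I would establish this by unrolling the projected update recursively, using that $\Pi$ is doubly stochastic so the consensus direction of $\mathbf{x}_k$ is preserved while the orthogonal component is contracted by $\lambda$ per step, and carefully summing the resulting geometric-type series in $\lambda$ against the $1/k$ step sizes. Once this bound is in hand, substituting into the residual sum gives the final $\tfrac{G'^2}{2\mu}(1+\log K)=\mathcal{O}(\log K)$ regret.
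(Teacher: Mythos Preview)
Your proposal is correct and follows essentially the same route as the paper's proof: strong convexity of $V_k$, projection non-expansiveness and expansion of $\|\mathbf{x}_{k+1}-\mathbf{x}_*\|^2$, Abel summation with the choice $\alpha_k=1/(\mu k)$ killing the distance terms, and a uniform bound on $\|\nabla V_k(\mathbf{x}_k)\|$ combined with the harmonic-sum estimate. The obstacle you single out---that $\tfrac{1}{\alpha_k}(\mathbf{I}_N-\Pi)\mathbf{x}_k$ must remain bounded despite $1/\alpha_k$ growing linearly---is exactly the content of the paper's Lemma~\ref{lem_4}, which establishes $\tfrac{1}{\alpha_k}\|(\mathbf{I}_N-\Pi)\mathbf{x}_k\|\leq \tfrac{2G}{1-\lambda}$ for the schedule $\alpha_k=B/k$ via the consensus-contraction argument you sketch, giving the constant $C=G\bigl(1+\tfrac{2}{1-\lambda}\bigr)$ used in the final bound.
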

For non-strongly convex problems, by varying the step size, one can achieve the following result.
\begin{thm}\label{theorem_2}
Let all assumptions hold. OCGD 1) with $\textnormal{SC-regret}_K$ and the step size $\alpha=\frac{\sqrt{N}D}{C\sqrt{K}}$ guarantees the sublinear regret $\mathcal{O}(\sqrt{K})$, 2) with $\textnormal{DC-regret}_K$ and the step size $\alpha=\frac{\sqrt{\sqrt{N}D(\sqrt{N}D+3P_K)}}{C\sqrt{K}}$ guarantees the subliear regret $\mathcal{O}(\sqrt{K+KP_K})$, for all $K\geq 1$ when the loss functions are non-strongly convex, where $C=G\Bigg(1+\frac{2}{1-\lambda}\Bigg)$.
\end{thm}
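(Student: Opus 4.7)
\medskip

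\textbf{Proof proposal for Theorem~\ref{theorem_2}.} My plan is to follow the standard projected online gradient descent template, but applied to the composite loss $V_k(\mathbf{x})=\mathcal{F}_k(\mathbf{x})+\frac{1}{2\alpha_k}\mathbf{x}^{\top}(\mathbf{I}_N-\Pi)\mathbf{x}$. Since $\mathbf{I}_N-\Pi$ is positive semidefinite and each $f^i_k$ is convex, $V_k$ is convex, so OCGD is just projected gradient descent on $V_k$ in the compact form
\[
\mathbf{x}_{k+1}=\mathbf{P}_{\mathcal{X}^N}\bigl(\mathbf{x}_k-\alpha_k\nabla V_k(\mathbf{x}_k)\bigr),\qquad \nabla V_k(\mathbf{x}_k)=\nabla\mathcal{F}_k(\mathbf{x}_k)+\tfrac{1}{\alpha_k}(\mathbf{I}_N-\Pi)\mathbf{x}_k.
\]
Expanding $\|\mathbf{x}_{k+1}-\mathbf{z}\|^2$, using non-expansiveness of the projection, and invoking convexity of $V_k$ in the form $V_k(\mathbf{x}_k)-V_k(\mathbf{z})\leq\nabla V_k(\mathbf{x}_k)^{\top}(\mathbf{x}_k-\mathbf{z})$, yields the standard one-step bound
\[
V_k(\mathbf{x}_k)-V_k(\mathbf{z})\le\frac{1}{2\alpha_k}\Bigl(\|\mathbf{x}_k-\mathbf{z}\|^2-\|\mathbf{x}_{k+1}-\mathbf{z}\|^2\Bigr)+\frac{\alpha_k}{2}\|\nabla V_k(\mathbf{x}_k)\|^2.
\]

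The step I expect to do most of the work is bounding $\|\nabla V_k(\mathbf{x}_k)\|$ by $C=G(1+\tfrac{2}{1-\lambda})$. Assumption~\ref{assump_1} gives $\|\nabla\mathcal{F}_k(\mathbf{x}_k)\|\leq G$, but the consensus piece carries a $1/\alpha_k$ factor, so a direct application of Assumption~\ref{assump_2} would not be tight. The key observation is a CONGD-style consensus lemma (analogous to Lemma~\ref{lem_5}/Corollary in Appendix~\ref{app_a}): because the update is $\mathbf{x}_{k+1}=\Pi\mathbf{x}_k-\alpha_k\nabla\mathcal{F}_k(\mathbf{x}_k)$ modulo projection, the agent disagreement scales like $\alpha_k$, which cancels the $1/\alpha_k$ and leaves $\tfrac{1}{\alpha_k}\|(\mathbf{I}_N-\Pi)\mathbf{x}_k\|\leq \tfrac{2G}{1-\lambda}$. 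This is the only place the spectral gap enters and where the $(1-\lambda)^{-1}$ in $C$ originates.

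For part (1), I take $\mathbf{z}=\mathbf{x}_*$ and sum the one-step inequality with constant $\alpha_k=\alpha$. The distance terms telescope, bounded by $\|\mathbf{x}_1-\mathbf{x}_*\|^2\leq ND^2$ via Assumption~\ref{assump_2} with $\mathbf{x}_1=\mathbf{0}$, and the gradient-squared sum is bounded by $KC^2$. Balancing $\frac{ND^2}{2\alpha}+\frac{\alpha K C^2}{2}$ gives the stated $\alpha=\frac{\sqrt{N}D}{C\sqrt{K}}$ and the $\mathcal{O}(\sqrt{K})$ bound.

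For part (2), I instead set $\mathbf{z}=\mathbf{x}_{*,k}$ at each step. To telescope despite the moving optimizer, I use the same identity as in the CONGD proof (Appendix~\ref{app_b}),
\[
\|\mathbf{x}_{k+1}-\mathbf{x}_{*,k+1}\|^2\le\|\mathbf{x}_{k+1}-\mathbf{x}_{*,k}\|^2+3\sqrt{N}D\,\|\mathbf{x}_{*,k}-\mathbf{x}_{*,k+1}\|,
\]
obtained by bounding the cross term via $\|\mathbf{x}_{k+1}-\mathbf{x}_{*,k}\|\leq\sqrt{N}D$ and $\|\mathbf{x}_{*,k}-\mathbf{x}_{*,k+1}\|^2\leq\sqrt{N}D\|\mathbf{x}_{*,k}-\mathbf{x}_{*,k+1}\|$. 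After summing, the accumulated path-variation error telescopes into $\tfrac{3\sqrt{N}D}{2\alpha}P_K$ by definition~\eqref{eq_1}, the initial distance contributes $\tfrac{ND^2}{2\alpha}$, and the gradient-squared piece still gives $\tfrac{\alpha K C^2}{2}$. Balancing $\frac{\sqrt{N}D(\sqrt{N}D+3P_K)}{2\alpha}+\frac{\alpha K C^2}{2}$ yields the stated step size and regret $\mathcal{O}(\sqrt{K(ND^2+3\sqrt{N}DP_K)})=\mathcal{O}(\sqrt{K+KP_K})$. The only subtleties beyond the gradient-norm bound are (i) verifying $V_k(\mathbf{x}_k)-V_k(\mathbf{x}_{*,k})$ is in fact the DC-regret summand (immediate from Definition~\ref{lyapunov_regret} once $c=1/(2\alpha_k)$ is fixed), and (ii) that the same $C$ works uniformly across $k$, which it does since the consensus-disagreement bound is independent of $\alpha_k$ after the cancellation.
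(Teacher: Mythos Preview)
Your proposal is correct and follows essentially the same route as the paper's own proof: the paper likewise expands $\|\mathbf{x}_{k+1}-\mathbf{z}\|^2$ via projection non-expansiveness, invokes convexity of $V_k$, bounds $\|\nabla V_k(\mathbf{x}_k)\|\le C$ through the consensus lemma (Lemma~\ref{consensus_lem}/Lemma~\ref{lem_2}/Corollary~\ref{coro_1} in Appendix~\ref{app_e}, the OCGD analogues of the CONGD lemmas you cite), telescopes for the static case, and for the dynamic case inserts $\mathbf{x}_{*,k}$ to produce the same $3\sqrt{N}D\|\mathbf{x}_{*,k}-\mathbf{x}_{*,k+1}\|$ drift term before summing to $P_K$ and balancing the step size. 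The only cosmetic difference is that the paper works with $2\nabla V_k(\mathbf{x}_k)^\top(\mathbf{x}_k-\mathbf{z})$ rather than directly with $V_k(\mathbf{x}_k)-V_k(\mathbf{z})$, but the inequalities and final constants coincide.
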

We will show that for either the static or dynamic composite regret, Theorem~\ref{theorem_2} can always hold, but the corresponding results differ in some constants due to the path variations of the optimizer $\mathbf{x}_{*,k}$ when taking the $\textnormal{DC-regret}_K$.
\section{Proof of Theorem~\ref{theorem_1} and Theorem~\ref{theorem_2}}\label{app_e}
Before proving the theorem, we first present a few auxiliary technical lemmas.
We provide an auxiliary lemma to upper bound the first order information of network loss, $\|x^i_k - x^j_k\|$, which assists in bounding $\nabla V(\mathbf{x}_k)$. Instead of directly bounding $\|x^i_k - x^j_k\|$, we use a reference sequence $\{\hat{x}_k\}$, where $\hat{x}_k = \frac{1}{N}\sum_{i=1}^N x^i_k$ is the ensemble average for all agents. Define $\hat{\mathbf{x}}_k=[\hat{x}_k;\hat{x}_k;...;\hat{x}_k]_N$ such that $\hat{\mathbf{x}}_k=\frac{1}{N}\mathbf{1}\mathbf{1}^T\mathbf{x}_k$. By induction, we can get
\begin{equation}
    \mathbf{x}_k=\Pi^{k-1}\mathbf{x}_1 -\sum_{s=1}^{k-1}\Pi^{k-1-s}\alpha_s\nabla \mathcal{F}_s(\mathbf{x}_s) 
\end{equation}
As $\|x^i_k-\hat{x}_k\|\leq\|\mathbf{x}_k-\hat{\mathbf{x}}_k\|$, instead of proving the upper bound of $\|x^i_k-\hat{x}_k\|$, we can show that $\|\mathbf{x}_k-\hat{\mathbf{x}}_k\|$ is bounded correspondingly.
\begin{lem}\label{consensus_lem}
Let Assumption~\ref{assump_1} hold. The sequence $\{x^i_k\}$ generated by OCGD satisfies the following relationship
\begin{equation}
    \|x^i_k-\hat{x}_k\|\leq G\sum_{s=1}^{k-1}\alpha_s\lambda^{k-1-s}.
\end{equation}
\end{lem}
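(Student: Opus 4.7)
The plan is to start from the unrolled recursion stated just before the lemma,
\[
\mathbf{x}_k = \Pi^{k-1}\mathbf{x}_1 - \sum_{s=1}^{k-1}\Pi^{k-1-s}\alpha_s \nabla \mathcal{F}_s(\mathbf{x}_s),
\]
and pass to the deviation from the running average $\hat{\mathbf{x}}_k = \frac{1}{N}\mathbf{1}\mathbf{1}^T \mathbf{x}_k$. Since $\Pi$ is doubly stochastic, $\mathbf{1}^T \Pi = \mathbf{1}^T$, and by iteration $\frac{1}{N}\mathbf{1}\mathbf{1}^T \Pi^{m} = \frac{1}{N}\mathbf{1}\mathbf{1}^T$ for every integer $m\geq 0$. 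Applying $\frac{1}{N}\mathbf{1}\mathbf{1}^T$ to the recursion and using the initialization $\mathbf{x}_1 = \mathbf{0}$ gives $\hat{\mathbf{x}}_k = -\sum_{s=1}^{k-1} \frac{1}{N}\mathbf{1}\mathbf{1}^T \alpha_s \nabla \mathcal{F}_s(\mathbf{x}_s)$, so subtracting yields
\[
\mathbf{x}_k - \hat{\mathbf{x}}_k = -\sum_{s=1}^{k-1}\Bigl(\Pi^{k-1-s} - \tfrac{1}{N}\mathbf{1}\mathbf{1}^T\Bigr)\alpha_s \nabla \mathcal{F}_s(\mathbf{x}_s).
\]

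Next I would invoke the spectral structure of $\Pi$. Because $\Pi$ is symmetric and doubly stochastic with a simple unit eigenvalue corresponding to eigenvector $\mathbf{1}/\sqrt{N}$, and all remaining eigenvalues have magnitude at most $\lambda < 1$, the matrix $\Pi - \frac{1}{N}\mathbf{1}\mathbf{1}^T$ equals the spectral part of $\Pi$ orthogonal to $\mathbf{1}$, so $\|\Pi^{m} - \frac{1}{N}\mathbf{1}\mathbf{1}^T\|_2 \leq \lambda^{m}$ for every $m \geq 0$. Combining this with submultiplicativity and the triangle inequality,
\[
\|\mathbf{x}_k - \hat{\mathbf{x}}_k\| \leq \sum_{s=1}^{k-1}\lambda^{k-1-s}\alpha_s \|\nabla \mathcal{F}_s(\mathbf{x}_s)\|.
\]
Assumption~\ref{assump_1} states $\mathcal{F}$ is $G$-Lipschitz, which on the interior of $\mathcal{X}^N$ gives $\|\nabla \mathcal{F}_s(\mathbf{x}_s)\| \leq G$, so every gradient norm in the sum is bounded by $G$. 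Finally, since $x^i_k - \hat{x}_k$ is a single coordinate block of the stacked vector $\mathbf{x}_k - \hat{\mathbf{x}}_k$, we have $\|x^i_k - \hat{x}_k\| \leq \|\mathbf{x}_k - \hat{\mathbf{x}}_k\|$, yielding the claimed bound $G\sum_{s=1}^{k-1}\alpha_s\lambda^{k-1-s}$.

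The two places I would take care are (i) cleanly justifying the spectral gap bound $\|\Pi^m - \frac{1}{N}\mathbf{1}\mathbf{1}^T\|_2 \leq \lambda^m$ from the listed properties of $\Pi$ (this is standard but deserves one line), and (ii) the interaction between the Euclidean projection step in OCGD and the closed-form unrolled recursion. The statement of the lemma is written directly on top of the unrolled recursion given in the paper, so the cleanest route is to treat that recursion as the working hypothesis and note that the displayed form implicitly assumes iterates remain feasible (or, equivalently, that the projection is non-expansive and can be absorbed into a nearly identical inductive argument). With those two caveats handled, the rest is a direct application of the triangle inequality plus the spectral gap, and I would expect the routine Lipschitz bound on $\nabla\mathcal{F}$ to be the only other ingredient needed.
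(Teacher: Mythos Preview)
Your proposal is correct and is essentially the same argument the paper has in mind: the paper does not spell out a proof but simply states that it ``follows similarly from Lemma~V.2 in \citep{berahas2018balancing},'' and that reference uses precisely the unrolled-recursion-plus-spectral-gap argument you give. Your two flagged caveats (the one-line justification of $\|\Pi^m-\tfrac{1}{N}\mathbf{1}\mathbf{1}^T\|_2\le\lambda^m$ and the fact that the paper's displayed recursion omits the projection step) are apt and match how the paper treats things---it works directly from the unrolled recursion as written.
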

The proof of Lemma~\ref{consensus_lem} follows similarly from the Lemma V.2. in~\citep{berahas2018balancing} where they set a constant step size. This result has been well studied in distributed offline optimization. Even in an online setting, this result still holds, as the underlying graph is not subject to changes, plus that the losses are still Lipschitz continuous. Similar results can be seen from~\citep{li2018distributed} and~\citep{sharma2020distributed}. Adopting Lemma~\ref{consensus_lem}, we next show the upper bound of $\nabla V_k(\mathbf{x}_k)$, which will be useful for proving the main results.
\begin{lem}\label{lem_2}
Let Assumption~\ref{assump_1} hold. The sequence $\{x^i_k\}$ generated by OCGD satisfies the following relationship in the vector form
\begin{equation}
    \|\nabla V_k(\mathbf{x}_k)\|\leq G\Bigg(1+\frac{2}{\alpha_k}\sum_{s=1}^{k-1}\alpha_s\lambda^{k-1-s}\Bigg),
\end{equation}
% where $C=G\Bigg(1+\frac{2}{1-\lambda}\Bigg)$.
\end{lem}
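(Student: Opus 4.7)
The plan is to upper bound $\nabla V_k(\mathbf{x}_k)$ by splitting it into its two natural pieces via the triangle inequality and controlling each separately. Since $V_k(\mathbf{x}_k)=\mathcal{F}_k(\mathbf{x}_k)+\tfrac{1}{2\alpha_k}\mathbf{x}_k^\top(\mathbf{I}_N-\Pi)\mathbf{x}_k$, its gradient is $\nabla V_k(\mathbf{x}_k)=\nabla\mathcal{F}_k(\mathbf{x}_k)+\tfrac{1}{\alpha_k}(\mathbf{I}_N-\Pi)\mathbf{x}_k$, and so
\[
\|\nabla V_k(\mathbf{x}_k)\|\;\le\;\|\nabla\mathcal{F}_k(\mathbf{x}_k)\|+\tfrac{1}{\alpha_k}\|(\mathbf{I}_N-\Pi)\mathbf{x}_k\|.
\]
For the first term, Assumption~\ref{assump_1} immediately yields $\|\nabla\mathcal{F}_k(\mathbf{x}_k)\|\le G$, which is where the leading $G$ in the claim comes from.

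For the consensus term, the key observation is that $\Pi$ is doubly stochastic and symmetric, so the all-ones vector is a right-eigenvector with eigenvalue $1$; hence $\Pi\hat{\mathbf{x}}_k=\hat{\mathbf{x}}_k$ and
\[
(\mathbf{I}_N-\Pi)\mathbf{x}_k=(\mathbf{I}_N-\Pi)(\mathbf{x}_k-\hat{\mathbf{x}}_k).
\]
I would then submultiplicatively bound $\|(\mathbf{I}_N-\Pi)(\mathbf{x}_k-\hat{\mathbf{x}}_k)\|\le \|\mathbf{I}_N-\Pi\|_2\,\|\mathbf{x}_k-\hat{\mathbf{x}}_k\|$. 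Because the eigenvalues of $\Pi$ lie in $[-1,1]$, those of $\mathbf{I}_N-\Pi$ lie in $[0,2]$, so the spectral norm of $\mathbf{I}_N-\Pi$ is at most $2$; this is the source of the $2$ in the claimed bound. Finally, Lemma~\ref{consensus_lem} gives $\|\mathbf{x}_k-\hat{\mathbf{x}}_k\|\le G\sum_{s=1}^{k-1}\alpha_s\lambda^{k-1-s}$ (treating the agent-wise bound as a bound on the stacked vector since $n=1$), which controls how far the agents have drifted from their average.

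Putting the two pieces together and dividing by $\alpha_k$ gives
\[
\|\nabla V_k(\mathbf{x}_k)\|\;\le\;G+\tfrac{1}{\alpha_k}\cdot 2G\sum_{s=1}^{k-1}\alpha_s\lambda^{k-1-s}\;=\;G\Bigl(1+\tfrac{2}{\alpha_k}\sum_{s=1}^{k-1}\alpha_s\lambda^{k-1-s}\Bigr),
\]
which is exactly the stated bound. The main technical obstacle is verifying the two clean constants: (i) that $\|\mathbf{I}_N-\Pi\|_2\le 2$, which rests on the symmetry and doubly stochastic structure of $\Pi$, and (ii) correctly using the scalar-agent ($n=1$) consensus bound from Lemma~\ref{consensus_lem} in the stacked-vector inequality without accruing extra dimensional factors; both are straightforward but easy to mishandle.
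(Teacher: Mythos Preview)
Your proposal is correct and follows essentially the same approach as the paper: split $\nabla V_k$ by the triangle inequality, bound the first piece by $G$ via Assumption~\ref{assump_1}, and control the consensus piece through $\|\mathbf{x}_k-\hat{\mathbf{x}}_k\|$ together with Lemma~\ref{consensus_lem}. The only cosmetic difference is how the constant $2$ is obtained: the paper inserts $\hat{\mathbf{x}}_k$ and writes $\mathbf{x}_k-\Pi\mathbf{x}_k=(\mathbf{x}_k-\hat{\mathbf{x}}_k)+(\hat{\mathbf{x}}_k-\Pi\mathbf{x}_k)$, bounding each summand by $\|\mathbf{x}_k-\hat{\mathbf{x}}_k\|$ (the second via $\Pi\hat{\mathbf{x}}_k=\hat{\mathbf{x}}_k$ and $\|\Pi\|_2\le 1$), whereas you use $(\mathbf{I}_N-\Pi)\mathbf{x}_k=(\mathbf{I}_N-\Pi)(\mathbf{x}_k-\hat{\mathbf{x}}_k)$ and the spectral bound $\|\mathbf{I}_N-\Pi\|_2\le 2$; these are equivalent.
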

\textbf{Implication from Lemma~\ref{lem_2}}. Lemma~\ref{lem_2} suggests the explicit relationship between the gradient of $V_k(\mathbf{x}_k)$ and the network topology, which facilitates the analysis of the optimality. When the step size is constant, i.e., $\alpha_k=\alpha, k=1,2,...,K$, one can easily get the upper bound using the geometric series. Also, the parameter $c$ in the composite regret under this scenario is set $\frac{1}{2\alpha}$. Unfortunately, when the step size is time-varying such that $c$ is also time-varying, the composite regret may not necessarily hold at any time step $k$, particularly when $k$ is sufficiently large, which could turn $\alpha_k$ into a sufficiently small value. Thus, we need to investigate the part of $\frac{1}{\alpha_k}(\mathbf{I}_N-\Pi)\mathbf{x}_k$ in $\nabla V_k(\mathbf{x}_k)$ as $k\to\infty$.
For the constant step size, we provide a lemma to bound the $\nabla V_k(\mathbf{x}_k)$. 
\begin{corol}\label{coro_1}
Let Assumption~\ref{assump_1} hold. The sequence $\{x^i_k\}$ generated by OCGD satisfies the following relationship in the vector form
\begin{equation}
    \|\nabla V_k(\mathbf{x}_k)\|\leq C,
\end{equation}
where $C=G(1+\frac{2}{1-\lambda})$.
\end{corol}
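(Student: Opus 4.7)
The plan is to derive Corollary~\ref{coro_1} as an immediate consequence of Lemma~\ref{lem_2}, specializing the general time-varying step-size bound to the constant step-size case. Lemma~\ref{lem_2} already provides
\[
\|\nabla V_k(\mathbf{x}_k)\| \leq G\Bigg(1 + \frac{2}{\alpha_k}\sum_{s=1}^{k-1}\alpha_s \lambda^{k-1-s}\Bigg),
\]
so the only remaining work is to show that when $\alpha_s = \alpha$ for every $s$, the ratio $\frac{1}{\alpha_k}\sum_{s=1}^{k-1}\alpha_s\lambda^{k-1-s}$ collapses to a uniformly bounded constant independent of $k$.

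First I would substitute $\alpha_k = \alpha$ and $\alpha_s = \alpha$ into the bound, so the common factor $\alpha$ cancels and leaves the pure geometric tail $\sum_{s=1}^{k-1}\lambda^{k-1-s}$. Re-indexing with $j = k-1-s$ converts this into $\sum_{j=0}^{k-2}\lambda^{j} = \frac{1-\lambda^{k-1}}{1-\lambda}$. Since $0 < \lambda < 1$ (this is guaranteed by the properties of the mixing matrix $\Pi$ stated after Definition~\ref{lyapunov_regret}), the numerator is bounded above by $1$, giving the crisp uniform bound $\sum_{j=0}^{k-2}\lambda^{j} \leq \frac{1}{1-\lambda}$ for all $k \geq 1$ (interpreting the empty sum at $k=1$ as $0$, which only makes the bound tighter).

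Plugging this back yields $\|\nabla V_k(\mathbf{x}_k)\| \leq G\bigl(1 + \frac{2}{1-\lambda}\bigr) = C$, which is the desired conclusion. There is no real obstacle here: the corollary is essentially a bookkeeping specialization, and the role of the spectral gap $1-\lambda$ is already isolated inside the geometric summation from Lemma~\ref{lem_2}. The only subtlety worth flagging explicitly in the write-up is that the boundedness is \emph{uniform in} $k$, which is exactly the property needed downstream — for instance, to justify setting $c = \frac{1}{2\alpha}$ in the composite regret and to invoke the constant upper bound on $\hat{\alpha}_k$ used in the CONGD analysis of Appendix~\ref{app_a}.
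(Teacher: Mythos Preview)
Your proposal is correct and follows exactly the paper's own approach: the paper states that Corollary~\ref{coro_1} is obtained from Lemma~\ref{lem_2} by setting $\alpha_k=\alpha$ for all $k$ and bounding the resulting geometric sum by $\frac{1}{1-\lambda}$. Your write-up is in fact slightly more careful than the paper's one-line justification, making the re-indexing and the uniformity in $k$ explicit.
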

The proof for Corollary~\ref{coro_1} is obtained by setting $\alpha_k=\alpha, k=1,2,...,K$ and using $\sum_{k=1}^{\infty}\lambda^k=\frac{1}{1-\lambda}$, with Lemma~\ref{lem_2}. For the diminishing step size, we study two kinds of step sizes that are $\alpha_k=\frac{B}{k}$ and $\alpha_k=\frac{B}{\sqrt{k}}, B> 0$, for all $k\geq 1$. Note that $B$ will be specified explicitly in the main analysis and for the generic purposes, we temporarily use $B$ to represent the constant in the step size. Before giving the upper bound for $\nabla V_k(\mathbf{x}_k)$, one auxiliary lemma is presented as follows.
\begin{lem}\label{lem_3}
Let Assumption~\ref{assump_1} hold. The sequence $\{x^i_k\}$ generated by OCGD in the vector form has
\begin{equation}
    \alpha_k\|(\mathbf{I}_N-\Pi)\mathbf{x}_k\|\leq 2G\sum_{s=1}^{k-1}\alpha^2_s\lambda^{k-1-s}.
\end{equation}
\end{lem}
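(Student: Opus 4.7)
The approach is to unfold the OCGD recursion to obtain a closed form for $\mathbf{x}_k$, apply $(\mathbf{I}_N - \Pi)$, and exploit the spectral structure of the mixing matrix to pull out a geometric factor $\lambda^{k-1-s}$; a monotonicity step on the step-size schedule then upgrades $\alpha_s$ to $\alpha_s^2$ and absorbs the leading $\alpha_k$.

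First I would start from the compact OCGD update $\mathbf{y}_{k+1} = \Pi\mathbf{x}_k - \alpha_k \nabla\mathcal{F}_k(\mathbf{x}_k)$ with $\mathbf{x}_1 = \mathbf{0}$ (the initialization used throughout after Assumption~\ref{assump_2}). An easy induction, identical to the one displayed just before Lemma~\ref{consensus_lem}, gives
\[
\mathbf{x}_k \;=\; -\sum_{s=1}^{k-1}\Pi^{k-1-s}\alpha_s\nabla\mathcal{F}_s(\mathbf{x}_s).
\]
Applying $(\mathbf{I}_N - \Pi)$ and taking norms, the triangle inequality yields
\[
\|(\mathbf{I}_N-\Pi)\mathbf{x}_k\| \;\leq\; \sum_{s=1}^{k-1}\bigl\|(\mathbf{I}_N-\Pi)\Pi^{k-1-s}\bigr\|_{\mathrm{op}}\,\alpha_s\,\|\nabla\mathcal{F}_s(\mathbf{x}_s)\|.
\]

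The spectral step is the heart of the argument. Since $\Pi$ is symmetric and doubly stochastic, it has $\mathbf{1}$ as eigenvector for eigenvalue $1$ and all other eigenvalues $\mu_j$ satisfy $|\mu_j|\leq\lambda$. Because $\mathbf{I}_N-\Pi$ and $\Pi^{k-1-s}$ commute, their product is symmetric with eigenvalues $(1-\mu_j)\mu_j^{k-1-s}$. This vanishes on $\mathbf{1}$ and on the orthogonal complement is bounded in absolute value by $(1+\lambda)\lambda^{k-1-s} \leq 2\lambda^{k-1-s}$, so $\|(\mathbf{I}_N-\Pi)\Pi^{k-1-s}\|_{\mathrm{op}}\leq 2\lambda^{k-1-s}$. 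Coupling this with $\|\nabla\mathcal{F}_s(\mathbf{x}_s)\|\leq G$ from Assumption~\ref{assump_1} gives
\[
\|(\mathbf{I}_N-\Pi)\mathbf{x}_k\| \;\leq\; 2G\sum_{s=1}^{k-1}\alpha_s\lambda^{k-1-s}.
\]
Multiplying by $\alpha_k$ and invoking monotonicity of the diminishing schedules $\alpha_k = B/k$ or $B/\sqrt{k}$ introduced right before the lemma, every $s\leq k-1$ satisfies $\alpha_k\leq\alpha_s$ and hence $\alpha_k\alpha_s \leq \alpha_s^2$, which produces the stated bound.

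\textbf{Main obstacle.} I expect this to be a technicality rather than a conceptual one: the closed-form unfolding ignores the Euclidean projection $\mathbf{P}_{\mathcal{X}^N}$ in Line~7 of Algorithm~\ref{olgd}. I would handle it exactly as Lemmas~\ref{consensus_lem} and~\ref{lem_2} implicitly do, namely by working in the regime where the origin-containing compact set $\mathcal{X}^N$ and the chosen step sizes let us identify $\mathbf{x}_{k+1}$ with the unprojected iterate (or equivalently by absorbing the nonexpansive projection into an operator norm estimate that does not degrade the constant $2\lambda^{k-1-s}$). The monotonicity step also implicitly restricts the lemma to non-increasing schedules, which is consistent with the way it is used downstream in the paper.
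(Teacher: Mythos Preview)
Your proposal is correct and follows essentially the same route as the paper: the paper's proof simply invokes Lemma~\ref{lem_2} to obtain $\|(\mathbf{I}_N-\Pi)\mathbf{x}_k\|\leq 2G\sum_{s=1}^{k-1}\alpha_s\lambda^{k-1-s}$ (which that lemma establishes by routing through the average $\hat{\mathbf{x}}_k$ and Lemma~\ref{consensus_lem}, rather than your direct spectral bound on $(\mathbf{I}_N-\Pi)\Pi^{k-1-s}$), multiplies by $\alpha_k$, and then uses the same monotonicity step $\alpha_k\alpha_s\leq\alpha_s^2$ for non-increasing step sizes. Your treatment of the projection and the restriction to diminishing schedules matches the paper's implicit handling.
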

Lemma~\ref{lem_3} suggests that for any $k$, $\alpha_k\|(\mathbf{I}_N-\Pi)\mathbf{x}_k\|$ is upper bounded such that \[\underset{k}{\textnormal{sup}}\{\alpha_k\|(\mathbf{I}_N-\Pi)\mathbf{x}_k\|\}\leq2G\sum_{s=1}^{k-1}\alpha^2_s\lambda^{k-1-s},\]which suggests that
\[\underset{k}{\textnormal{sup}}\Bigg\{\alpha_k^2\frac{\|(\mathbf{I}_N-\Pi)\mathbf{x}_k\|}{\alpha_k}\Bigg\}\leq2G\sum_{s=1}^{k-1}\alpha^2_s\lambda^{k-1-s}.\]
By summing up from $k=2$ to $\infty$, the following inequality is obtained
\[\sum_{k=2}^{\infty}\underset{k}{\textnormal{sup}}\Bigg\{\alpha_k^2\frac{\|(\mathbf{I}_N-\Pi)\mathbf{x}_k\|}{\alpha_k}\Bigg\}\leq2G\sum_{k=2}^{\infty}\sum_{s=1}^{k-1}\alpha^2_s\lambda^{k-1-s}.\]
The left hand side of the last inequality satisfies \[\sum_{k=2}^{\infty}\alpha_k^2\underset{k}{\textnormal{sup}}\Bigg\{\frac{\|(\mathbf{I}_N-\Pi)\mathbf{x}_k\|}{\alpha_k}\Bigg\}\leq\sum_{k=2}^{\infty}\underset{k}{\textnormal{sup}}\Bigg\{\alpha_k^2\frac{\|(\mathbf{I}_N-\Pi)\mathbf{x}_k\|}{\alpha_k}\Bigg\}.\]
Then the following lemma summarizes the upper bound of $\frac{1}{\alpha_k}\|(\mathbf{I}_N-\Pi)\mathbf{x}_k\|$ when the step size is diminishing.
\begin{lem}\label{lem_4}
Let Assumption~\ref{assump_1} hold. If the step size satisfies $\alpha_k=\frac{B}{k}, B> 0, \forall k\geq 2$, we have
\begin{equation}\label{dimini_1}
    \frac{1}{\alpha_k}\|(\mathbf{I}_N-\Pi)\mathbf{x}_k\|\leq \frac{2G}{1-\lambda};
\end{equation}
if the step size satisfies $\alpha_k=\frac{B}{\sqrt{k}}, B> 0, \forall k\geq 2$, we have
\begin{equation}\label{dimini_2}
    \frac{1}{\alpha_k}\|(\mathbf{I}_N-\Pi)\mathbf{x}_k\|\leq \frac{3.42G}{1-\lambda}.
\end{equation}
\end{lem}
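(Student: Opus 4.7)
My starting point would be Lemma~\ref{lem_3}, which states $\alpha_k\|(\mathbf{I}_N-\Pi)\mathbf{x}_k\| \leq 2G\sum_{s=1}^{k-1}\alpha_s^2\lambda^{k-1-s}$. The quantity we want to control is $\|(\mathbf{I}_N-\Pi)\mathbf{x}_k\|/\alpha_k$, so the natural move is to divide both sides by $\alpha_k^2$, yielding
\[\frac{\|(\mathbf{I}_N-\Pi)\mathbf{x}_k\|}{\alpha_k} \leq \frac{2G}{\alpha_k^2}\sum_{s=1}^{k-1}\alpha_s^2\lambda^{k-1-s}.\]
The whole proof then reduces to showing the right-hand side is bounded uniformly in $k$ by the stated constants for each of the two step-size schedules.

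For $\alpha_k = B/k$, substitution turns the RHS into $2Gk^2\sum_{s=1}^{k-1}\lambda^{k-1-s}/s^2$, and I would split this sum at $s = \lceil k/2\rceil$. On the near range $s \in [\lceil k/2\rceil, k-1]$ the bound $1/s^2 \leq 4/k^2$ lets the geometric factor be summed to yield at most $\tfrac{4}{k^2(1-\lambda)}$. On the far range $s < k/2$ we instead use $\lambda^{k-1-s} \leq \lambda^{k/2-1}$ together with $\sum_{s\geq 1}1/s^2 = \pi^2/6$, giving a contribution of $\lambda^{k/2-1}\pi^2/6$ which decays exponentially. Multiplying by the leading $k^2$ preserves the first piece as a clean $O(1/(1-\lambda))$ constant and leaves the second piece as $k^2\lambda^{k/2-1}\pi^2/6$, which vanishes as $k \to \infty$. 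Taking the supremum over all $k \geq 2$ (checking a handful of small $k$ by hand) delivers the required bound. The $\alpha_k = B/\sqrt{k}$ case proceeds via the same template, with $1/s$ replacing $1/s^2$ in the near part and a harmonic tail $\ln(k/2)+1$ replacing $\pi^2/6$ in the far part.

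The main obstacle is that this crude two-region split tends to lose constant factors, so nailing the sharp numerators $2$ and $3.42$ demands a more delicate treatment — either a refined multi-region split, an integral comparison of $\sum_s \lambda^{k-1-s}/s^p$ with its continuous analogue (which yields incomplete-gamma-type closed forms in $\lambda$), or an Abel-summation rearrangement that exploits the monotonicity of $1/s^p$. A complementary route, hinted at in the paragraph preceding the lemma, is to sum the inequality from Lemma~\ref{lem_3} over $k \geq 2$ and invoke Fubini to collapse the nested sum:
\[\sum_{k=2}^{\infty}\alpha_k\|(\mathbf{I}_N-\Pi)\mathbf{x}_k\| \leq 2G\sum_{s=1}^{\infty}\alpha_s^2\sum_{k=s+1}^{\infty}\lambda^{k-1-s} = \frac{2G}{1-\lambda}\sum_{s=1}^{\infty}\alpha_s^2,\]
from which a weighted-average bound on $\|(\mathbf{I}_N-\Pi)\mathbf{x}_k\|/\alpha_k$ can be extracted. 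This Fubini route is clean for $\alpha_k = B/k$ because $\sum_s\alpha_s^2 = B^2\pi^2/6$ is finite, but it breaks down for $\alpha_k = B/\sqrt{k}$ since $\sum_s 1/s$ diverges; hence the $1/\sqrt{k}$ branch must be carried out via the direct split argument rather than the global summation trick.
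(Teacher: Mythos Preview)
Your ``complementary route'' is in fact the paper's actual argument, not a side remark---and the key point you missed is that it does \emph{not} break down for $\alpha_k = B/\sqrt{k}$. The paper handles the divergence of $\sum_s \alpha_s^2$ by truncating the summation at a finite horizon $K$ and bounding the \emph{ratio}
\[
\frac{\sum_{k=2}^{K}\sum_{s=1}^{k-1}\alpha_s^2\lambda^{k-1-s}}{\sum_{k=2}^{K}\alpha_k^2}
\]
uniformly in $K$. After Fubini the numerator is at most $\tfrac{1}{1-\lambda}\sum_{s=1}^{K-1}\alpha_s^2$, so with $\alpha_s^2 = B^2/s$ and the harmonic bounds $\ln(K+1)-\ln 2 < \sum_{k=2}^{K}1/k \leq \ln K$, the ratio is at most $\tfrac{1}{1-\lambda}\cdot\tfrac{\ln K}{\ln(K+1)-\ln 2}$. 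This last fraction is maximized at $K=2$, giving $\tfrac{\ln 2}{\ln 3 - \ln 2}\approx 1.71$, and $2G\times 1.71/(1-\lambda)$ is precisely the $3.42G/(1-\lambda)$ in the statement. So the mysterious constant $3.42$ is not the output of a delicate split or an incomplete-gamma integral; it is just $2\cdot\tfrac{\ln 2}{\ln 3 - \ln 2}$.

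Your primary route (divide Lemma~\ref{lem_3} by $\alpha_k^2$ and split the convolution at $k/2$) is a genuinely different and more direct pointwise argument. It is sound and would yield a bound of the form $C\cdot G/(1-\lambda)$ plus an exponentially decaying remainder, but as you correctly note it does not naturally recover the constants $2$ and $3.42$ without further sharpening. The paper's ratio trick, by contrast, gets the exact cancellation in the $B/k$ case (numerator and denominator share the same $\sum_s 1/s^2$) and a clean closed-form maximum in the $B/\sqrt{k}$ case; the price is that the passage from the summed inequality back to a supremum bound is handled somewhat loosely in the paper.
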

Although the constant $B$ in this context is not explicitly displaying in the bound due to the cancellation in the proof, as mentioned above, in the subsequent analysis, $B$ is explicitly specified based on diverse scenarios. With Lemma~\ref{lem_4}, the upper bound of $\nabla V_k(\mathbf{x}_k)$ under different diminishing step sizes are attained as follows. When $\alpha_k=\frac{B}{k}$
\begin{equation}
    \|\nabla V_k(\mathbf{x}_k)\|\leq G\Bigg(1+\frac{2}{1-\lambda}\Bigg),
\end{equation}
which is surprisingly the same as that with the constant step size. While for $\alpha_k=\frac{B}{\sqrt{k}}$, we have
\begin{equation}
    \|\nabla V_k(\mathbf{x}_k)\|\leq G\Bigg(1+\frac{3.42}{1-\lambda}\Bigg),
\end{equation}
which is slightly larger. Intuitively, this makes sense because among three different variants of step sizes, $\alpha_k=\frac{B}{\sqrt{k}}$ is the slowest one that could result in a larger bound, as the upper bound applies to all $k\geq2$, not only when $k$ is sufficiently large.
With Lemma~\ref{lem_4} in hand, we now proceed to prove Theorem~\ref{theorem_1}, which is provided in Appendix A.
%\subsubsection{Proof of Theorem~\ref{theorem_1}}
When substituting $C=G\Bigg(1+\frac{2}{1-\lambda}\Bigg)$ into the Eq.~\ref{eq_33}, we have
\begin{equation}
    \textnormal{SC-regret}_K\leq\frac{G^2}{2\mu}\Bigg(1+\frac{2}{1-\lambda}\Bigg)^2(1+\textnormal{log}K).
\end{equation}
It can be observed from the last inequality that adopting the static form of the composite regret enables the OCGD to obtain a logarithmic regret bound when the losses are strongly convex, which matches the regret bound in the centralized scenario. However, a significant difference is that the regret bound is dependent on the network topology as it is inversely proportional to the square of the spectral gap $1-\lambda$. The similar relationship between the function error and the spectral gap has been widely shown in the distributed offline optimization. We also notice that this should be expected in the online setting where the underlying graph has no changes compared to that in the offline setting.
%\subsubsection{Proof of Theorem~\ref{theorem_2}}

One observation is that with the step size in the order of $\mathcal{O}(\frac{1}{\sqrt{K}})$, the OCGD enables the $\textnormal{SC-regret}_K$ to grow with a sublinear order of $\mathcal{O}(\sqrt{K})$ and the $\textnormal{DC-regret}_K$ to grow with a sublinear of $\mathcal{O}(\sqrt{K+KP_K})$, which matches the state-of-the-art performance~\citep{bastianello2020distributed,sharma2020distributed, yi2020distributed}. This suggests a good performance for the OCGD as in~\citep{li2018distributed}. Compared to the primal-dual approaches, the proposed OCGD maintains the simple algorithmic update as the centralized OGD has. Another implication that can be made from the results of Theorem~\ref{theorem_2} is that using the $\textnormal{DC-regret}_K$ results in a larger constant in the regret bound due to the path variation. Intuitively this makes sense because the running best policy generates variances during the period. If for all $k$ it holds that $\mathbf{x}_{*,k+1}=\mathbf{x}_{*,k}$, then $P_K=0$ such that the regret bound for the $\textnormal{DC-regret}_K$ is exactly the same as that of the $\textnormal{SC-regret}_K$. This suggests that all running best policies are the same fixed best policy.
\begin{rem}
Theorem~\ref{theorem_2} suggests that the regret bound has an explicit relationship with the network connectivity. When substituting $C=G(1+\frac{2}{1-\lambda})$ into the regret bounds, it clearly shows that the regret bound is inversely proportional to the spectral gap $1-\lambda$, which varies depending on the specific network topology. When the network is sparse, such a ring network, $1-\lambda$ is less than that of a fully connected network. Thus, compared to a fully connected network, within a ring network, the regret bound is relatively larger. This also validates the proposal of the composite regret involving an extra network loss term that plays a critical role to take into account the ``spatial variations" among diverse agents, which presents an explicit relationship between $P_K$ and $1-\lambda$.  
\end{rem}
As Table~\ref{table:findings} in Section~\ref{table_section} shows, when the step size $\alpha_k=\frac{B}{\sqrt{k}}, \forall k\geq 1$, using the $\textnormal{SC-regret}_K$ can achieve the same order of regret bound and we provide all the proof in the sequel.
\subsection{Proof for Lemma~\ref{lem_2}}
\begin{proof}
Based on the expression of $\nabla V_k(\mathbf{x}_k)$, one can obtain\[\nabla V_k(\mathbf{x}_k)=\nabla\mathcal{F}_k(\mathbf{x}_k)+\frac{1}{\alpha_k}(\mathbf{I}_N-\Pi)\mathbf{x}_k.\]
It is immediately obtained that \[\|\nabla V_k(\mathbf{x}_k)\|\leq\|\nabla\mathcal{F}_k(\mathbf{x}_k)\|+\frac{1}{\alpha_k}\|\mathbf{x}_k-\Pi\mathbf{x}_k\|.\] For the second term on the right hand side of the last inequality, we can rewrite it as \[\frac{1}{\alpha_k}\|\mathbf{x}_k-\hat{\mathbf{x}}_k+\hat{\mathbf{x}}_k-\Pi\mathbf{x}_k\|\leq\frac{1}{\alpha_k}(\|\mathbf{x}_k-\hat{\mathbf{x}}_k\|+\|\hat{\mathbf{x}}_k-\Pi\mathbf{x}_k\|)\]
As $\frac{1}{N}\mathbf{1}\mathbf{1}^T\Pi = \frac{1}{N}\Pi\mathbf{1}\mathbf{1}^T=\frac{1}{N}\mathbf{1}\mathbf{1}^T$, we can obtain \[\|\nabla V_k(\mathbf{x}_k)\|\leq\|\nabla\mathcal{F}_k(\mathbf{x}_k)\|+\frac{2}{\alpha_k}\|\mathbf{x}_k-\hat{\mathbf{x}}_k\|. \]With Assumption~\ref{assump_1} and Lemma~\ref{consensus_lem}, we can obtain\[\|\nabla V_k(\mathbf{x}_k)\|\leq G+\frac{2G}{\alpha_k}\sum_{s=1}^{k-1}\alpha_s\lambda^{k-1-s},\]which completes the proof.
\end{proof}
\subsection{Proof for Lemma~\ref{lem_3}}
\begin{proof}
According to Lemma~\ref{lem_2}, we attain 
\[\alpha_k\|(\mathbf{I}_N-\Pi)\mathbf{x}_k\|\leq 2G\alpha_k\sum_{s=1}^{k-1}\alpha_s\lambda^{k-1-s}.\]
As the diminishing step size is non-increasing, then we have
\[\alpha_k\sum_{s=1}^{k-1}\alpha_s\lambda^{k-1-s}\leq \sum_{s=1}^{k-1}\alpha_s^2\lambda^{k-1-s},\]which completes the proof.
\end{proof}
\subsection{Proof for Lemma~\ref{lem_4}}
\begin{proof}
Recall
\begin{equation}\label{eq_23}
    \sum_{k=2}^{\infty}\alpha_k^2\underset{k}{\textnormal{sup}}\Bigg\{\frac{\|(\mathbf{I}_N-\Pi)\mathbf{x}_k\|}{\alpha_k}\Bigg\}\leq 2G\sum^{\infty}_{k=2}\sum_{s=1}^{k-1}\alpha_s^2\lambda^{k-1-s}.
\end{equation}
Hence, we have
\begin{equation}\label{eq_24}
    \underset{k}{\textnormal{sup}}\Bigg\{\frac{\|(\mathbf{I}_N-\Pi)\mathbf{x}_k\|}{\alpha_k}\Bigg\}\leq\frac{2G\sum^{\infty}_{k=2}\sum_{s=1}^{k-1}\alpha_s^2\lambda^{k-1-s}}{\sum_{k=2}^{\infty}\alpha_k^2}.
\end{equation}
When $\alpha_k$ is set $\frac{B}{k}$ such that $\sum^{\infty}_{k=2}\sum_{s=1}^{k-1}\alpha_s^2\lambda^{k-1-s}=\sum_{s=2}^{\infty}\frac{B^2}{s^2}\sum_{k=s+1}^{\infty}\lambda^{k-1-s}=\sum_{s=2}^{\infty}\frac{B^2}{s^2(1-\lambda)}$, then combining Eq.~\ref{eq_24}, we can obtain Eq.~\ref{dimini_1}. When $\alpha_k=\frac{B}{\sqrt{k}}$, we need to provide the upper and lower bounds for the term $\sum_{k=2}^K\alpha_k^2$. Due to properties of the Harmonic series~\citep{nedic2014distributed}, we have $\textnormal{ln}(K+1)-\textnormal{ln}2=\int^{K+1}_{2}\frac{1}{k}<\sum_{k=2}^K\alpha_k^2\leq \textnormal{ln}K$. Thus, the following relationship is attained
\begin{equation}
    \underset{k}{\textnormal{sup}}\Bigg\{\frac{\|(\mathbf{I}_N-\Pi)\mathbf{x}_k\|}{\alpha_k}\Bigg\}\leq\lim_{K\to\infty}\frac{2G\sum^{K}_{k=2}\sum_{s=1}^{k-1}\alpha_s^2\lambda^{k-1-s}}{\sum_{k=2}^{K}\alpha_k^2}\leq \lim_{K\to\infty}\frac{2G\textnormal{ln}K}{(1-\lambda)(\textnormal{ln}(K+1)-\textnormal{ln}2)}
\end{equation}
As $\frac{\textnormal{ln}K}{\textnormal{ln}(K+1)-\textnormal{ln}2}, \forall K\geq 2$ is bounded above by $\frac{\textnormal{ln}2}{\textnormal{ln}3-\textnormal{ln}2}\approx 1.71$, when $\alpha_k=\frac{B}{\sqrt{k}}$, Eq.~\ref{dimini_2} is resultant.
\end{proof}
\subsection{Proof of Theorem~\ref{theorem_1}}
\begin{proof}
Let $\mathbf{x}_*\in\textnormal{arg min}_{\mathbf{x}\in\mathcal{X}^N}\sum_{k=1}^KV_k(\mathbf{x})$. Recall the definition of the $\textnormal{SC-regret}_K$
\[\textnormal{SC-regret}_K=\sum_{k=1}^KV_k(\mathbf{x}_k)-\sum_{k=1}^KV_k(\mathbf{x}_*).\]
Applying the definition of $\mu$-strong convexity to two points $\mathbf{x}_k$ and $\mathbf{x}_*$ at $V_k(\mathbf{x}_k)$, we have
\begin{equation}\label{eq_28}
    2(V_k(\mathbf{x}_k)-V_k(\mathbf{x}_*))\leq2\nabla V_k(\mathbf{x}_k)^T(\mathbf{x}_k-\mathbf{x}_*)-\mu\|\mathbf{x}_k-\mathbf{x}_*\|^2.
\end{equation}
We next upper bound $\nabla V_k(\mathbf{x}_k)^T(\mathbf{x}_k-\mathbf{x}_*)$. Recalling the update rule of the OCGD and applying the nonexpansiveness property of the projection lead to
\begin{equation}
    \|\mathbf{x}_{k+1}-\mathbf{x}_*\|^2=\|\mathbf{P}_{\mathcal{X}^N}(\mathbf{x}_k-\alpha_k\nabla V_k(\mathbf{x}_k))-\mathbf{x}_*\|^2\leq\|\mathbf{x}_k-\alpha_k\nabla V_k(\mathbf{x}_k)-\mathbf{x}_*\|^2.
\end{equation}
Expanding the last inequality yields
\begin{equation}
    \|\mathbf{x}_{k+1}-\mathbf{x}_*\|^2\leq\|\mathbf{x}_{k}-\mathbf{x}_*\|^2+\alpha_k^2\|\nabla V_k(\mathbf{x}_k)\|^2-2\alpha_k\nabla V_k(\mathbf{x}_k)^T(\mathbf{x}_{k}-\mathbf{x}_*),
\end{equation}
and
\begin{equation}
    2\nabla V_k(\mathbf{x}_k)^T(\mathbf{x}_{k}-\mathbf{x}_*)\leq\frac{\|\mathbf{x}_{k}-\mathbf{x}_*\|^2-\|\mathbf{x}_{k+1}-\mathbf{x}_*\|^2}{\alpha_k}+\alpha_kC.
\end{equation}
Summing from $k=1$ to $K$, setting $\alpha_k=\frac{1}{\mu k}$ (defining $B=\frac{1}{\mu}$ and $\frac{1}{\alpha_0}=0$), and combining with Eq.~\ref{eq_28}, we have
\begin{equation}
    2\sum_{k=1}^K(V_k(\mathbf{x}_k)-V_k(\mathbf{x}_*))\leq\sum_{k=1}^K\|\mathbf{x}_k-\mathbf{x}_*\|^2\Bigg(\frac{1}{\alpha_k}-\frac{1}{\alpha_{k-1}}-\mu\Bigg)+C^2\sum_{k=1}^K\alpha_k.
\end{equation}
As $\frac{1}{\alpha_0}=0$ and $\|\mathbf{x}_{K+1}-\mathbf{x}_*\|^2\geq 0$, we attain
\begin{equation}\label{eq_33}
    2\sum_{k=1}^K(V_k(\mathbf{x}_k)-V_k(\mathbf{x}_*))\leq 0+C^2\sum_{k=1}^K\frac{1}{\mu k}\leq \frac{C^2}{\mu}(1+\textnormal{log}K),
\end{equation}
which completes the proof.
\end{proof}

\subsection{Proof of Theorem~\ref{theorem_2}}
\subsubsection{Proof for Theorem~\ref{theorem_2} with Constant Step Size}
\begin{proof}
As the step size in this theorem is constant, we let $\alpha_k=\alpha, \forall k$.

\textbf{SC-regret:}
Let $\mathbf{x}_*\in\textnormal{argmin}_{\mathbf{x}\in\mathcal{X}^N}\sum_{k=1}^KV_k(\mathbf{x}_k)$. Recall the definition of $\textnormal{SC-regret}_K$\[\textnormal{SC-regret}_K=\sum_{k=1}^KV_k(\mathbf{x}_k)-\sum_{k=1}^KV_k(\mathbf{x}_*).\] Applying the definition of convexity to $\mathbf{x}_k$ and $\mathbf{x}_*$ yields
\begin{equation}
    V_k(\mathbf{x}_k)-V_k(\mathbf{x}_*)\leq \nabla V_k(\mathbf{x}_k)^T(\mathbf{x}_k-\mathbf{x}_*).
\end{equation}
We proceed to get the upper bound for $\nabla V_k(\mathbf{x}_k)^T(\mathbf{x}_k-\mathbf{x}_*)$. Using the update law of OCGD and the nonexpansiveness property of the projection, we get 
\begin{equation}
    \|\mathbf{x}_{k+1}-\mathbf{x}_*\|^2=\|\mathbf{P}_{\mathcal{X}^N}(\mathbf{x}_k-\alpha\nabla V_k(\mathbf{x}_k))-\mathbf{x}_*\|^2\leq\|\mathbf{x}_k-\alpha\nabla V_k(\mathbf{x}_k)-\mathbf{x}_*\|^2
\end{equation}
Hence, 
\begin{equation}
    \|\mathbf{x}_{k+1}-\mathbf{x}_*\|^2\leq \|\mathbf{x}_k-\mathbf{x}_*\|^2 + \alpha^2\|\nabla V_k(\mathbf{x}_k)\|^2 - 2\alpha\nabla V_k(\mathbf{x}_k)^T(\mathbf{x}_k-\mathbf{x}_*)
\end{equation}
and 
\begin{equation}
    2\nabla V_k(\mathbf{x}_k)^T(\mathbf{x}_k-\mathbf{x}_*)\leq\frac{1}{\alpha}(\|\mathbf{x}_k-\mathbf{x}_*\|^2-\|\mathbf{x}_{k+1}-\mathbf{x}_*\|^2) + \alpha C^2.
\end{equation}
The last equation follows from the result of Lemma~\ref{coro_1}. By summing up the last inequality from $k=1$ to $K$, we obtain
\begin{equation}\label{eq_21}
    2\sum_{k=1}^K\nabla V_k(\mathbf{x}_k)^T(\mathbf{x}_k-\mathbf{x}_*)\leq \frac{1}{\alpha}\sum_{k=1}^K(\|\mathbf{x}_k-\mathbf{x}_*\|^2-\|\mathbf{x}_{k+1}-\mathbf{x}_*\|^2)+K\alpha C^2.
\end{equation}
The right hand side of the Eq.~\ref{eq_21} can be bounded above by \[\frac{1}{\alpha}\|\mathbf{x}_1-\mathbf{x}_*\|^2+K\alpha C^2.\]
Hence, combining the convexity, we get
\begin{equation}
    2\sum_{k=1}^K(V_k(\mathbf{x}_k)-V_k(\mathbf{x}_*))\leq\frac{1}{\alpha}\|\mathbf{x}_1-\mathbf{x}_*\|^2+K\alpha C^2
\end{equation}
Using Assumption~\ref{assump_2}, the following is obtained
\begin{equation}
    2\sum_{k=1}^K(V_k(\mathbf{x}_k)-V_k(\mathbf{x}_*))\leq\frac{ND^2}{\alpha}+K\alpha C^2
\end{equation}
Defining the step size as
\[\alpha=\frac{\sqrt{N}D}{C\sqrt{K}},\] we have
\begin{equation}
    \sum_{k=1}^K(V_k(\mathbf{x}_k)-V_k(\mathbf{x}_*))\leq DC\sqrt{NK}.
\end{equation}
\textbf{DC-regret:} For the dynamic regret, we investigate $\|\mathbf{x}_{k+1}-\mathbf{x}_{*,k+1}\|^2$. Define $z_{k+1}=\|\mathbf{x}_{k+1}-\mathbf{x}_{*,k+1}\|^2$. Then we have
\begin{equation}
    z_{k+1}=\|\mathbf{x}_{k+1}-\mathbf{x}_{*,k}+\mathbf{x}_{*,k}-\mathbf{x}_{*,k+1}\|^2.
\end{equation}
Expanding the last equality yields
\begin{equation}
   z_{k+1}=\|\mathbf{x}_{k+1}-\mathbf{x}_{*,k}\|^2+\|\mathbf{x}_{*,k}-\mathbf{x}_{*,k+1}\|^2+2(\mathbf{x}_{k+1}-\mathbf{x}_{*,k})^T(\mathbf{x}_{*,k}-\mathbf{x}_{*,k+1}).
\end{equation}
Substituting the update law of OCGD and using the nonexpansiveness property of the projection leads to
\begin{equation}
   z_{k+1}=\|\mathbf{x}_k-\alpha\nabla V_k(\mathbf{x}_k)-\mathbf{x}_{*,k}\|^2+\|\mathbf{x}_{*,k}-\mathbf{x}_{*,k+1}\|^2+2(\mathbf{x}_{k+1}-\mathbf{x}_{*,k})^T(\mathbf{x}_{*,k}-\mathbf{x}_{*,k+1}),
\end{equation}
which suggests
\begin{equation}
    z_{k+1}\leq\|\mathbf{x}_k-\alpha\nabla V_k(\mathbf{x}_k)-\mathbf{x}_{*,k}\|^2 + \|\mathbf{x}_{*,k}-\mathbf{x}_{*,k+1}\|^2 + 2\|\mathbf{x}_{k+1}-\mathbf{x}_{*,k}\|\|\mathbf{x}_{*,k}-\mathbf{x}_{*,k+1}\|.
\end{equation}
The last inequality holds due to the Cauthy-Schwarz inequality. We then obtain
\begin{equation}
    z_{k+1}\leq z_k + \alpha^2\|\nabla V(\mathbf{x}_k)\|^2 - 2\alpha \nabla V(\mathbf{x}_k)^T(\mathbf{x}_k-\mathbf{x}_{*,k})+3\sqrt{N}D\|\mathbf{x}_{*,k}-\mathbf{x}_{*,k+1}\|
\end{equation}
We proceed to bound $\nabla V(\mathbf{x}_k)^T(\mathbf{x}_k-\mathbf{x}_{*,k})$. By modifying the last inequality, the following inequality is acquired
\begin{equation}\label{eq_30}
    2\nabla V(\mathbf{x}_k)^T(\mathbf{x}_k-\mathbf{x}_{*,k})\leq\frac{1}{\alpha}(z_k-z_{k+1})+\alpha\|\nabla V(\mathbf{x}_k)\|^2+\frac{3\sqrt{N}D}{\alpha}\|\mathbf{x}_{*,k}-\mathbf{x}_{*,k+1}\|.
\end{equation}
By summing up Eq.~\ref{eq_30} from 1 to $K$, one can
\begin{equation}\label{eq_31}
    2\sum_{k=1}^K\nabla V(\mathbf{x}_k)^T(\mathbf{x}_k-\mathbf{x}_{*,k})\leq\frac{1}{\alpha}\sum_{k=1}^K(z_k-z_{k+1})+K\alpha C^2+\frac{3\sqrt{N}D}{\alpha}\sum_{k=1}^K\|\mathbf{x}_{*,k}-\mathbf{x}_{*,k+1}\|.
\end{equation}
Eq.~\ref{eq_31} follows from Lemma~\ref{coro_1}. The first term on the right hand side of Eq.~\ref{eq_31} can be bounded by $\frac{ND^2}{\alpha}$. While the third term of Eq.~\ref{eq_31} is bounded by $\frac{3\sqrt{N}DP_K}{\alpha}$. Hence, we arrive at
\begin{equation}
    2\sum_{k=1}^K\nabla V(\mathbf{x}_k)^T(\mathbf{x}_k-\mathbf{x}_{*,k})\leq\frac{\sqrt{N}D(\sqrt{N}D+3P_K)}{\alpha} + K\alpha C^2.
\end{equation}
With the convexity, we have
\begin{equation}
    2\sum_{k=1}^K(V_k(\mathbf{x}_k)-V_k(\mathbf{x}_{*,k}))\leq \frac{\sqrt{N}D(\sqrt{N}D+3P_K)}{\alpha} + K\alpha C^2.
\end{equation}
Define the step size as \[\alpha=\frac{\sqrt{\sqrt{N}D(\sqrt{N}D+3P_K)}}{C\sqrt{K}}.\]
Therefore, we can obtain
\begin{equation}
    \sum_{k=1}^K(V_k(\mathbf{x}_k)-V_k(\mathbf{x}_{*,k}))\leq\sqrt{\sqrt{N}D(\sqrt{N}D+3P_K)}C\sqrt{K},
\end{equation}
which completes the proof.
\end{proof}
\subsubsection{Proof for Theorem~\ref{theorem_2} with Diminishing Step Size}
\begin{proof}
\textbf{SC-regret:}
Let $\mathbf{x}_*\in\textnormal{argmin}_{\mathbf{x}\in\mathcal{X}}\sum_{k=1}^KV_k(\mathbf{x}_k)$. Recall the definition of $\textnormal{SC-regret}_K$\[\textnormal{SC-regret}_K=\sum_{k=1}^KV_k(\mathbf{x}_k)-\sum_{k=1}^KV_k(\mathbf{x}_*).\] Applying the definition of convexity to $\mathbf{x}_k$ and $\mathbf{x}_*$ yields
\begin{equation}
    V_k(\mathbf{x}_k)-V_k(\mathbf{x}_*)\leq \nabla V_k(\mathbf{x}_k)^T(\mathbf{x}_k-\mathbf{x}_*).
\end{equation}
We proceed to get the upper bound for $\nabla V_k(\mathbf{x}_k)^T(\mathbf{x}_k-\mathbf{x}_*)$. Using the update law of OCGD and the nonexpansiveness property of the projection, we get 
\begin{equation}
    \|\mathbf{x}_{k+1}-\mathbf{x}_*\|^2=\|\mathbf{P}_{\mathcal{X}^N}(\mathbf{x}_k-\alpha_k\nabla V_k(\mathbf{x}_k))-\mathbf{x}_*\|^2\leq\|\mathbf{x}_k-\alpha_k\nabla V_k(\mathbf{x}_k)-\mathbf{x}_*\|^2
\end{equation}
Hence, 
\begin{equation}
    \|\mathbf{x}_{k+1}-\mathbf{x}_*\|^2\leq \|\mathbf{x}_k-\mathbf{x}_*\|^2 + \alpha^2_k\|\nabla V_k(\mathbf{x}_k)\|^2 - 2\alpha_k\nabla V_k(\mathbf{x}_k)^T(\mathbf{x}_k-\mathbf{x}_*)
\end{equation}
and 
\begin{equation}
    2\nabla V_k(\mathbf{x}_k)^T(\mathbf{x}_k-\mathbf{x}_*)\leq\frac{1}{\alpha_k}(\|\mathbf{x}_k-\mathbf{x}_*\|^2-\|\mathbf{x}_{k+1}-\mathbf{x}_*\|^2) + \alpha_k G^2\Bigg(1+\frac{3.42}{1-\lambda}\Bigg)^2.
\end{equation}
The last equation follows from the result of Lemma~\ref{lem_4}. By summing up the last inequality from $k=1$ to $K$, and setting \[\alpha_k=\frac{\sqrt{N}D}{G\Bigg(1+\frac{3.42}{1-\lambda}\Bigg)\sqrt{k}},\] with $\frac{1}{\alpha_0}$ := 0 we obtain
\begin{equation}\label{eq_21_1}
    2\sum_{k=1}^K\nabla V_k(\mathbf{x}_k)^T(\mathbf{x}_k-\mathbf{x}_*)\leq \sum_{k=1}^K\frac{1}{\alpha_k}(\|\mathbf{x}_k-\mathbf{x}_*\|^2-\|\mathbf{x}_{k+1}-\mathbf{x}_*\|^2)+\sum_{k=1}^K\alpha_k G^2\Bigg(1+\frac{3.42}{1-\lambda}\Bigg)^2.
\end{equation}
The right hand side of the Eq.~\ref{eq_21_1} can be bounded above by \[\sum_{k=1}^K\|\mathbf{x}_k-\mathbf{x}_{*}\|^2\Bigg(\frac{1}{\alpha_k}-\frac{1}{\alpha_{k-1}}\Bigg)+G^2\Bigg(1+\frac{3.42}{1-\lambda}\Bigg)^2\sum_{k=1}^K\alpha_k,\]which follows from $\frac{1}{\alpha_0}=0$ and $\|\mathbf{x}_{K+1}-\mathbf{x}_{*}\|^2\geq 0$.
Hence, combining the convexity, we get
\begin{equation}
    2\sum_{k=1}^K(V_k(\mathbf{x}_k)-V_k(\mathbf{x}_*))\leq\sum_{k=1}^K\|\mathbf{x}_k-\mathbf{x}_{*}\|^2\Bigg(\frac{1}{\alpha_k}-\frac{1}{\alpha_{k-1}}\Bigg)+G^2\Bigg(1+\frac{3.42}{1-\lambda}\Bigg)^2\sum_{k=1}^K\alpha_k.
\end{equation}
Using Assumption~\ref{assump_2}, the following is obtained
\begin{equation}
    2\sum_{k=1}^K(V_k(\mathbf{x}_k)-V_k(\mathbf{x}_*))\leq N D^2\sum_{k=1}^K\Bigg(\frac{1}{\alpha_k}-\frac{1}{\alpha_{k-1}}\Bigg)+G^2\Bigg(1+\frac{3.42}{1-\lambda}\Bigg)^2\sum_{k=1}^K\alpha_k.
\end{equation}
Thus, we have,
\begin{equation}
    2\sum_{k=1}^K(V_k(\mathbf{x}_k)-V_k(\mathbf{x}_*))\leq N D^2\frac{1}{\alpha_K}+G^2\Bigg(1+\frac{3.42}{1-\lambda}\Bigg)^2\sum_{k=1}^K\alpha_k\leq 3\sqrt{N}DG\Bigg(1+\frac{3.42}{1-\lambda}\Bigg)\sqrt{K}.
\end{equation}
The last inequality follows from $\sum_{k=1}^K\frac{1}{\sqrt{k}}\leq 2\sqrt{K}$. Hence,
\begin{equation}
    \sum_{k=1}^K(V_k(\mathbf{x}_k)-V_k(\mathbf{x}_*))\leq \frac{3\sqrt{N}DG}{2}\Bigg(1+\frac{3.42}{1-\lambda}\Bigg)\sqrt{K},
\end{equation}
which completes the proof.
\end{proof}

\section{Applications to DSGD}\label{app_f}
\subsection{Distributed ERM}
It has been well known that for machine learning problems the distributed empirical risk minimization can be expressed as follows, 
% with a graph $\mathcal{G}=(\mathcal{V},\mathcal{E}), \mathcal{V}=\{1,2,...,N\}, \mathcal{E}\subseteq \mathcal{V}\times \mathcal{V}$,
% \begin{equation}
%     \textnormal{min}_{x\in\mathbb{R}^n}\frac{1}{S}\sum_{s=1}^Sf_s(x)
% \end{equation}
% where $S$ is the cardinality of the dataset $\mathcal{D}$, and $f:\mathbb{R}^n\to\mathbb{R}$. When considering data parallelism in a distributed setting with $N$ agents, the above problem formulation can be cast into
\begin{equation}\label{dis_erm_1}
    \textnormal{min}_{x\in\mathbb{R}^n}\frac{1}{S}\sum_{i=1}^N\sum_{s\in \mathcal{D}_i}f^i_s(x)
\end{equation}
where $S$ is the cardinality of the dataset $\mathcal{D}$, and $f:\mathbb{R}^n\to\mathbb{R}$, $\mathcal{D}_i$ is data subset owned by the agent $i$ and the cardinality of $\mathcal{D}_i$ is $S_i$ such that $S=\sum_{i=1}^NS_i$. Eq.~\ref{dis_erm_1} can be rewritten as the following constrained minimization formulation~\citep{berahas2018balancing}
\begin{equation}\label{dis_erm_2}
    \textnormal{min}_{x^i\in\mathbb{R}^n}\frac{1}{S}\sum_{i=1}^N\sum_{s\in S_i}f^i_s(x^i),\;\textnormal{s.t.}\;x^i=x^j, j\in Nb(i)
\end{equation}
% where $Nb(i):=\{j\in\mathcal{V}|(i,j)\in \mathcal{E}\}\cup\{i\}$ is the neighborhood of agent $i$.
In a compact form, we can further rewrite Eq.~\ref{dis_erm_2} into
\begin{equation}
    \textnormal{min}_{x^i\in\mathbb{R}^n}\;\mathcal{F}(\mathbf{x})=\sum_{i=1}^Nf^i(x^i),\;\textnormal{s.t.}\;(\Pi\otimes\mathbf{I}_n)\mathbf{x}=\mathbf{x}
\end{equation}
where $f^i(\cdot):=\sum_{s\in S_i}f^i_s(\cdot)$, $\mathbf{x}\in\mathbb{R}^{nN}$ is the concatenation of all local $x^i$'s as shown before and $\Pi$ is mixing matrix of size $\mathbb{R}^{N\times N}$. $\mathbf{I}_n$ is the identity matrix of dimension of $n$ and $\otimes$ is the Kronecker product such that $\Pi\otimes\mathbf{I}_n\in\mathbb{R}^{nN\times nN}$. To simplify the analysis, $n$ is set to 1 and the analysis presented in this paper can be extensively applicable to the case where $n> 1$. Hence, we have
\begin{equation}\label{eq5}
    \textnormal{min}_{x^i\in\mathbb{R}}\;\mathcal{F}(\mathbf{x})=\sum_{i=1}^Nf^i(x^i),\;\textnormal{s.t.}\;\Pi\mathbf{x}=\mathbf{x}    
\end{equation}
We can observe that Eq.~\ref{eq5} has a hard constraint on the $\mathbf{x}$ with the mixing matrix $\Pi$. Relaxing this hard constraint, we perceive the problem in another perspective to get the following formulation 
\begin{equation}\label{uncon}
    \textnormal{min}_{x^i\in\mathbb{R}}\;V(\mathbf{x})=\sum_{i=1}^Nf^i(x^i) + c\mathbf{x}^T(\mathbf{I}_N-\Pi)\mathbf{x}
\end{equation}
where $c$ is a regularity parameter. It has been shown in~\citep{berahas2018balancing} that the offline distributed gradient descent (DGD) method can be thought of as a gradient method on Eq.~\ref{uncon} by applying a unit steplength. We next briefly introduce the offline DGD to investigate how the update law is related to the loss $V(\mathbf{x})$.
In a compact form~\citep{jiang2017collaborative,berahas2018balancing}, the offline DGD is expressed as
\begin{equation}
    \mathbf{x}_{k+1} = \Pi\mathbf{x}_k - \alpha\nabla \mathcal{F}(\mathbf{x}_k)
\end{equation}
where $\alpha> 0$ is a step size and $\nabla \mathcal{F}(\mathbf{x}_k)$ is the gradient of $\mathcal{F}$ at the point $\mathbf{x}_k$. Immediately, it can be rewritten as
\begin{equation}
    \mathbf{x}_{k+1} = \mathbf{x}_k - \mathbf{x}_k + \Pi\mathbf{x}_k - \alpha\nabla \mathcal{F}(\mathbf{x}_k) = \mathbf{x}_k - \alpha(\nabla \mathcal{F}(\mathbf{x}_k) + \alpha^{-1}(\mathbf{I}_N-\Pi)\mathbf{x}_k)
\end{equation}
Therefore, one can observe that the equivalent gradient of $\nabla \mathcal{F}(\mathbf{x}_k) + \alpha^{-1}(\mathbf{I}_N-\Pi)\mathbf{x}_k$ is equal to $\nabla V(\mathbf{x}_k)$ when $c=\frac{1}{2\alpha}$. 
% Such a relationship motivates the derivation of the Lyapunov regret.
\subsection{Sublinear Convergence Rate for DSGD}
A special case of distributed online convex optimization is the setting of stochastic optimization that has been studied well in literature. We have introduced the distributed ERM that has been adopted widely for the stochastic optimization above. This holds for most problems due to the difficulty of calculating the expectation as follows.
\[\mathbb{E}\Bigg[\sum_{i=1}^Nf^i(x^i;\omega)\Bigg]\approx\frac{1}{S}\sum_{i=1}^N\sum_{s\in\mathcal{D}_i}f^i_s(x^i;\omega),\;\textnormal{s.t.}\;x^i=x^j,\;i,j\in Nb(i)\]where $\omega$ is the random seed to represent the sample-based behavior. It is assumed that for agent $i$, $f^i$ is sampled at an independent and identically distributed (i.i.d.) realization $\omega_k$ of $\omega$ at each iteration. Moving the constraint into the loss and rewriting the last equation results in the analogous form of Eq.~\ref{uncon}. 
For the convenience of the analysis, we define\[\mathbb{E}[H(\mathbf{x};\omega)]=\mathbb{E}\Bigg[\sum_{i=1}^Nf^i(x^i;\omega)+c\mathbf{x}^T(\mathbf{I}_N-\Pi)\mathbf{x}\Bigg].\] Hence, in an online perspective, we have $V_k(\mathbf{x})=H(\mathbf{x};\omega_k)$. Define $\bar{\mathbf{x}}_K=\frac{1}{K}\sum_{k=1}^K\mathbf{x}_k$. We attain 
\begin{equation}
    H(\bar{\mathbf{x}}_K;\omega_k)=V_k(\bar{\mathbf{x}}_K)\leq\frac{1}{K}\sum_{k=1}^KV_k(\mathbf{x}_k)
\end{equation}
The last inequality follows from the Jensen inequality. Thus, we have
\begin{equation}
    \mathbb{E}[H(\bar{\mathbf{x}}_K;\omega_k)]-\textnormal{min}\;\mathbb{E}[H]\leq\frac{1}{K}\sum_{k=1}^KV_k(\mathbf{x}_k)-V_k(\mathbf{x}_*)\leq\sum_{k=1}^K\frac{1}{K}\nabla V_k(\mathbf{x}_k)^T(\mathbf{x}_k-\mathbf{x}_*),
\end{equation}
which follows from the convexity of $V_k(\mathbf{x}_k)$. $\textnormal{min}\;\mathbb{E}[H]$ is the lower bound, which means that $H(\mathbf{x};\omega)$ is bounded below. This assumption has been generically made for stochastic optimization. According to Theorem~\ref{theorem_2}, the convergence rate of DSGD is attained as follows
\begin{equation}
    \mathbb{E}[H(\bar{\mathbf{x}}_K;\omega)]-\textnormal{min}\;\mathbb{E}[H]\leq\frac{\sqrt{N}DC}{\sqrt{K}}=\sqrt{N}DG\Bigg(1+\frac{2}{1-\lambda}\Bigg)\frac{1}{\sqrt{K}}.
\end{equation}
The above convergence rate matches the existing literature~\citep{jiang2017collaborative,lian2017can} for distributed stochastic gradient method. Thus, we have shown the translation from distributed online optimization to distributed stochastic optimization. In the centralized setting, this has been well studied and please refer to~\citep{belmega2018online} for more detail.
\section{Comparisons Between Different Methods}\label{table_section}
We summarize our findings and compare the proposed algorithms with a few existing and recently developed algorithms in Table~\ref{table:findings}. We also provide detailed discussion for them.
Table~\ref{table:findings} shows the regret bounds for different online optimization methods ranging from centralized (i.e., OGD~\citep{hazan2016introduction}, OMD~\citep{hazan2016introduction}, RFTL~\citep{hazan2016introduction}, FTPL~\citep{hazan2016introduction,suggala2019online}, ONGD~\citep{gao2018online}) to distributed settings. For the distributed online methods we omit the number of agents in $\mathcal{O}(\cdot)$, instead focusing more on the time complexity, as the number of time steps is generically much larger than the number of agents, which, however, has been shown to have an impact on the regret in~\citep{zhao2019decentralized} and our work. We can observe that corresponding to various methods, the classes of loss functions, the definitions of step sizes, and the assumptions imposed on the gradients are different. Additionally, the static or dynamic regret is used as the metrics for evaluating the regret bound. Note also that for FTPL and FTPL2, they do not have explicit step sizes in the update rules, but having a parameter that plays a similar role as the step size. We now discuss the comparisons in detail.

The \textbf{OGD} is a widely adopted simple method for the centralized online optimization and has been provably shown to achieve the best regret bound when the loss function is either convex or strongly convex with the only bounded gradient assumption. The logarithmic regret for the strongly convex losses still remains the best regret bound in literature, although the \textbf{DOGT}~\citep{zhang2019distributed}  could enjoy the constant regret bound, when the gradient is Lipschitz continuous. Additionally, the constant step size for the DOGT has to satisfy a more restrictive condition according to~\citep{zhang2019distributed}. The OGD is different from the DOGT in terms of application settings and so far there is  no reported result to show the direct comparison between the OGD and the online gradient tracking (OGT) method. Moreover, if the loss becomes convex, the regret bound is unknown for the DOGT. Our proposed \textbf{OLGD} extends the OGD to a distributed setting and inherently attains the same order of regret bound with the composite regret.

The \textbf{OMD}, \textbf{RFTL} and \textbf{FTPL} share similarities for the convex loss while the FTPL has been recently extended to a more general non-convex scenario, which is still a quite challenging problem for the community. Analogously, the OLGD is able to achieve the same growth order of regret as both the OMD and RFTL obtain. These two approaches essentially leverage the regularization to stabilize the decision making process for the online learning and most decentralized variants such as a few to be introduced next follow the similar regularization techniques. Nonetheless, for the regret, these methods still depend on either the $\textnormal{S-regret}_K$ or $\textnormal{D-regret}_K$, and have not directly taken into account the agent variations in the regret minimization. Also, in the composite regret, when $c=0$, we can observe that it degenerates to the regular $\textnormal{S-regret}_K$ or $\textnormal{D-regret}_K$.

The \textbf{ONGD} was proposed for the pseudo-convex loss that relaxes the convex loss, which, however, is not as difficult as the general non-convex loss. The typical first-order necessary condition that $\nabla\mathcal{F}(\mathbf{x_*})=0$ enables $\mathbf{x_*}$ to be a local minimum instead of a maximum or a saddle point. In our paper, the corresponding \textbf{CONGD} is developed with achieving the same order of regret bound, which includes the temporal change $P_K$. Besides, due to the composite regret and the network topology, the spatial change is also incorporated such that we can quantitatively attain the impact of the spatiotemporal change on the regret bound. We also notice that the order of $P_K=K^\epsilon (\epsilon\in(0,1))$ could significantly affect the bound, although it is always sublinear.

The \textbf{DPD-MD}~\citep{sharma2020distributed}, \textbf{DOPD-PS}~\citep{li2018distributed} and \textbf{DOPD-DMD}~\citep{yi2020distributed} are in the family of primal-dual methods, which needs to cope with more complex constraints instead of simple box constraints. This also results in more sophisticated updates rules and probably the worse regret bounds. For the DOPD-PS, we can see that the regret bound is up to $\mathcal{O}(K^{\frac{9}{10}})$, while for the DPD-MD, when properly selecting $a$ and $b$, the minimum regret bound is still larger than $\mathcal{O}(\sqrt{K})$. While the DOPD-DMD has the closest regret bound as the OLGD has when $\kappa=\frac{1}{2}$, which is also dependent on the path variation $P_K$ due to the non-stationary regret. It has more complicated update rule to rely on the dual variable, which could increase the computational complexity. We realize that a more recent work, \textbf{ODCMD}~\citep{yuan2020distributed}, investigated the loss that is a composite function such that the regret adopted in their paper has a similar form of the composite regret. However, after carefully checking, we have found that their form was not motivated by the variations among agents, but only a regularization that is not necessarily a consensus term. Additionally, they only focused on a static form such that the techniques studied in that paper may not be applicable to dynamic regrets required. To study the property of non-convex losses in the distributed online setting, \textbf{ODM}~\citet{lu2021online} was developed by leveraging the first-order optimality condition-based regret and the authors have shown the proposed algorithm achieved the standard regret $\mathcal{O}(\sqrt{K})$. However, the presented techniques have no guarantee on the applications to the general non-convex losses. \textbf{DINOCO} is the only method in this paper that can handle the regret bound in the distributed online setting with general non-convex losses. We leverage the non-convex FTPL and develop DINOCO that is suitable for solving the online non-convex learning problems in a distributed setting. It provides useful insights on how the network topology and the number of agents affect the regret bound. More detail has been provided in Section~\ref{ftpl_lya}.
%\section{OLGD for Strongly Convex and Convex Losses}
\end{document}